\title{Escaping Saddle Points with Bias-Variance Reduced Local Perturbed SGD for Communication Efficient Nonconvex Distributed Learning}
\author{%
  Tomoya Murata\thanks{murata@msi.co.jp} \\
  NTT DATA Mathematical Systems Inc., Tokyo, Japan \\
  Graduate School of Information Science and Technology, The University of Tokyo \\
  \And
  Taiji Suzuki\thanks{taiji@mist.i.u-tokyo.ac.jp} \\
  Graduate School of Information Science and Technology, The University of Tokyo \\ 
  Center for Advanced Intelligence Project, RIKEN, Tokyo, Japan \\
}
\theoremstyle{theorem}
\newtheorem{theorem}{Theorem}[section]
\newtheorem{lemma}[theorem]{Lemma}
\newtheorem{proposition}[theorem]{Proposition}
\newtheorem{corollary}[theorem]{Corollary}
\theoremstyle{definition}
\newtheorem{definition}{Definition}[section]
\newtheorem{assumption}{Assumption}
\theoremstyle{remark}
\newtheorem*{remark}{Remark}
\begin{document}
\maketitle

\begin{abstract}
In recent centralized nonconvex distributed learning and federated learning, local methods are one of the promising approaches to reduce communication time. However, existing work has mainly focused on studying first-order optimality guarantees. 
On the other side, second-order optimality guaranteed algorithms, i.e., algorithms escaping saddle points, have been extensively studied in the non-distributed optimization literature. 
In this paper, we study a new local algorithm called Bias-Variance Reduced Local Perturbed SGD (BVR-L-PSGD), that combines the existing bias-variance reduced gradient estimator with parameter perturbation to find second-order optimal points in centralized nonconvex distributed optimization. 
BVR-L-PSGD enjoys second-order optimality with nearly the same communication complexity as the best known one of BVR-L-SGD to find first-order optimality. Particularly, the communication complexity is better than non-local methods when the local datasets heterogeneity is smaller than the smoothness of the local loss. In an extreme case, the communication complexity approaches to $\widetilde \Theta(1)$ when the local datasets heterogeneity goes to zero. Numerical results validate our theoretical findings. 
\end{abstract}

\section{Introduction}\label{sec: intro}
Distributed learning is an attractive approach to reduce the total execution time by utilizing the parallel computations. However, the communication time in distributed learning can be a main bottleneck in the entire process due to huge parameter size typical in deep learning or low bandwidth communication environments. \par

To reduce communication time, one of the promising approaches is the usage of local methods such as local SGD (also called as Parallel Restart SGD or FedAvg). In local SGD, each worker independently executes multiple updates of the local model based on his own local dataset, and the server periodically communicates and aggregates the local models. Many paper have studied local SGD \cite{stich2018local, yu2019parallel, haddadpour2019convergence, haddadpour2019local, koloskova2020unified, khaled2020tighter, woodworth2020minibatch, woodworth2020local}. Particularly, for convex objectives, it has been shown in \cite{woodworth2020local}, for the first time, the communication complexity (that is the necessary number of communication rounds to achieve given desired optimization error) of local SGD can be smaller than the one of minibatch SGD when the \emph{heterogeneity} of the local datasets is extremely small. In traditional distributed learning, the local datasets are typically random subsets of the global dataset and in this case the heterogeneity of the local datasets may become quite small. However, in the recent federated learning regimes \cite{konevcny2015federated, shokri2015privacy, mcmahan2017communication}, it is often the case that the heterogeneity of the local datasets is not too small. Also, the analysis in \cite{woodworth2020local} has only focused on convex cases. Hence, the superiority of local SGD to minibatch SGD is still quite limited. \par

Recently, more communication efficient local methods than local SGD have been proposed for possibly nonconvex objectives to guarantee first-order optimality  \cite{karimireddy2020scaffold, reddi2016aide, murata2021bias}. SCAFFOLD \cite{karimireddy2020scaffold} is a new local algorithm based on the idea of reducing their called client-drift by using a similar formulation to the variance reduction technique \cite{johnson2013accelerating}. They have shown that the communication complexity of SCAFFOLD can be smaller than the one of minibatch SGD for not too heterogenous local datasets under the \emph{quadraticity} of the (possibly nonconvex) local objectives, which is quite limited. For general nonconvex objectives, the communication complexity of SCAFFOLD is same as minibatch SGD. More recently, Murate and Suzuki \cite{murata2021bias} have proposed Bias-Variance Reduced Local SGD (BVR-L-SGD). BVR-L-SGD utilizes their proposed {\emph{bias-variance reduced estimator}} that simultaneously reduces the bias caused by local gradient steps and the variance caused by stochastization of the gradients in local optimization based on the formulation of SARAH like variance reduction \cite{nguyen2017sarah}. They have shown that the communication complexity of BVR-L-SGD is smaller than minibatch SGD for not too heterogeneous local datasets {\emph{for general nonconvex objectives}}. Specifically, BVR-L-SGD is superior to minibatch SGD when the Hessian heterogeneity of the local datasets is small relative to the smoothness of the local loss in order sense.  \par 

On the other side, there are vast work that has studied second-order optimality guarantees, which is much more challenging to ensure but desirable than first-order one, in non-distributed nonconvex optimization. Several approaches are known and one of the simplest approaches is parameter perturbation \cite{ge2015escaping, jin2017escape, jin2021nonconvex}. However, almost all existing analysis of local methods have only focused on achieving first-order optimality. As an exception, Vlaski et al. \cite{vlaski2020second} have analysed second-order guarantees of local SGD with parameter perturbation, but the obtained communication complexity is much worse than the one of minibatch SGD and no benefit of localization has been shown. \par

{\bf{Open question.}} For local methods, it is not well-studied how to find second-order optimal points with low communication cost and thus we have the following research questions: \\
{\emph{
Is there a first-order distributed optimization algorithm with second-order optimality guarantees which satisfies that (i) the communication complexity is smaller than non-local methods for not too heterogeneous local datasets; and (ii) the communication complexity approaches to $\Theta(1)$ when the heterogeneity of local datasets goes to zero?}}

Note that the both properties are desirable in distributed optimization. We expect that local methods are superior to non-local methods for not highly heterogeneous local datasets. Furthermore,  when the local datasets are nearly identical, it is expected that a few communications are sufficient to optimize the global objective. {\emph{Only in the case of first-order optimality}}, Murata and Suzuki \cite{murata2021bias} have shown that their proposed BVR-L-SGD satisfies (i) and (ii), and the question has been positively answered. However, the question is still open in the case of second-order optimality\footnotemark. 
\footnotetext{Since there are communication efficient distributed optimization algorithms that find first-order stationary points like BVR-L-SGD, we can apply generic algorithms which guarantee second-order optimality to them \cite{xu2017first, allen2017neon2}. However, this naive approach does not possess the aforementioned property (ii) because the generic framework requires at least $\Omega(1/\varepsilon^{3/2})$ communication rounds to guarantee second-order optimality due to the multiple negative curvature exploitation steps in their framework for any communication efficient algorithms with first-order optimality guarantees. Also, this approach requires explicit negative curvature exploitation, that is complicated and makes the whole algorithm less practical.}

\subsection*{Main Contributions}
We propose a new local algorithm called Bias-Variance Reduced Local \emph{Perturbed} SGD (BVR-L-PSGD) for nonconvex distributed learning to efficiently find second-order optimal points, which positively answered the above research questions.
\par
The algorithm is based on a simple combination of the existing bias and variance reduced gradient estimator and parameter perturbation. In our algorithm, parameter perturbation is carried out \emph{at every local update} and it is not necessary to determine whether or not to add noise by checking the norm of the global gradients, which is often required in several previous non-distributed algorithms 
\cite{ge2019stabilized, li2019ssrgd}. 
\par
We analyse BVR-L-PSGD for general nonconvex smooth objectives. 
The most challenging part of our analysis is to ensure that our algorithm efficiently \emph{escapes global saddle points} even \emph{in local optimization}. To realize this, it is necessary to analyse the behavior of the bias-variance reduced estimator around the saddle points by carefully evaluating the degree of some kind of asymptotic consistency of the estimator around the saddle points. This point has never been pursued in previous work and has a unique difficulty of our analysis.  

The comparison of the communication complexities of our method with the most relevant existing results is given in Table \ref{tab: theoretical_comparison}. Our proposed method enjoys second-order optimality with nearly the same communication complexity as the one of BVR-L-SGD, which achieves the best known communication complexity to achieve first-order optimality. This means that our method finds second-order optimal points without hurting the communication efficiency of the state-of-the-art first-order optimality guaranteed method. Particularly, the communication complexity is better than minibatch SGD when Hessian heterogeneity $\zeta$ is small relative to smoothness $L$. Also, the communication complexity approaches to $\widetilde O(1)$ when heterogeneity $\zeta$ goes to zero and the local computation budget $\mathcal B$ (see Section \ref{sec: problem_setting}) goes to infinity. 
Hence, our method enjoys the aforementioned two desired properties. \par 

\begin{table*}[t]
    \centering
    \scalebox{0.95}{
    \begin{tabular}{c c c c}\hline
        Algorithm & Communication Rounds & Assumptions & Guarantee\\ \hline \hline
        \begin{tabular}{c}
        Minibatch SGD
        \end{tabular} & $\frac{1}{\varepsilon^2} + \frac{1}{\mathcal B P\varepsilon^4}$ & 2-3, BSGV & 1st-order\\ \hline 
        \begin{tabular}{c}
        Noisy Minibatch SGD \cite{jin2021nonconvex}
        \end{tabular}& $\frac{1}{\varepsilon^2} + \frac{1}{\mathcal B P\varepsilon^4}$ & 2-4 & {\bf{2nd-order}}\\ \hline
        \begin{tabular}{c}
        Minibatch SARAH \cite{nguyen2019finite}
        \end{tabular} & $\frac{1}{\varepsilon^2} + \frac{\sqrt{n}}{\mathcal B P\varepsilon^2} + \frac{n}{\mathcal B P}$ & 2-3 & 1st-order\\ \hline
        SSRGD \cite{li2019ssrgd} & $\frac{1}{\varepsilon^2} + \frac{\sqrt{n}}{\varepsilon^\frac{3}{2}}$ & 2-5, $\mathcal B \geq \frac{\sqrt{n}}{P}$& \bf{2nd-order} \\ \hline
        \begin{tabular}{c}Local SGD \cite{yu2019parallel}\end{tabular}& $\frac{1}{\mathcal B\varepsilon^2} + \frac{1}{\mathcal B P\varepsilon^4}+ \frac{1}{\varepsilon^3}$ & 2-3, 5 & 1st-order\\ \hline
        \begin{tabular}{c}SCAFFOLD \cite{karimireddy2020scaffold}\end{tabular}& $\frac{1}{\varepsilon^2} + \frac{1}{\mathcal B P\varepsilon^4}$ & 2-3, BSGV & 1st-order\\ \hline
        \begin{tabular}{c}SCAFFOLD \cite{karimireddy2020scaffold}\end{tabular}& $\frac{1}{\mathcal B \varepsilon^2} + \frac{1}{\mathcal B P\varepsilon^4} + \frac{\zeta}{\varepsilon^2}$ &  \begin{tabular}{c}1-3, BSGV, \\quadraticity \end{tabular} & 1st-order \\ \hline
        \begin{tabular}{c}BVR-L-SGD \cite{murata2021bias}\end{tabular} & $\frac{1}{\sqrt{\mathcal B}\varepsilon^2} + \frac{\sqrt{n}}{\mathcal B P\varepsilon^2} +  \frac{\zeta}{\varepsilon^2} $ & \begin{tabular}{c}1-4 \end{tabular} & 1st-order \\ \hline
        \begin{tabular}{c} {\color{red}{\bf{BVR-L-PSGD}}} {\color{red}{\bf{(this paper)}}}  \end{tabular} & $\frac{1}{\sqrt{\mathcal B}\varepsilon^2} + \frac{\sqrt{n}}{\mathcal B P\varepsilon^2} +  \frac{\zeta}{\varepsilon^2}$ & 1-5  & {\bf{2nd-order}} \\ \hline
\end{tabular}
    }
    \caption{Comparison of the order of the necessary number of communication rounds to achieve desired optimization error $\varepsilon$ in terms of given optimization criteria (described in the column of "Guarantee'') in nonconvex optimization. "Assumptions'' indicates the necessary assumptions to derive the results (the numbers correspond to Assumptions \ref{assump: local_loss_grad_lipschitzness}, \ref{assump: optimal_sol}, \ref{assump: local_loss_hessian_lipschitzness}, \ref{assump: bounded_loss_gradient} in Section \ref{sec: problem_setting} respectively). BSGV means the bounded stochastic gradient variance assumption, that is $\mathbb{E}_{z \sim D_p}\|\nabla \ell(x, z) - \nabla f_p(x)\|^2 \leq \sigma^2$. $\mathcal B$ is the local computation budget 
    , which is defined in Section \ref{sec: problem_setting}. $P$ is the number of workers. $n$ is the total number of samples. The gradient Lipschitzness $L$, Hessian Lipschitzness $\rho$, the gradient boundedness $G$ are regarded as $\Theta(1)$ for ease of presentation. In this notation, \emph{Hessian heterogeneity $\zeta$ always satisfies $\zeta \leq \Theta(L) = \Theta(1)$}.}
    \label{tab: theoretical_comparison}
\end{table*}

\subsection*{Related Work}
Here, we briefly review the related studies to our paper. \par
{\bf{Local methods.}} Several recent papers have studied local algorithms combined with variance reduction technique \cite{sharma2019parallel, das2020faster, khanduriachieving,  karimireddy2020mime}. Sharma et al. \cite{sharma2019parallel} have proposed a local variant of SPIDER \cite{fang2018spider} and shown that the proposed algorithm achieves the optimal total computational complexity. However, the communication complexity essentially matches the ones of non-local SARAH and no advantage of localization has been shown. Khanduri et al. \cite{khanduriachieving} have proposed STEM and its variants based on their called two-sided momentum, but again the communication complexity does not improve non-local methods. Also, Das et al. \cite{das2020faster} have considered a SPIDER like local algorithm called FedGLOMO but the derived communication complexity is even worse than minibatch SARAH. Karimireddy et al. \cite{karimireddy2020mime} have proposed Mime, which is a general framework to mitigate client-drift. Particularly, under $\delta$-Bounded Hessian Dissimilarity (BHD)\footnotemark, their MimeMVR achieves communication complexity of $1/(P\varepsilon^2) + \delta/(\sqrt{P}\varepsilon^3) + \delta/\varepsilon^2$ when $\mathcal B \to \infty$, that is better than the one of minibatch SGD $1/\varepsilon^2$ when $\delta \leq \sqrt{P}\varepsilon$. However, the asymptotic rate is still worse than the one of BVR-L-SGD $\zeta/\varepsilon^2$ because $\zeta \leq \delta$ always holds. \par
{\bf{Second-order guarantee.}} Neon \cite{xu2017first} and Neon2 \cite{allen2017neon2} are generic first-order methods with second-order guarantees, that  repeatedly run a first-order guaranteed algorithm and negative curvature descent. Another approach is a parameter perturbation for SGD. For the first time, Ge et al. \cite{ge2015escaping} have shown that SGD with a simple parameter perturbation escapes saddle points efficiently. Later, the analysis has been refined by \cite{jin2017escape, jin2021nonconvex}. Recently, applying variance reduction technique to second-order guaranteed methods has been also studied \cite{ge2019stabilized, li2019ssrgd} and particularly Li et al. \cite{li2019ssrgd} have proposed SSRGD that combines SARAH \cite{nguyen2017sarah} with parameter perturbation and shown that SSRGD nearly achieves the optimal computational complexity with second-order optimality guarantees. 
\footnotetext{$\delta$-BHD condition in \cite{karimireddy2020mime} requires $\|\nabla^2 \ell(x, z) - \nabla^2 f(x)\| \leq \delta$ for every $x \in \mathbb{R}^d$, $z \sim D_p$ and $p \in [P]$. Note that $\delta$-BHD condition requires both intra Hessian dissimilarity boundedness $\|\nabla^2 f_i(x) - \nabla^2 f(x)\|$, which is bounded by  $\zeta$ under Assumption \ref{assump: heterogeneous}, and additionally inner Hessian dissimilarity $\|\nabla^2 \ell(x, z) - \nabla^2 f_i(x)\|$. Hence, $\delta$-BHD is much stronger than Assumption \ref{assump: heterogeneous} and it is possible that $\delta \gg \zeta$. }

\section{Problem Definition and Assumptions}\label{sec: problem_setting}
In this section, we first introduce several notations and definitions used in this paper. Then, the problem settings are described and theoretical assumptions used in our analysis are given. \par

{\bf{Notation.}}
$\| \cdot \|$ denotes the Euclidean $L_2$ norm $\| \cdot \|_2$: $\|x\| = \sqrt{\sum_{i}x_i^2}$ for vector $x$. 
For a matrix $X$, $\|X\|$ denotes the induced norm by the Euclidean $L_2$ norm. For a natural number $m$, $[m]$ means the set $\{1, 2, \ldots, m\}$.
For a set $A$, $\# A$ means the number of elements, which is possibly $\infty$. For any number $a, b$, $a \vee b$ and $a \wedge b$ denote $\mathrm{max}\{a, b\}$ and $\mathrm{min}\{a, b\}$ respectively. We denote the uniform distribution over $A$ by $\mathrm{Unif}(A)$. Given $K, T, S \in \mathbb{N}$, let $I(k, t, s)$ be integer $k + Kt + KTs$ for $k \in [K]\cup\{0\}$, $t \in [T-1]\cup\{0\}$ and $s \in [S-1]\cup\{0\}$. Note that $I(K, t, s) = I(0, t+1, s)$ and $I(k, T, s) = I(k, 0, s+1)$ for $k \in [K]\cup\{0\}$, $t \in [T-1]\cup\{0\}$ and $s \in [S-1]\cup\{0\}$. $\bm{B}_r^d$ denotes the set $\{x \in \mathbb{R}^d| \|x\| \leq r\}$, which is the Euclidean ball  in $\mathbb{R}^d$ with radius $r$. 

\begin{definition}[Gradient Lipschitzness]
A differentiable function $f: \mathbb{R}^d \to \mathbb{R}$ is $L$-gradient Lipschitz if $\left\|\nabla f(x) - \nabla f(y)\right\| \leq L\|x - y\|, \forall x, y \in \mathbb{R}^d$.
\end{definition}
\begin{definition}[Hessian Lipschitzness]
A twice differentiable function $f: \mathbb{R}^d \to \mathbb{R}$ is $\rho$-Hessian Lipschitz if $\left\|\nabla^2 f(x) - \nabla^2 f(y)\right\| \leq \rho\|x - y\|, \forall x, y \in \mathbb{R}^d$.
\end{definition}

\begin{definition}[Second-order optimality]
For a $\rho$-Hessian Lipschitz function $f$, $x \in \mathbb{R}^d$ is an $\varepsilon$-second-order optimal point of $f$ if $\|\nabla f(x)\|\leq \varepsilon \text{ and } \nabla^2 f(x) \succeq - \sqrt{\rho \varepsilon}  I$.
\end{definition}

\subsection{Problem Settings}
{\bf{Objective function.}} We want to minimize nonconvex smooth objective 
$f(x) := \frac{1}{P}\sum_{p=1}^P f_p(x)$, where $ f_p(x) := \mathbb{E}_{z \sim  D_p}[\ell(x, z)]$
for $x \in \mathbb{R}^d$, where $D_p$ is the data distribution associated with worker $p$. In this paper, we focus on offline settings (i.e., $\# \mathrm{supp}(D_p) < \infty$ for every $p \in [P]$) for simple presentation. It is easy to extend our results to online settings. Also, just for simplicity, it is assumed that each local dataset has an equal number of samples, i.e., $\# \mathrm{supp}(D_p) = n/P$ for every $p, p' \in [P]$, where $n$ is the total number of samples. \\
{\bf{Optimization criteria.}} Since objective function $f$ is nonconvex, it is generally difficult to find a global minima of $f$. Previous work in distributed learning has mainly focused on finding first-order stationary points of $f$. In this study, we aim to find $\varepsilon$-second-order stationary points of $f$ in distributed learning settings. \\
{\bf{Data access constraints and communication settings.}} It is assumed that each worker $p$ can only access the own data distribution $D_p$ without communication. Aggregation (e.g., summation) of all the worker's $d$-dimensional parameters or broadcast of a $d$-dimensional parameter from one worker to the other workers can be realized by single communication. \footnotemark \\
{\bf{Evaluation criteria: communication complexity.}} In this paper, we compare \emph{communication complexities} of optimization algorithms to satisfy the aforementioned optimization criteria. In typical situations, single communication is more time-consuming than single stochastic gradient computation. Let $\mathcal C$ be the single communication cost and $\mathcal G$ be the single stochastic gradient computation cost. Using these notations, $\mathcal C \geq \mathcal G$ is assumed. We expect that increasing the number of available stochastic gradients in a single communication round leads to faster convergence. Hence, it is natural to increase the number of stochastic gradient computations in a single communication round unless the total stochastic gradient computation time exceeds $\mathcal C$ to reduce the total running time. This motivates the concept of {\bf{\emph{local computation budget $\mathcal B$}}} ($\leq \mathcal C / \mathcal G$): given a communication and computational environment, it is assumed that \emph{each worker can only computes at most $\mathcal B$ single stochastic gradients per communication round on average}. Then, we compare the communication complexity, that is \emph{the total number of communication rounds of a distributed optimization algorithm to achieve the desired optimization accuracy}. From the definition, we can see that the communication complexity on a fixed local computation budget $\mathcal B := \mathcal C/\mathcal G$ captures the best achievable total running time of an algorithm.
\footnotetext{In this work, it is assumed that all the workers can participate in a single communication. It is not so hard to extend our algorithm and analysis to  worker sampling settings, which is more realistic in cross-device federated learning. }
\subsection{Theoretical Assumptions}In this paper, we assume the following five assumptions. 
The first one has already been adopted in several previous work \cite{karimireddy2020scaffold, murata2021bias}. The other ones are standard in the nonconvex optimization literature to guarantees second-order optimality. 
\begin{assumption}[Hessian heterogeneity \cite{karimireddy2020scaffold, murata2021bias}]
\label{assump: heterogeneous}
$\{f_p\}_{p=1}^P$ is second-order  $\zeta$-heterogeneous, i.e., for any $p, p' \in [P]$, $\left\|\nabla^2 f_p(x) - \nabla^2 f_{p'}(x)\right\| \leq \zeta, \forall x \in \mathbb{R}^d$.
\end{assumption}
Assumption \ref{assump: heterogeneous} characterizes the heterogeneity of local objectives $\{f_p\}_{p=1}^P$ in terms of Hessians and has a important role in our analysis. Intuitively, we expect that relatively small heterogeneity parameter $\zeta$ to the smoothness parameter $L$ (defined in Assumption \ref{assump: local_loss_grad_lipschitzness}) reduces the necessary number of communication rounds to optimize the global objective. Especially when the local objectives are identical, i.e., $D_p = D_{p'}$ for every $p, p' \in [P]$, $\zeta$ becomes zero. 
When each $D_p$ is the empirical distribution of $n/P$ IID samples from common data distribution $D$, we have $\|\nabla^2 f_p(x) - \nabla^2 f_{p'}(x)\| \leq \widetilde \Theta(\sqrt{P/n}L)$ with high probability by matrix Hoeffding's inequality under Assumption \ref{assump: local_loss_grad_lipschitzness} for fixed $x$.
Hence, in traditional distributed learning regimes, Assumption \ref{assump: heterogeneous} naturally holds. An important remark is that {\emph{Assumption \ref{assump: local_loss_grad_lipschitzness} implies $\zeta \leq 2 L$, i.e., the heterogeneity is bounded by the smoothness}}. Even in federated learning regimes, we expect $\zeta \ll 2L$ for some problems practically. 

\begin{assumption}[Gradient Lipschitzness]
\label{assump: local_loss_grad_lipschitzness}
$\forall p \in [P], z \in \mathrm{supp}(D_p)$, $\ell(\cdot, z)$ is $L$-gradient Lipschitz.
\end{assumption}

\begin{assumption}[Existence of global optimum]
\label{assump: optimal_sol}
$f$ has a global minimizer $x_* \in \mathbb{R}^d$.
\end{assumption}

\begin{assumption}[Hessian Lipschitzness]
\label{assump: local_loss_hessian_lipschitzness}
$\forall p \in [P], z \in \mathrm{supp}(D_p)$, $\ell(\cdot, z)$ is $\rho$-Hessian Lipschitz.
\end{assumption}


\begin{assumption}[Bounded stochastic gradient]\label{assump: bounded_loss_gradient}
$\forall p \in [P], z \in \mathrm{supp}(D_p)$, $\nabla \ell(\cdot, z)$ is $G$-bounded, i.e., $\|\nabla \ell(x, z)\| \leq G, \forall x \in \mathbb{R}^d$. 
\end{assumption}
In our analysis, $G$ has no significant impact because $G$ only depends on our theoretical communication complexity in logarithmic order.

\section{Main Ideas and Proposed Algorithm}

\begin{algorithm}[t]
\caption{BVR-L-PSGD($\widetilde x_0$, $\eta$, $b$, $K$, $T$, $S$, $r$)}
\label{alg: bvr_l_psgd}
\begin{algorithmic}[1]
\STATE Add noise $x_0 = \widetilde x_0 + \eta \xi_{-1}$, where $\xi_{-1} \sim \mathrm{Unif}(\bm{B}_r^d)$.
\FOR {$s=0$ to $S-1$}
    \FOR {$p=1$ to $P$ in parallel}
        \STATE $v_{I(0, 0, s)}^{(p)} = \nabla f_p(x_{I(0, 0, s)})$.
    \ENDFOR
    \STATE Communicate $\{ v_{I(0, 0, s)}^{(p)}\}_{p=1}^P$. Set $v_{I(0, 0, s)} = \frac{1}{P}\sum_{p=1}^P v_{I(0, 0, s)}^{(p)}$.
    \FOR {$t=0$ to $T-1$}
        \FOR {$p=1$ to $P$ in parallel}
        \STATE $g_{I(0, t, s)}^{(p)} = \frac{1}{Kb}\sum_{l=1}^{Kb} \nabla \ell(x_{I(0, t, s)}, z_{l, I(0, t, s)})$, 
        \STATE $g_{I(0, t, s)}^{(p), \mathrm{ref}} =  \frac{1}{Kb}\sum_{l=1}^{Kb} \nabla \ell(x_{I(0, t-1, s)}, z_{l, I(0, t, s}))$ ($z_{l, I(0, t, s)} \overset{i.i.d.}{\sim} D_p$).
        \STATE $v_{I(0, t, s)}^{(p)} = \mathds{1}_{t \geq 1} (g_{I(0, t, s)}^{(p)} - g_{I(0, t, s)}^{(p), \mathrm{ref}} + v_{I(0, t-1, s)}^{(p)}) + \mathds{1}_{t = 0}v_{I(0, 0, s)}^{(p)}$.
        \ENDFOR
        \STATE Communicate $\{v_{I(0, t, s)}^{(p)}\}_{p=1}^P$. Set $v_{I(0, t, s)} = \frac{1}{P}\sum_{p=1}^P v_{I(0, t, s)}^{(p)}$.
        \STATE Randomly select $p_{t, s} \sim \mathrm{Unif}[P]$.  \# Only worker $p_{t, s}$ runs local optimization.
        \FOR {$k=0$ to $K-1$}
        \STATE $b_k = \mathds{1}_{k \equiv 0 \ (\mathrm{mod} \lceil \sqrt{K} \rceil)}\lceil \sqrt{K} \rceil b + \mathds{1}_{k \not\equiv 0 \ (\mathrm{mod} \lceil \sqrt{K} \rceil)}b$.
        \STATE $g_{I(k, t, s)} = \frac{1}{b_k}\sum_{l=1}^{b_k} \nabla \ell(x_{I(k, t, s)}, z_{l, I(k, t, s)})$, 
        \STATE $g_{I(k, t, s)}^{\mathrm{ref}} =  \frac{1}{b_k}\sum_{l=1}^{b_k} \nabla \ell(x_{I(k-1, t, s)}), z_{l, I(k, t, s)})$ ($z_{l, I(k, t, s)} \overset{i.i.d.}{\sim} D_{p_{t, s}}$).
        \STATE $v_{I(k, t, s)} = \mathds{1}_{k \geq 1}(g_{I(k, t, s)} - g_{I(k, t, s)}^{\mathrm{ref}}  + v_{I(k-1, t, s)}) + \mathds{1}_{k = 0}v_{I(0, t, s)}$.
        \STATE Update $\widetilde x_{I(k+1, t, s)} = x_{I(k, t, s)} - \eta v_{I(k, t, s)}$.
        \STATE Add noise $x_{I(k+1, t, s)} = \widetilde x_{I(k+1, t, s)}+ \eta \xi_{I(k, t, s)}$, where $\xi_{I(k, t, s)} \sim \mathrm{Unif}(\bm{B}_r^d)$.
\ENDFOR
\STATE Communicate $x_{I(0, t+1, s)}$.
\ENDFOR
\ENDFOR
\end{algorithmic}
\end{algorithm}

Our proposed algorithm is based on a natural combination of (i) \emph{Bias-Variance Reduced (BVR) estimator}; and (ii) \emph{parameter perturbation at each local update}. The first idea has been proposed by \cite{murata2021bias} to find first-order stationary points with small communication complexity. The second one is a well-known approach to find second-order stationary points in non-distributed nonconvex optimization \cite{ge2015escaping, jin2017escape, jin2021nonconvex}. In this section, we illustrate these two ideas and provide its concrete procedures.
\subsection{Review of BVR Estimator \cite{murata2021bias}}
The bias-variance reduced estimator aims to efficiently find first-order stationary points by simultaneously reducing the bias caused by local gradient descent steps and the variance caused by stochastization of the used gradients. 
\par
First we consider why the standard local SGD is not sufficient to achieve fast convergence and sometimes slower than minibatch SGD. Recall that in local SGD each worker takes the update rules of $x_{k+1}^{(p)}
= x_k^{(p)} - \eta g_k^{(p)}$ for $k \in [\mathcal B/b]$ in each communication round, where $g_k^{(p)}$ is a stochastic gradient with minibatch size $b$ at $x_k^{(p)}$ on local dataset $D_p$ and $\mathcal B$ is given local computation budget. In typical convergence analysis, we need to bound the expected deviation of $g_k^{(p)}$ from ideal global gradient $\nabla f(x_k)$, that is $\mathbb{E}\|g_k^{(p)} - \nabla f(x_k^{(p)})\|^2 = \|\nabla f_p(x_k^{(p)}) - \nabla f(x_k^{(p)})\|^2 + \mathbb{E}\|g_k^{(p)} - \nabla f_p(x_k^{(p)})\|^2$. The former term is called \emph{bias} and the latter one is called \emph{variance}. A typical assumption to bound the first term is \emph{bounded gradient heterogeneity assumption}, that requires $\|\nabla f_p(x) - \nabla f(x)\| \leq \zeta_1$ for every $x \in \mathbb{R}^d$ and $p \in [P]$. Under this assumption, the first term is only bounded by $\zeta_1$, that is a constant. The second term is typically bounded by $\sigma^2/b$ for $g_k^{(p)}$ with minibatch size $b$, when the variance of a single stochastic gradient is bounded by $\sigma^2$. 
These facts show that the bias is still a constant and does not vanish even if minibatch size $b$ is enhanced and the variance vanishes. This is why local SGD can be worse than minibatch SGD when $\zeta_1$ is not too small. Also, we can see that the variance is still a constant for fixed minibatch size $b$ and this is a common reason why minibatch SGD and local SGD only show slow convergences. These observations give critical motivations of the simultaneous reduction of the bias and variance. 

The bias-variance reduced estimator $v_k^{(p)}$ is defined as  $v_k^{(p)} := (1/b)\sum_{l=1}^b (\nabla \ell(x_k^{(p)}, z_l) - \nabla \ell(x_0, z_l)) + \nabla f(x_0)$ (SVRG version). It is known that the bias caused by localization can be bounded by $\zeta\|x_k^{(p)} - x_0\|$ and the variance caused by stochastization can be bounded by $(L^2/b)\|x_k^{(p)} - x_0\|^2$, where $\zeta$ is the Hessian heterogeneity of $\{f_p\}$ and $L$ is the smoothness of $\ell$. This implies that both the bias and variance of $v_k^{(p)}$ converges to zero as $x_k^{(p)}$ and $x_0$ go to $x_*$. In other words, \emph{bias-variance reduced estimator $v_k^{(p)}$ is asymptotically consistent to the global gradient $\nabla f(x_k^{(p)})$} by using periodically computed global full gradients $\nabla f(x_0)$. We actually adopt SARAH version of BVR estimator as in \cite{murata2021bias} rather than SVRG one due to its theoretical advantages.

\subsection{Parameter Perturbation at Local Updates}
Although the bias-variance reduced estimator is useful to guarantee first-order optimality with small communication complexity in noncovex optimization, the algorithm often gets stuck at saddle points. To tackle this problem, we borrow the ideas of escaping saddle points in non-distributed nonconvex optimization. Particularly, to efficiently find second-order optimal points, we utilize \emph{parameter perturbation}. Parameter perturbation is a familiar approach in non-distributed nonconvex optimization. Specifically, Jin et al. \cite{jin2017escape, jin2021nonconvex} have considered the update rule of $x_{k+1} = x_k - \eta \nabla f(x_k) + \eta \xi_k$, where $\xi_k \sim \mathrm{Unif}(\mathcal B_r^d)$ for some small radius $r$. This algorithm is called Perturbed GD (PGD) or Noisy GD. Similar to this formulation, we add noise at each local update, i.e., $x_{k+1}^{(p)} = \widetilde x_{k+1}^{(p)} + \eta \xi_k^{(p)}$, where $\widetilde x_{k+1}^{(p)} = x_k^{(p)} - \eta v_k^{(p)}$. The intuition behind the noise addition is that random noise has some components along the negative curvature directions of the global objective around the saddle point, and we expect that noise addition helps the parameter proceed to the decreasing directions of $f$ and escape the saddle points. \par

{\bf{Necessity of local perturbation.}} Perturbing the global model at the server side is an intuitive way, but not sufficient for communication efficiency when we want to utilize small heterogeneity of the local datasets (i.e., $\zeta \ll L$). 
The bias-variance reduced estimator with local perturbation enables to escape \emph{multiple} global saddle points \emph{in local optimization} and achieves second-order optimality with communication complexity $\widetilde \Theta(\zeta/\varepsilon^2)$ for sufficiently large $\mathcal B$. In contrast, perturbing the global parameter at the server side only ensures to escape \emph{single} global saddle point at each round and only achieves communication complexity of $\widetilde \Theta(L/\varepsilon^2)$. This is the reason why local perturbation rather than global one is adopted.


\subsection{Concrete Procedures}
The full description of our proposed Bias-Variance Reduced Local Perturbed SGD (BVR-L-PSGD) is given in Algorithm \ref{alg: bvr_l_psgd}. When we set the noise size $r = 0$, Algorithm \ref{alg: bvr_l_psgd} essentially matches BVR-L-SGD. Additionally setting $K=1$, Algorithm \ref{alg: bvr_l_psgd} matches SARAH. The algorithm requires $\Theta(ST)$ communication rounds. At each communication round, each worker computes large batch stochastic gradients and the server constructs $v_{I(0, t, s)}$ by aggregating them. $v_{I(0, t, s)}$ is used as an estimator of $\nabla f(x_{I(0, t, s)})$ to reduce computational cost. In line 14-21, we randomly select worker $p_{t, s}$ and only worker $p_{t, s}$ runs local optimization as described above. In the local optimization, we use SARAH like bias variance reduced estimator (line 16-18) rather than SVRG one and add noise (line 20) at each local update.

\section{Convergence Analysis}
\label{sec: analysis}
In this section, we provide convergence theory of BVR-L-PSGD (Algorithm \ref{alg: bvr_l_psgd}). All the omitted proofs are found in the supplementary material. For simple presentations, we use $\widetilde \Theta$ symbol to hide an extra poly-logarithmic factors that depend on $L, \rho, G, K, b, T, S, 1/\varepsilon, 1/q$, where $q$ represents the confidence parameter in high probability bounds.
\subsection{Finding First-Order Stationary Points}
First, we derive Descent Lemma for BVR-L-PSGD and first-order optimality guarantees by using it. 
\begin{proposition}[Descent Lemma]
\label{prop: first_order_optimality}
Let $ S \in \mathbb{N}$ and $I(k, t, s) \geq I(k_0, t_0, s_0) \in [KT S]\cup\{0\}$. Suppose that Assumptions \ref{assump: heterogeneous}, \ref{assump: local_loss_grad_lipschitzness}, \ref{assump: optimal_sol} and \ref{assump: bounded_loss_gradient} hold. Given $q \in (0, 1)$, $r > 0$, if we appropriately choose 
$\eta = \widetilde \Theta(1/L  \wedge 1/(K\zeta) \wedge \sqrt{b/K}/L \wedge \sqrt{Pb}/(\sqrt{KT}L))$, it holds that 
\begin{align}
    f(x_{I(k, t, s)}) \leq&\ f(x_{I(k_0, t_0, s_0)}) - \frac{\eta}{2}\sum_{i=I(k_0, t_0, s_0)}^{I(k-1, t, s)}\|\nabla f(x_i)\|^2   + \eta \Delta_Ir^2 + R_1 \notag
\end{align}
with probability at least $1-3 q $. Here, $\Delta_I := I(k, t, s) - I(k_0, t_0, s_0)$, $R_1 :=     - \frac{1}{4\eta}\sum_{i=I(k_0, t_0, s_0)}^{I(k, t, s)-1}\|x_{i+1} - x_{i}\|^2 +  \frac{c_\eta}{\eta}(\frac{\Delta_I\wedge K }{K} \sum_{i=I(0, t_0, s_0)}^{I(k_0, t_0, s_0)-1} \|x_{i+1} - x_i\|^2 + \frac{\Delta_I\wedge KT}{KT}\sum_{i=I(0, 0, s_0)}^{I( 0, t_0, s_0)-1}\|x_{i+1} - x_{i}\|^2) $ for some universal constant $c_\eta > 0$.  
\end{proposition}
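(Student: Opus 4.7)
The plan is to begin from the standard smoothness-based descent applied to the noiseless update $\widetilde x_{i+1} = x_i - \eta v_i$, which combined with Young's inequality yields
\begin{align*}
f(\widetilde x_{i+1}) \leq f(x_i) - \tfrac{\eta}{2}\|\nabla f(x_i)\|^2 + \tfrac{\eta}{2}\|v_i - \nabla f(x_i)\|^2 - \tfrac{1}{2}\bigl(\tfrac{1}{\eta} - L\bigr)\|\widetilde x_{i+1} - x_i\|^2,
\end{align*}
and then to add the effect of the perturbation $x_{i+1} = \widetilde x_{i+1} + \eta \xi_i$ with $\|\xi_i\| \leq r$ via one more smoothness step, which only costs $\eta \langle \nabla f(\widetilde x_{i+1}), \xi_i\rangle + (L \eta^2/2)\|\xi_i\|^2$; the deterministic part sums to at most $\eta \Delta_I r^2$ after a Young step absorbing the cross term into the $-(1/(4\eta))\sum\|x_{i+1}-x_i\|^2$ piece of $R_1$, once $\eta \lesssim 1/L$. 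Telescoping over $i \in [I(k_0,t_0,s_0), I(k,t,s)-1]$ collapses the left-hand side into $f(x_{I(k,t,s)}) - f(x_{I(k_0,t_0,s_0)})$ and isolates the main term $-(\eta/2)\sum\|\nabla f(x_i)\|^2$.

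The heart of the proof is controlling the accumulated estimator error $\sum_i \|v_i - \nabla f(x_i)\|^2$. I would exploit the three-layered SARAH-type telescoping that defines $v_i$: inside a single local block (loop over $k$ on worker $p_{t,s}$), across the per-round minibatch estimators within an epoch (loop over $t$), and across epochs with a full local-gradient reset (loop over $s$). At each level I split the error into a bias part and a martingale part: by Assumption \ref{assump: heterogeneous} the bias contributions are controlled by $\zeta^2 \|x_{i+1}-x_i\|^2$ aggregated over the relevant window, while Freedman/Bernstein-type concentration on the stochastic differences $g_i - g_i^{\mathrm{ref}} - \mathbb{E}[\cdot]$ (bounded difference $O(L\|x_{i+1}-x_i\|/b_k)$, conditional variance $O(L^2\|x_{i+1}-x_i\|^2/b_k)$) yields variance contributions of order $(L^2/b)\|x_{i+1}-x_i\|^2$ for the inner $K$-loop and $(L^2/(Pb))\|x_{i+1}-x_i\|^2$ for the cross-worker $T$-loop. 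Matching each against the quadratic budget $\tfrac{1}{4\eta}\|x_{i+1}-x_i\|^2$ from descent produces exactly the four ceilings on $\eta$ stated in the proposition: $\eta \lesssim 1/L$ from raw smoothness, $\eta \lesssim 1/(K\zeta)$ from the bias accumulated over $K$ local steps, $\eta \lesssim \sqrt{b/K}/L$ from the inner variance (the larger batch $b_k = \lceil \sqrt K \rceil b$ at the $\sqrt K$-boundaries in line 15 is what makes $\sqrt{b/K}$ rather than $b/K$ attainable), and $\eta \lesssim \sqrt{Pb}/(\sqrt{KT}L)$ from the cross-worker variance.

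The main obstacle is the high-probability bookkeeping together with the emergence of the two boundary correction terms in $R_1$. Each of the three SARAH layers contributes one Freedman-type tail event, which after a union bound accounts for the stated confidence $1 - 3q$. The boundary terms $(\Delta_I \wedge K)/K \cdot \sum_{I(0,t_0,s_0)}^{I(k_0,t_0,s_0)-1}\|x_{i+1}-x_i\|^2$ and $(\Delta_I \wedge KT)/(KT) \cdot \sum_{I(0,0,s_0)}^{I(0,t_0,s_0)-1}\|x_{i+1}-x_i\|^2$ arise because the SARAH reference point at $i = I(k_0,t_0,s_0)$ is not $x_{I(k_0,t_0,s_0)}$ itself but rather the last local reset $x_{I(0,t_0,s_0)}$ and the last communication reset $x_{I(0,0,s_0)}$; the initial estimator error at the start of the window must therefore be traced back to those resets, and its contribution is re-amortized across the elapsed distance, giving the $(\Delta_I \wedge K)/K$ and $(\Delta_I \wedge KT)/(KT)$ prefactors. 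Capping these prefactors with $\Delta_I \wedge K$ (resp.\ $\Delta_I \wedge KT$) is essential because for very short windows the boundary error dominates the intra-window one.

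Once these three ingredients are assembled, substituting the estimator-error bound into the summed descent inequality, using the four step-size conditions to absorb bias and variance into the negative quadratic $-(1/(4\eta))\sum\|x_{i+1}-x_i\|^2$, and collecting the residual boundary contributions as $R_1$ will yield exactly the claimed inequality with the $\eta \Delta_I r^2$ perturbation term.
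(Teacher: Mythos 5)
Your overall plan --- telescoped smoothness descent, controlling $\sum_i\|v_i-\nabla f(x_i)\|^2$ by a layered bias/martingale split across the $k$-loop, $t$-loop, and $s$-reset levels, then a union bound --- does match the paper's strategy, and your reading of where each of the four ceilings on $\eta$ comes from is correct. But the first step, as you have written it, has a concrete gap. Splitting the perturbation into a second smoothness step produces the cross term $\eta\langle\nabla f(\widetilde x_{i+1}), \xi_i\rangle$, and this cannot be absorbed into the $-\frac{1}{4\eta}\sum\|x_{i+1}-x_i\|^2$ piece of $R_1$: there is no algebraic relation between $\|\nabla f(\widetilde x_{i+1})\|$ and $\|x_{i+1}-x_i\|=\eta\|v_i-\xi_i\|$ that a Young step exploits, and the cross term can be as large as $\eta G r$ per iteration, giving $\eta\Delta_I G r$ after telescoping rather than $\eta\Delta_I r^2$. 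To salvage it you would have to Young it partly against the $-\frac{\eta}{2}\|\nabla f(x_i)\|^2$ budget, which costs a constant in front of the gradient sum. The paper sidesteps this entirely: it keeps a single smoothness step, writes $x_{i+1}-x_i=-\eta(v_i-\xi_i)$, and applies the identity $\langle a-b,-b\rangle=\frac12(\|a-b\|^2-\|a\|^2+\|b\|^2)$ with $a=\nabla f(x_i)$, $b=v_i-\xi_i$. The cross term that then emerges is $\langle v_i-\nabla f(x_i),\xi_i\rangle$, which Young-absorbs into the variance budget $\|v_i-\nabla f(x_i)\|^2$ (the quantity you are controlling anyway), not into the gradient. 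You need either that identity or an explicit splitting of the cross term into the gradient budget with adjusted constants to close the argument.

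A secondary correction: the boosted batch $b_k=\lceil\sqrt K\rceil b$ at $\sqrt K$-boundaries plays no role in this proposition. In the descent lemma, the $\eta\lesssim\sqrt{b/K}/L$ ceiling arises because each inner martingale tail is $O\bigl((L^2/b)\sum_\kappa\|x_{\kappa+1}-x_\kappa\|^2\bigr)$ and this bound is re-summed over up to $K$ inner indices, producing the factor $K L^2/b$; the proof only uses $b_\kappa\geq b$. The boundary batch boost is what the coupling argument in Proposition~\ref{prop: f_diff_decrease} needs, where $\widetilde I$ is chosen to land on a $\sqrt K$-boundary so that the SARAH difference at the perturbation index has smaller variance. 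Attributing the $\sqrt{b/K}$ in Proposition~\ref{prop: first_order_optimality} to it is a misdiagnosis of where that gain actually comes from.
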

From Proposition \ref{prop: first_order_optimality} with $I(k_0, t_0, s_0) \leftarrow 0$ and $I(k, t, s) \leftarrow KTS$ gives the following corollary. 
\begin{corollary}\label{cor: first_order_optimality}
Suppose that Assumptions \ref{assump: heterogeneous}, \ref{assump: local_loss_grad_lipschitzness}, \ref{assump: optimal_sol} and \ref{assump: bounded_loss_gradient} hold. Under the same setting as in Proposition \ref{prop: f_diff_decrease} and $S \geq \Theta((f(x_0) - f(x_*))/(\eta KT\varepsilon^2))$, with probability at least $1-3q$, there exists $i \in [KTS-1]\cup\{0\}$ such that $\|\nabla f(\widetilde x_i)\|\leq \varepsilon$.
\end{corollary}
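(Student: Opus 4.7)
The corollary will follow from Proposition \ref{prop: first_order_optimality} by a standard averaging argument on the full horizon, followed by a short Lipschitz bookkeeping step that transfers the gradient bound from $x_i$ to $\widetilde x_i$. I do not anticipate any additional probabilistic cost: the guarantee $1-3q$ is already delivered by a single application of the proposition.

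First I would specialize the descent lemma to the entire run by taking $I(k_0,t_0,s_0) = 0$ and $I(k,t,s) = KTS$, so $\Delta_I = KTS$. A convenient feature of this boundary choice is that both positive summations inside $R_1$ are indexed from $I(0,0,0) = 0$ up to $-1$, i.e., they are empty. Consequently $R_1$ collapses to the non-positive term $-\frac{1}{4\eta}\sum_{i=0}^{KTS-1}\|x_{i+1}-x_i\|^2$, which I simply drop. Combined with $f(x_{KTS}) \ge f(x_*)$ from Assumption \ref{assump: optimal_sol}, the proposition then yields, with probability at least $1-3q$,
\begin{equation*}
\frac{\eta}{2}\sum_{i=0}^{KTS-1}\|\nabla f(x_i)\|^2 \ \le\ f(x_0) - f(x_*) + \eta KTS\, r^2.
\end{equation*}
Dividing by $\eta KTS/2$ and choosing $S \ge \Theta((f(x_0)-f(x_*))/(\eta KT \varepsilon^2))$ makes the first term on the right bounded by a small constant multiple of $\varepsilon^2$; since $r$ is still a free parameter, taking it small enough so that both $r^2$ and $L\eta r$ are $O(\varepsilon^2)$ and $O(\varepsilon)$ respectively makes the second term $O(\varepsilon^2)$ as well. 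A pigeonhole argument then produces some $i^{*} \in \{0,\dots,KTS-1\}$ with $\|\nabla f(x_{i^{*}})\| \le \varepsilon/2$.

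Finally, to translate this into a bound at $\widetilde x_{i^{*}}$, note that by construction of the algorithm $x_{i^{*}} = \widetilde x_{i^{*}} + \eta \xi$ for some $\xi$ supported on $\bm{B}_r^d$, so $\|x_{i^{*}} - \widetilde x_{i^{*}}\| \le \eta r$. Since Assumption \ref{assump: local_loss_grad_lipschitzness} makes $f$ itself $L$-gradient Lipschitz, the triangle inequality yields $\|\nabla f(\widetilde x_{i^{*}})\| \le \|\nabla f(x_{i^{*}})\| + L\eta r \le \varepsilon$ under the same smallness condition on $r$ used above. The only subtle point in the entire argument is the vanishing of the ``look-back'' summations of $R_1$ at the starting index, which is exactly what permits Proposition \ref{prop: first_order_optimality} to serve as a clean telescoping inequality over the whole horizon; everything else is routine.
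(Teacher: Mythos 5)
Your proof is correct and follows essentially the same path as the paper's: specialize Proposition \ref{prop: first_order_optimality} to the full horizon with $I(k_0,t_0,s_0)=0$ (so the look-back sums in $R_1$ vanish), lower bound $f(x_{KTS})$ by $f(x_*)$, average and pigeonhole, and account for the noise offset $\|x_i - \widetilde x_i\|\le \eta r$ via $L$-gradient Lipschitzness of $f$ with $r=\Theta(\varepsilon)$. The only cosmetic difference is that the paper moves the Lipschitz transfer inside the sum (replacing $\|\nabla f(x_i)\|^2$ by $\tfrac12\|\nabla f(\widetilde x_i)\|^2 - r^2$ term by term) before averaging, whereas you apply it once after pigeonhole; both are equivalent.
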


\begin{remark}[Communication complexity]
The total number of communication rounds $\Theta(TS)$ becomes 
$\widetilde O \left( T +  \left(L/K + \zeta+ L/\sqrt{Kb} + \sqrt{T}L/\sqrt{KPb}\right)(f(\widetilde x_0) - f(x_*))/\varepsilon^2\right)$. Given local computation budget $\mathcal B$, we set $T := \Theta(1 + n/(\mathcal B P))$ and $Kb := \Theta(\mathcal B)$ with $b \leq \Theta(\sqrt{\mathcal B})$. Then, we have the averaged number of local computations per communication round $Kb + n/(PT) = \Theta(\mathcal B)$ and the communication complexity $TS$ with budget $\mathcal B$ becomes $${\textstyle{\widetilde O\left(1 + \frac{n}{\mathcal B P} + \frac{L}{\sqrt{\mathcal B}\varepsilon^2} + \frac{\sqrt{n}L}{\mathcal BP\varepsilon^2} +  \frac{\zeta}{\varepsilon^2}\right), }}$$
which matches the best known communication complexity \cite{murata2021bias}. 
\end{remark}

\subsection{Escaping Saddle Points}
Next, we show that BVR-L-PSGD implicitly exploits the negative curvature of $f$ around saddle points and efficiently escapes the saddle points by utilizing the asymptotic consistency of BVR estimator and the parameter perturbation at each local update. 

We rely on the technique of coupling sequence \cite{jin2021nonconvex}. Given saddle point $\widetilde x_{I(k_0, t_0, s_0)}$ and $\hat I \geq I(k_0, t_0, s_0)$, we define a new sequence $\{x_i'\}_{i=I(k_0, t_0, s_0)}^\infty$ 
as follows:

(1) $\langle \xi_{\widetilde I}', \bm{e}_\mathrm{min}\rangle = - \langle  \xi_{\widetilde I}, \bm{e}_\mathrm{min}\rangle$; (2) $\langle \xi_{\widetilde I}', \bm{e}_j\rangle = \langle  \xi_{\widetilde I}, \bm{e}_j \rangle$ for $j \in \{2, \ldots, d\}$; and (3) All the other randomness is completely same as the one of $\{x_i\}_{i=0}^{KTS-1}$. Let $r_0 := |\langle \xi_{\widetilde I}, \bm{e}_\mathrm{min}\rangle|$. Note that  $|\langle \xi_{\widetilde I} - \xi_{\widetilde I}', \bm{e}_\mathrm{min}\rangle| = 2 r_0$ and thus $\|\xi_{\widetilde I} - \xi_{\widetilde I}'\| = 2r_0$. Also, observe that $x_{\widetilde I+1} - x_{\widetilde I+1}' = \eta \langle \xi_{\widetilde I} - \xi_{\widetilde I}', \bm{e}_\mathrm{min}\rangle \bm{e}_\mathrm{min}$.
We define $\widetilde I$ used in the definition of coupling sequence as follows:
\begin{align*}
    \widetilde I := 
    \begin{cases}
        I(k_0, t_0, s_0), & (1/(\eta \lambda) \leq \sqrt{K}) \\
        I(k_0', t_0, s_0) - 1, &(\sqrt{K} < 1/(\eta \lambda) \leq K) \\
        I(0, t_0+1, s_0)-1, & (K < 1/(\eta \lambda) \leq KT) \\
        I(0, 0, s_0+1)-1. & (KT < 1/(\eta \lambda))
    \end{cases}
\end{align*}
Here, $k_0'$ is the minimum index $k$ that satisfies $k > k_0$ and $k \equiv 0\ (\mathrm{mod} \lceil \sqrt{K}\rceil)$. We can easily check that $\widetilde I - I(k_0, t_0, s_0) \leq 1/(\eta \lambda)$.

Then, we show that either of the two sequences $\{x_i\}$ or $\{x_i'\}$ efficiently escapes the saddle points by bounding the norm of the cumulative difference of $x_i$ and $x_i'$ from below. The novel and most difficult part of the analysis is to evaluate the norm of the cumulative difference of the deviations $\|\sum_{i=\widetilde I}^\mathcal J (1-\eta \mathcal H)^{\mathcal J - i}(v_i - \nabla f(x_i) - v_i' + \nabla f(x_i'))\|$ generated by the two sequences, where $v_i'$ denotes the BVR estimator at iteration $i$ generated by sequence $\{x_i'\}$. 
\par
\begin{proposition}[Implicit Negative Curvature Exploitation]
\label{prop: f_diff_decrease}
Let $I(k_0, t_0, s_0) \in [KTS]\cup \{0\}$. Suppose that Assumptions \ref{assump: heterogeneous}, \ref{assump: local_loss_grad_lipschitzness}, \ref{assump: optimal_sol}, \ref{assump: local_loss_hessian_lipschitzness} and \ref{assump: bounded_loss_gradient} hold, $\|\nabla f(\widetilde x_{I(k_0, t_0, s_0)})\| \leq \varepsilon$ and the minimum eigenvalue $\lambda_\mathrm{min}$ of $\mathcal H:= \nabla f(\widetilde x_{I(k_0, t_0, s_0)})$ satisfies $\lambda := - \lambda_\mathrm{min} > \sqrt{\rho \varepsilon}$. Under $b = \Omega(K \vee 1/(\sqrt{K}\rho\varepsilon) \vee T/(PK))$, if we appropriately choose $\mathcal J_{I(k_0, t_0, s_0)} = \widetilde \Theta(1/(\eta\lambda))$, $\eta = \widetilde \Theta(1/L \wedge 1/(K\zeta) \wedge \sqrt{b/K}/L \wedge \sqrt{Pb}/(\sqrt{KT}/L))$, with $\mathcal F_{I(k_0, t_0, s_0)} := c_\mathcal F \eta \mathcal J_{I(k_0, t_0, s_0)} r^2$ and $r := c_r\varepsilon$ ($c_\mathcal F = \Theta(1)$ and $c_r = \widetilde \Theta(1)$) we have
\begin{align*}
    f(x_{I(k_0, t_0, s_0)+\mathcal J_{I(k_0, t_0, s_p)}}) - f(x_{I(k_0, t_0 s_0)}) 
    \leq&\ -  \mathcal F_{I(k_0, t_0, s_0)} + R_2
\end{align*}
with probability at least $1/2 - 9 q /2$. Here, $R_2 :=  \frac{2c_\eta}{\eta}\frac{\mathcal J_{I(k_0, t_0, s_0)}\wedge K }{K} \sum_{i=I(0, t_0, s_0)}^{I(k_0, t_0, s_0)-1} \|x_{i+1} - x_i\|^2 + \frac{2c_\eta}{\eta}\frac{\mathcal J_{I(k_0, t_0, s_0)}\wedge KT}{KT}\sum_{i=I(0, 0, s_0)}^{I( 0, t_0, s_0)-1}\|x_{i+1} - x_{i}\|^2$
    for some universal constant $c_\eta > 0$.  
\end{proposition}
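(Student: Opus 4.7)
The plan is to extend the coupling-sequence technique of \cite{jin2021nonconvex} to BVR-L-PSGD. I would proceed by contradiction: assume both $\{x_i\}$ and its coupled partner $\{x_i'\}$ remain inside a small ball of radius $D = \widetilde\Theta(\sqrt{\varepsilon/\rho})$ around $\widetilde x_{I(k_0,t_0,s_0)}$ for all $i \in [I(k_0,t_0,s_0), I(k_0,t_0,s_0)+\mathcal J_{I(k_0,t_0,s_0)}]$, so that Hessian Lipschitzness (Assumption \ref{assump: local_loss_hessian_lipschitzness}) makes the local Hessian effectively $\mathcal H$. The aim is to contradict this containment by showing that the difference $y_i := x_i - x_i'$ grows exponentially along $\bm{e}_\mathrm{min}$. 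Taylor expansion gives the recursion
\[
y_{i+1} = (I - \eta\mathcal H)y_i - \eta(\Delta_i + E_i) + \eta(\xi_i - \xi_i')\mathds{1}_{i=\widetilde I},
\]
where $\Delta_i$ is a Hessian-Lipschitz remainder of size $O(\rho D \|y_i\|)$ and $E_i := (v_i - \nabla f(x_i))-(v_i' - \nabla f(x_i'))$ is the paired estimator-deviation difference. Unrolling from $\widetilde I$ and using $\xi_{\widetilde I}-\xi_{\widetilde I}'=2r_0\bm{e}_\mathrm{min}$ while $\xi_i=\xi_i'$ otherwise gives
\[
y_{\widetilde I+\tau+1} = 2\eta r_0 (I-\eta\mathcal H)^\tau \bm{e}_\mathrm{min} - \eta \sum_{i=\widetilde I}^{\widetilde I+\tau}(I-\eta\mathcal H)^{\widetilde I+\tau-i}(\Delta_i+E_i),
\]
whose leading term has norm $2\eta r_0 (1+\eta\lambda)^\tau$, which by the choice $\mathcal J_{I(k_0,t_0,s_0)}=\widetilde\Theta(1/(\eta\lambda))$ and $r=c_r\varepsilon$ exceeds $2D$.

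The main obstacle is bounding the cumulative stochastic term $\eta\|\sum_i (I-\eta\mathcal H)^{\widetilde I+\tau-i}E_i\|$, and this is where the asymptotic consistency of the SARAH-style BVR estimator becomes essential. The estimator is built from three levels of reduction (outer epoch reset at index $s$ via aggregated local full gradients, middle SARAH telescope across $t$ with worker aggregation, and inner SARAH telescope across $k$ on a single worker $p_{t,s}$ with a big-batch refresh every $\lceil\sqrt K\rceil$ inner steps), so under the containment $\|y_i\|\leq 2D$ the paired differences $E_i$ decompose into martingale-difference sequences in each index whose per-step variance proxy is of order $L^2\|y_i\|^2/b_k$ (stochastization) together with a heterogeneity-bias increment of order $\zeta\|y_i\|$ on the inner loop from Assumption \ref{assump: heterogeneous}. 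Applying a vector Azuma--Hoeffding inequality to each level, absorbing the exponential weights $(I-\eta\mathcal H)^{\widetilde I+\tau-i}$ into the variance, and using the stated parameter choices $b=\Omega(K\vee 1/(\sqrt K\rho\varepsilon)\vee T/(PK))$ and $\eta$, the cumulative deviation is dominated by the leading $(1+\eta\lambda)^\tau$ term with probability at least $1-3q$. I expect this three-level martingale bookkeeping under the paired coupling, with its irregular reset pattern and intertwined bias/variance contributions, to be the most delicate step.

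Once the stochastic and Hessian-Lipschitz errors are dominated, $\|y_{\mathcal J_{I(k_0,t_0,s_0)}}\|>2D$, contradicting containment. Hence, with probability at least $1/2$ over the sign of $\langle \xi_{\widetilde I},\bm{e}_\mathrm{min}\rangle$, at least one of the two sequences escapes the ball within $\mathcal J_{I(k_0,t_0,s_0)}$ steps, and applying Proposition \ref{prop: first_order_optimality} along the escaping sequence converts the $\Omega(D)$ displacement into the claimed function decrease $-\mathcal F_{I(k_0,t_0,s_0)}+R_2$; the doubling of constants of $R_2$ relative to $R_1$ reflects that the descent lemma is invoked along either of the coupled sequences. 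A union bound over the three martingale failure events (each at level $q$) and the descent inequality (failure $3q$), multiplied by the coupling's $1/2$ base probability, yields the final confidence $1/2-9q/2$.
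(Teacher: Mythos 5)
Your proposal mirrors the paper's high-level architecture: the coupling sequence of \cite{jin2021nonconvex}, the improve-or-localize dichotomy, and the marginal-distribution symmetry that converts a statement about $\min\{f(x_{\cdot}),f(x_{\cdot}')\}$ into the one-sided probability $1/2-9q/2$. Two ingredients that make the paper's three-level martingale step close are, however, either missing or stated incorrectly, and they are exactly the ``asymptotic consistency'' subtlety the paper singles out as the hard part.

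First, the noise-injection index $\widetilde I$. The paper does \emph{not} generically inject the paired noise difference at $I(k_0,t_0,s_0)$; it defines
$\widetilde I$ to sit just before a big-batch refresh, a local-round start, or an outer-epoch start depending on where $1/(\eta\lambda)$ falls relative to $\sqrt K$, $K$, $KT$. This is needed because the unique index where $\xi_{\widetilde I}\neq\xi_{\widetilde I}'$ contributes a term of order $\eta^2 r_0^2 / b_{\widetilde I+1}$ to the cumulative variance of the paired SARAH martingale, and bounding this against the target $U_y^2$ requires the effective batch size $b_{\widetilde I+1}$ at that step to compensate for how slowly $(1+\eta\lambda)^\tau$ grows. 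Your proposal unrolls from $\widetilde I$ but never explains how $\widetilde I$ is chosen, and without this alignment the analysis breaks exactly in the small-$\eta\lambda$ regimes ($1/(\eta\lambda)>\sqrt K$) that occur for most saddle points.

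Second, your variance and bias proxies are value-based where they must be increment-based. You write the per-step variance proxy as $L^2\|y_i\|^2/b_k$ and the heterogeneity bias as $\zeta\|y_i\|$, with $y_i := x_i-x_i'$ (the paper's $w_i$). This is SVRG-style. The algorithm is SARAH-style, and the paper's conditional variance of the $\kappa$-th paired martingale difference is of order $L^2\|w_{I(\kappa,t,s)}-w_{I(\kappa-1,t,s)}\|^2/b$, with per-step bias $\zeta\|w_{I(\kappa,t,s)}-w_{I(\kappa-1,t,s)}\|$ — i.e., depending on \emph{increments} of $w$, not $\|w_i\|$ itself. The induction then shows these increments are of order $\eta\lambda\,U_w(i)$ (using the $\|\eta\mathcal H(1-\eta\mathcal H)^J\|$ bound), so the cumulative martingale is $\lesssim \sqrt{K/b}\,L\,\eta\lambda\,U_w$, which matches the target $U_y = \eta\lambda(L+\sqrt K L/\sqrt b + K\zeta + \sqrt{KT}L/\sqrt{Pb})\,\eta\,U_w/c_\mathrm{upper}^{(w)}\cdot c_\mathrm{upper}^{(y)}$. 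If you instead plug in $L^2\|w_i\|^2/b$, the cumulative bound becomes $\sqrt{K/b}\,L\,U_w$, which is too large by a factor of roughly $1/(\eta\lambda)$, and the induction establishing $\|E_i\|\le U_y(i)$ (the paper's $\|y_I\|\le U_y(I)$) fails. The same issue affects your bias bound: it must be $\zeta\|w_{I(k,t,s)}-w_{I(0,t,s)}\|\le\zeta\sum_\kappa\|w_{\kappa+1}-w_\kappa\|\lesssim K\zeta\cdot\eta\lambda\,U_w$, telescoping only back to the start of the current inner loop (which resets the bias), not to $\zeta\|w_i\|$. Your framing of these as $\|y_i\|$-sized quantities therefore misses the crucial cancellation that makes the analysis work.
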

Proposition \ref{prop: f_diff_decrease} says the function value decreases by roughly $\mathcal F_{I(k_0, t_0, s_0)}$ and the global model escapes saddle points with probability at least $1/2$ after $\mathcal J_{I(k_0, t_0, s_0)}$ local steps.

\subsection{Finding Second-Order Stationary Points}
In this subsection, we derive final theorem that guarantees the second-order optimality of the global model by combining Propositions \ref{prop: first_order_optimality} and \ref{prop: f_diff_decrease}. \par
\begin{theorem}[Final Theorem]\label{thm: main}
Suppose that Assumptions \ref{assump: heterogeneous}, \ref{assump: local_loss_grad_lipschitzness}, \ref{assump: optimal_sol}, \ref{assump: local_loss_hessian_lipschitzness} and \ref{assump: bounded_loss_gradient} hold. Under $b = \Omega(K \vee 1/(\sqrt{K}\rho\varepsilon) \vee T/(PK))$, if we appropriately choose $\eta = \widetilde \Theta(1/L \wedge 1/(K\zeta) \wedge \sqrt{b/K}/L \wedge \sqrt{Pb}/(\sqrt{KT}L))$, $r = \widetilde \Theta(\varepsilon)$ and $S = \Theta(1 + (f(\widetilde x_0) - f(x_*))/(\eta KT \varepsilon^2))$, with probability at least $1/2$, there exists $i \in [KTS]\cup\{0\}$ such that $\widetilde x_i$ is $\varepsilon$-second-order optimal point of $f$\footnotemark. 
\end{theorem}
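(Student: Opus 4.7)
}

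The plan is to combine Propositions \ref{prop: first_order_optimality} and \ref{prop: f_diff_decrease} via a region-splitting + stopping-time argument in the spirit of \cite{jin2021nonconvex}, adapted to our local-update structure. First I would partition $\mathbb{R}^d$ into three regions: $\mathcal{R}_1 := \{x : \|\nabla f(x)\| > \varepsilon\}$, $\mathcal{R}_2 := \{x : \|\nabla f(x)\| \leq \varepsilon \text{ and } \lambda_{\min}(\nabla^2 f(x)) < -\sqrt{\rho\varepsilon}\}$, and $\mathcal{R}_3 := \mathbb{R}^d \setminus (\mathcal{R}_1 \cup \mathcal{R}_2)$, so that $\mathcal{R}_3$ is exactly the set of $\varepsilon$-second-order optimal points. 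The goal becomes showing that with probability at least $1/2$ the iterate $\widetilde x_i$ enters $\mathcal{R}_3$ for some $i \leq KTS$.

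Next I would define a random index process $\{\iota_m\}$ on $[KTS]\cup\{0\}$ by $\iota_1 := 0$ and
\begin{align*}
    \iota_{m+1} = \begin{cases}
        \iota_m + 1, & \widetilde x_{\iota_m} \in \mathcal{R}_1 \cup \mathcal{R}_3, \\
        \iota_m + \mathcal{J}_{\iota_m}, & \widetilde x_{\iota_m} \in \mathcal{R}_2,
    \end{cases}
\end{align*}
with $\mathcal{J}_{\iota_m} = \widetilde\Theta(1/(\eta\sqrt{\rho\varepsilon}))$. Let $E_i$ be the event $\{\widetilde x_{i'} \notin \mathcal{R}_3 \text{ for all } i' \leq i\}$; note $E_{i+1}\subset E_i$. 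The plan is to prove that $\mathbb{P}(E_{KTS}) \leq 1/2$ for $S$ as in the theorem, which is exactly the desired conclusion. Apply Proposition \ref{prop: first_order_optimality} with $(I(k_0,t_0,s_0), I(k,t,s)) = (\iota_m, \iota_{m+1})$ on $\{\widetilde x_{\iota_m}\in \mathcal{R}_1\}$ to get a per-step decrease of at least $\tfrac{\eta}{2}\varepsilon^2$ minus the residual $R_1$, and apply Proposition \ref{prop: f_diff_decrease} on $\{\widetilde x_{\iota_m}\in \mathcal{R}_2\}$ to get a decrease of at least $\mathcal{F}_{\iota_m} = \Theta(\eta\mathcal{J}_{\iota_m} \varepsilon^2)$ minus $R_2$. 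On $\{\widetilde x_{\iota_m}\in \mathcal{R}_3\}$ only then there is no guaranteed decrease. Crucially, the lower bounds $\Theta(\eta \varepsilon^2)$ per unit of $\iota_{m+1}-\iota_m$ coincide across both cases (this is by the choice $r = \widetilde\Theta(\varepsilon)$), so on $E_{\iota_m}$ we obtain, in expectation,
\begin{align*}
\mathbb{E}[f(x_{\iota_{m+1}}) - f(x_{\iota_m}) \mid \mathcal{F}_{\iota_m}] \leq -\,c\,\eta\,(\iota_{m+1} - \iota_m)\varepsilon^2 + \mathbb{E}[R_{1,m}+R_{2,m}\mid \mathcal{F}_{\iota_m}]
\end{align*}
for some universal $c>0$, after accounting for the $\tfrac{1}{2} - O(q)$ success probability of Proposition \ref{prop: f_diff_decrease} (this halves the effective decrease, and the failure mode is controlled by the gradient boundedness in Assumption \ref{assump: bounded_loss_gradient}, so only the constant $c$ shrinks).

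The main obstacle is handling the residuals $R_1, R_2$, both of which involve sums of $\|x_{i+1}-x_i\|^2$ accumulated over earlier portions of the current local batch and the current outer epoch. I would bound these by noting that the same quantities appear \emph{inside} the actual descent guaranteed by earlier applications of Proposition \ref{prop: first_order_optimality}, which gives a negative $-\tfrac{1}{4\eta}\|x_{i+1}-x_i\|^2$ term. Hence, by telescoping $f(x_{\iota_m}) - f(x_{\iota_{m+1}})$ over $m$ and grouping these terms by the (outer epoch, inner round) they belong to, the positive residuals at one stopping time can be absorbed into the negative squared-increment terms produced by earlier stopping times in the \emph{same} local block and the \emph{same} outer epoch, since the weights $(\mathcal{J}\wedge K)/K$ and $(\mathcal{J}\wedge KT)/(KT)$ are at most $1$. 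This bookkeeping, carried out carefully, yields
\begin{align*}
\sum_{m=1}^{M-1} \mathbb{E}\!\left[\mathbf{1}_{E_{\iota_m}}\!\left(f(x_{\iota_{m+1}}) - f(x_{\iota_m}) + c'\eta(\iota_{m+1}-\iota_m)\varepsilon^2\right)\right] \leq 0,
\end{align*}
where $M$ is the smallest $m$ with $\mathbb{E}[\iota_m]\geq KTS/8$ (as in the commented Proposition \ref{prop: second_order_stationary_probability}). Since $f(x_{\iota_M})\geq f(x_*)$ and $\mathbb{E}[\iota_M] \geq KTS/8$, we get $\mathbb{P}(E_{\iota_M}) \cdot c'\eta (KTS/8)\varepsilon^2 \leq f(\widetilde x_0) - f(x_*) + \text{(ignorable terms)}$. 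Finally I would plug in $S = \Theta(1 + (f(\widetilde x_0)-f(x_*))/(\eta KT\varepsilon^2))$ with a sufficiently large constant to force $\mathbb{P}(E_{KTS}) \leq \mathbb{P}(E_{\iota_M}) \leq 1/2$, finishing the proof. The hardest part will be the absorption step for $R_1$ and $R_2$: establishing that the cumulative residual mass contributed at saddle-escape stopping times does not exceed the negative squared-step budget produced throughout the algorithm, so that one consistent choice of universal constants makes the whole telescoping argument go through.
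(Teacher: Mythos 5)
Your high-level plan matches the paper's: the same region split $\mathcal R_1,\mathcal R_2,\mathcal R_3$, the same stopping-time process $\{\iota_m\}$ with $\iota_1 = 0$, $\iota_{m+1}=\iota_m+1$ or $\iota_m+\mathcal J_{\iota_m}$, telescoping the function value along the $\iota_m$'s using Propositions \ref{prop: first_order_optimality} and \ref{prop: f_diff_decrease}, the observation that after choosing $r = \widetilde\Theta(\varepsilon)$ the two per-unit-index decrease rates can be reconciled into a single rate $\Theta(\eta\varepsilon^2)$, and finally the coefficient-counting argument that absorbs the residuals $R_1, R_2$ into the $-\tfrac{1}{4\eta}\|x_{i+1}-x_i\|^2$ budget (in the paper this is exactly the fix-$i'$-and-compare-coefficients step, relying on $\sum_{m\in\bm m_1\setminus\{\max\bm m_1\}}(\iota_{m+1}-\iota_m)\le K$ and its $KT$ analogue). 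The only structural difference is that the paper converts the ``$\mathcal R_3$ gives no decrease'' issue into an explicit $\mathbb{P}(\widetilde x_{\iota_m}\in\mathcal R_3)\varepsilon^2$ correction term (Proposition \ref{prop: second_order_stationary_probability}) and then argues by contradiction, whereas you keep the $\mathbf 1_{E_{\iota_m}}$ indicators; both are workable in principle.

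However, your closing step has a genuine gap. First, the inequality you write,
\begin{align*}
\mathbb{P}(E_{\iota_M}) \cdot c'\eta (KTS/8)\varepsilon^2 \le f(\widetilde x_0) - f(x_*) + \text{(ignorable terms)},
\end{align*}
does not follow from your telescoping bound. What the telescoping produces is a bound involving $\mathbb{E}[\mathbf 1_{E_{\iota_M}}\,\iota_M]$, and $\iota_M$ is a \emph{random} quantity that is \emph{not} almost-surely bounded below by $KTS/8$; the definition of $M$ only gives $\mathbb{E}[\iota_M]\ge KTS/8$. On the event $E_{\iota_M}$ the path may consist mostly of $\mathcal R_1$/$\mathcal R_3$ steps, each advancing $\iota$ by just $1$, so $\iota_M$ can be as small as $M-1$, which need not be $\Omega(KTS)$. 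The paper circumvents this by assuming for contradiction that $\mathbb{P}(\widetilde x_{\iota_m}\in\mathcal R_3)\le 3/4$ for all $m\le M-1$, plugging into Proposition \ref{prop: second_order_stationary_probability} to deduce an upper bound on $\mathbb{E}[\iota_M]$ that contradicts the defining lower bound $\mathbb{E}[\iota_M]\ge KTS/8$ once $S$ is large enough, and concluding that some $m$ satisfies $\mathbb{P}(\widetilde x_{\iota_m}\in\mathcal R_3)>3/4$, i.e., $\mathbb{P}(E_{\iota_{M-1}})\le 1/4$.

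Second, your last sentence asserts $\mathbb{P}(E_{KTS}) \le \mathbb{P}(E_{\iota_M})$, but this is false in general: $E_i$ is decreasing in $i$, so the inclusion $E_{KTS}\subset E_{\iota_M}$ holds only on the event $\{\iota_M\le KTS\}$, and $\iota_M$ can exceed $KTS$ (recall the paper works with an enlarged horizon $\check S$ precisely because $\iota_M$ can overshoot). The correct bridge, as in the paper, is to split on $\{\iota_{M-1}\ge KTS\}$ versus its complement, bound $\mathbb{P}(\iota_{M-1}\ge KTS)\le 1/8$ by Markov's inequality using $\mathbb{E}[\iota_{M-1}]<KTS/8$, and note that on $\{\iota_{M-1}<KTS\}$, $E_{KTS}$ implies $E_{\iota_{M-1}}$. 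These two fixes are exactly where your blind argument departs from a complete proof.
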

\footnotetext{One limitation of Theorem \ref{thm: main} is that it only guarantees the existence of $\varepsilon$-second-order optimal point $\widetilde x_i$ in the history of $\{\widetilde x_i\}_{i=0}^{KTS-1}$. However, this is also the case in the existing studies \cite{ge2019stabilized, li2019ssrgd}. We empirically found that the outputs of each communication rounds showed stable performances (see Section \ref{sec: experiments}).}
\begin{remark}[High probability bound]
Theorem \ref{thm: main} guarantees that Algorithm \ref{alg: bvr_l_psgd} finds an approximate second-order optimal point in $KTS$ iterations with probability at least $1/2$. Repeating Algorithm \ref{alg: bvr_l_psgd} $\mathrm{log}_2(1/ q )$ times guarantees that the same statement holds with probability at least $1 -  q $. 
\end{remark}

\begin{remark}[Communication complexity]
The total number of communication rounds $TS$ is given by
$\widetilde O \left( T +  \left(L/K + \zeta+ L/\sqrt{Kb} + \sqrt{T}L/\sqrt{KPb}\right)(f(\widetilde x_0) - f(x_*))/\varepsilon^2\right)$. 
Given local computation budget $\mathcal B$, we set $T := \Theta(1 + n/(\mathcal B P))$ and $Kb := \Theta(\mathcal B)$ with $b \leq \Theta(\sqrt{\mathcal B})$. Then, we have the averaged number of local computations per communication round $Kb + n/(PT) = \Theta(\mathcal B)$ and the communication complexity $\Theta(TS)$ with budget $\mathcal B$ becomes 
$${\textstyle{\widetilde O\left(1 + \frac{n}{\mathcal B P} + \frac{L}{\sqrt{\mathcal B}\varepsilon^2} + \frac{\sqrt{n}L}{\mathcal BP\varepsilon^2} +  \frac{\zeta}{\varepsilon^2}\right).}}$$
This implies that for $\zeta = o(L)$ the communication complexity is strictly smaller than the one of minibatch SGD $\widetilde O(1 + L/\varepsilon^2 + G^2/(\mathcal B P \varepsilon^4))$. 
Note that the rate matches to the one of  BVR-L-SGD \cite{murata2021bias}. Hence, our method finds second-order optimal points without hurting communication efficiency of the state-of-the-art first-order optimality guaranteed method. Furthermore, when $\mathcal B \to \infty$, we have $\widetilde \Theta(1 + \zeta/\varepsilon^2)$, that goes to $\widetilde \Theta(1)$ as $\zeta \to 0$. 
\end{remark}
In summary, BVR-L-PSGD enjoys the desirable  properties (i) and (ii) described in Section \ref{sec: intro}. 


\section{Numerical Resutls}
\label{sec: experiments}
In this section, we give some experimental results to verify our theoretical findings. \par
{\bf{Data Preparation.}} We artificially generated heterogeneous local datasets from  CIFAR10\footnotemark\footnotetext{\url{https://www.cs.toronto.edu/~kriz/cifar.html}.} dataset. The data preparation procedure is completely in accordance with \cite{murata2021bias} and the details are found in \cite{murata2021bias}. We set homogeneity parameter $q$ to $0.35$, which captures how similar the local datasets are ($q = 0.1$ corresponds to I.I.D. case and higher $q$ does to higher heterogeneity). 
\par
{\bf{Model. }}We conducted our experiments using a two-hidden layers fully connected neural network with $100$ hidden units and softplus activation. For loss function, we used the standard cross-entropy loss. We initialized parameters by uniformly sampling the parameters from $[-1/100, 1/100]$. 
\par
{\bf{Implemented Algorithms. }} Minibatch SGD, Noisy Minibatch SGD, BVR-L-SGD \cite{murata2021bias} and our proposed BVR-L-PSGD were implemented. We set $K = 64$ and $b = 16$, and thus $\mathcal B = 1024$. For BVR-L-PSGD, the noise radius was tuned from $r \in \{0.5, 2.5, 12.5\}$. For each algorithm, we tuned learning rate $\eta$ from $\{0.005, 0.01, 0.05, 0.1, 0.5, 1.0\}$. The details of the tuning procedure are found in the supplementary material.
\par

{\bf{Evaluation. }}
We compared the implemented algorithms using six criteria of train gradient norm $\|\nabla f(x)\|$; train loss; train accuracy; test gradient norm; test loss and test accuracy against the number of communication rounds. The total number of communication rounds was fixed to $1,000$ for each algorithm. 
We independently repeated the experiments $5$ times and report the mean and standard deviation of the above criteria. 
Due to the space limitation, we will only report train gradient norm, train loss and test accuracy in the main paper. The full results are found in the supplementary material. 
\par

{\bf{Results. }} 
\begin{figure}[t]
\begin{subfigmatrix}{3}
\subfigure[Train Gradient Norm]{\includegraphics[width=4.6cm]{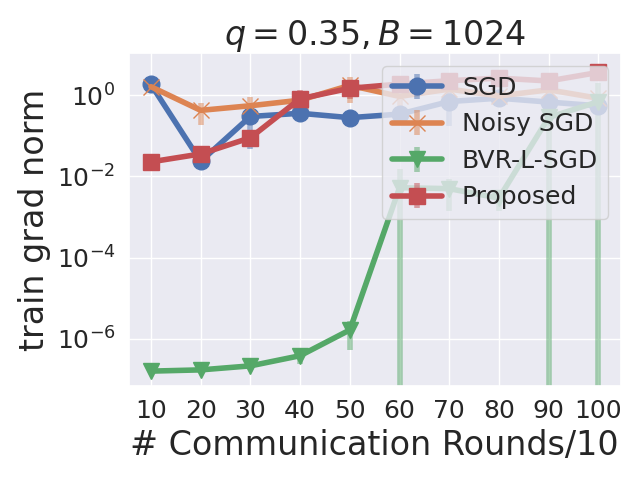}}
\subfigure[Train Loss]{\includegraphics[width=4.6cm]{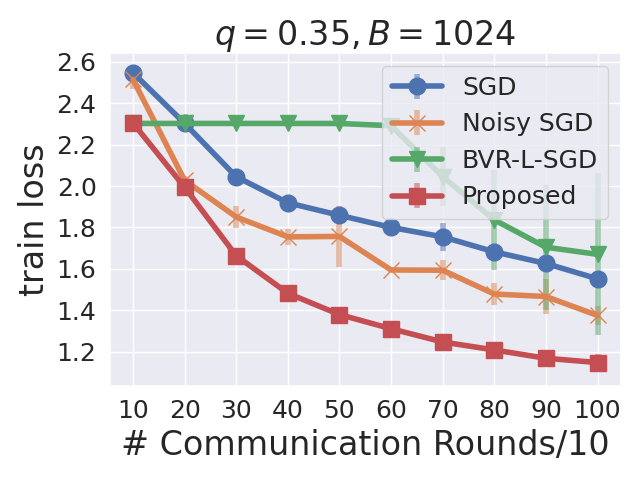}}
\subfigure[Test Accuracy]{\includegraphics[width=4.6cm]{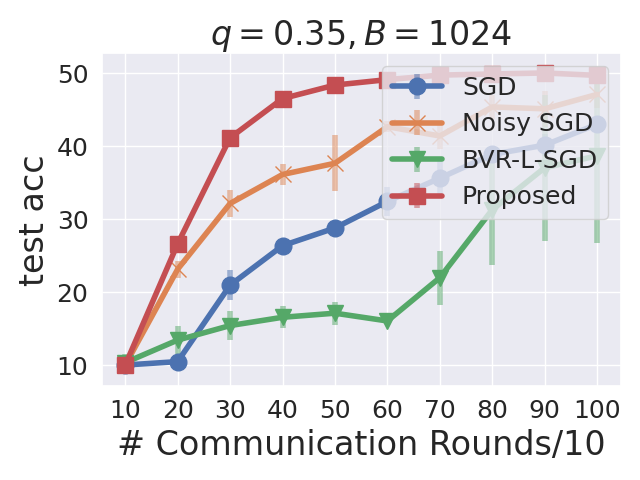}}
\end{subfigmatrix}
\caption{Comparison of (a) train gradient norm; (b) train loss; and (iii) test accuracy against the number of communication rounds for a three layered DNN on heterogeneous CIFAR10. }
\label{fig: comp}
\end{figure}
Figure \ref{fig: comp} shows the performances of BVR-L-SGD and our proposed algorithm. We can see that the both algorithms got stuck at a small gradient norm region in initial rounds. After that BVR-L-SGD showed unstable convergence and took a lot of time to escape the stucked region. In contrast, our proposed method efficiently escaped the stucked region and consistently achieves better train loss and test accuracy than BVR-L-SGD. Also, our method consistently outperformed Minibatch SGD and Noisy Minibatch SGD.

\section{Conclusion}
In this paper, we have studied a new local algorithm called Bias-Variance Reduced Local Perturbed SGD (BVR-L-PSGD) based on a combination of the bias-variance reduced gradient estimator with parameter perturbation to efficiently find second-order optimal points in centralized nonconvex distributed optimization. 
We have shown that BVR-L-PSGD enjoys second-order optimality without hurting the best known communication complexity for first-order optimality guarantees. Particularly, the communication complexity is better than non-local methods when Hessian heterogeneity $\zeta$ of local datasets is smaller than the smoothness of the local loss $L$ in order sense. Also, for sufficiently large $\mathcal B$, the communication complexity of our method approaches to $\widetilde \Theta(1)$ when the local datasets heterogeneity $\zeta$ goes to zero. The numerical results have validated our theoretical findings.

\section*{Acknowledgement}
TS was partially supported by JSPS KAKENHI (20H00576) and JST CREST. The authors would like to thank Kazusato Oko for his helpful advice. 

\bibliographystyle{plain}
\bibliography{main}

\begin{thebibliography}{10}

\bibitem{allen2017neon2}
Zeyuan Allen-Zhu and Yuanzhi Li.
\newblock Neon2: Finding local minima via first-order oracles.
\newblock {\em arXiv preprint arXiv:1711.06673}, 2017.

\bibitem{das2020faster}
Rudrajit Das, Anish Acharya, Abolfazl Hashemi, Sujay Sanghavi, Inderjit~S
  Dhillon, and Ufuk Topcu.
\newblock Faster non-convex federated learning via global and local momentum.
\newblock {\em arXiv preprint arXiv:2012.04061}, 2020.

\bibitem{fang2018spider}
Cong Fang, Chris~Junchi Li, Zhouchen Lin, and Tong Zhang.
\newblock Spider: Near-optimal non-convex optimization via stochastic path
  integrated differential estimator.
\newblock {\em arXiv preprint arXiv:1807.01695}, 2018.

\bibitem{ge2015escaping}
Rong Ge, Furong Huang, Chi Jin, and Yang Yuan.
\newblock Escaping from saddle points—online stochastic gradient for tensor
  decomposition.
\newblock In {\em Conference on learning theory}, pages 797--842. PMLR, 2015.

\bibitem{ge2019stabilized}
Rong Ge, Zhize Li, Weiyao Wang, and Xiang Wang.
\newblock Stabilized svrg: Simple variance reduction for nonconvex
  optimization.
\newblock In {\em Conference on learning theory}, pages 1394--1448. PMLR, 2019.

\bibitem{haddadpour2019local}
Farzin Haddadpour, Mohammad~Mahdi Kamani, Mehrdad Mahdavi, and Viveck~R
  Cadambe.
\newblock Local sgd with periodic averaging: Tighter analysis and adaptive
  synchronization.
\newblock {\em arXiv preprint arXiv:1910.13598}, 2019.

\bibitem{haddadpour2019convergence}
Farzin Haddadpour and Mehrdad Mahdavi.
\newblock On the convergence of local descent methods in federated learning.
\newblock {\em arXiv preprint arXiv:1910.14425}, 2019.

\bibitem{jin2017escape}
Chi Jin, Rong Ge, Praneeth Netrapalli, Sham~M Kakade, and Michael~I Jordan.
\newblock How to escape saddle points efficiently.
\newblock In {\em International Conference on Machine Learning}, pages
  1724--1732. PMLR, 2017.

\bibitem{jin2019short}
Chi Jin, Praneeth Netrapalli, Rong Ge, Sham~M Kakade, and Michael~I Jordan.
\newblock A short note on concentration inequalities for random vectors with
  subgaussian norm.
\newblock {\em arXiv preprint arXiv:1902.03736}, 2019.

\bibitem{jin2021nonconvex}
Chi Jin, Praneeth Netrapalli, Rong Ge, Sham~M Kakade, and Michael~I Jordan.
\newblock On nonconvex optimization for machine learning: Gradients,
  stochasticity, and saddle points.
\newblock {\em Journal of the ACM (JACM)}, 68(2):1--29, 2021.

\bibitem{johnson2013accelerating}
Rie Johnson and Tong Zhang.
\newblock Accelerating stochastic gradient descent using predictive variance
  reduction.
\newblock {\em Advances in neural information processing systems}, 26:315--323,
  2013.

\bibitem{karimireddy2020mime}
Sai~Praneeth Karimireddy, Martin Jaggi, Satyen Kale, Mehryar Mohri, Sashank~J
  Reddi, Sebastian~U Stich, and Ananda~Theertha Suresh.
\newblock Mime: Mimicking centralized stochastic algorithms in federated
  learning.
\newblock {\em arXiv preprint arXiv:2008.03606}, 2020.

\bibitem{karimireddy2020scaffold}
Sai~Praneeth Karimireddy, Satyen Kale, Mehryar Mohri, Sashank Reddi, Sebastian
  Stich, and Ananda~Theertha Suresh.
\newblock Scaffold: Stochastic controlled averaging for federated learning.
\newblock In {\em International Conference on Machine Learning}, pages
  5132--5143. PMLR, 2020.

\bibitem{khaled2020tighter}
Ahmed Khaled, Konstantin Mishchenko, and Peter Richt{\'a}rik.
\newblock Tighter theory for local sgd on identical and heterogeneous data.
\newblock In {\em International Conference on Artificial Intelligence and
  Statistics}, pages 4519--4529. PMLR, 2020.

\bibitem{khanduriachieving}
Prashant Khanduri, Pranay Sharma, Haibo Yang, Mingyi Hong, Jia Liu, Ketan
  Rajawat, and Pramod~K Varshney.
\newblock Achieving optimal sample and communication complexities for non-iid
  federated learning.
\newblock 2021.

\bibitem{koloskova2020unified}
Anastasia Koloskova, Nicolas Loizou, Sadra Boreiri, Martin Jaggi, and Sebastian
  Stich.
\newblock A unified theory of decentralized sgd with changing topology and
  local updates.
\newblock In {\em International Conference on Machine Learning}, pages
  5381--5393. PMLR, 2020.

\bibitem{konevcny2015federated}
Jakub Kone{\v{c}}n{\`y}, Brendan McMahan, and Daniel Ramage.
\newblock Federated optimization: Distributed optimization beyond the
  datacenter.
\newblock {\em arXiv preprint arXiv:1511.03575}, 2015.

\bibitem{li2019ssrgd}
Zhize Li.
\newblock Ssrgd: Simple stochastic recursive gradient descent for escaping
  saddle points.
\newblock {\em arXiv preprint arXiv:1904.09265}, 2019.

\bibitem{mcmahan2017communication}
Brendan McMahan, Eider Moore, Daniel Ramage, Seth Hampson, and Blaise~Aguera
  y~Arcas.
\newblock Communication-efficient learning of deep networks from decentralized
  data.
\newblock In {\em Artificial Intelligence and Statistics}, pages 1273--1282.
  PMLR, 2017.

\bibitem{murata2021bias}
Tomoya Murata and Taiji Suzuki.
\newblock Bias-variance reduced local sgd for less heterogeneous federated
  learning.
\newblock In {\em International Conference on Machine Learning}, pages
  7872--7881. PMLR, 2021.

\bibitem{nguyen2017sarah}
Lam~M Nguyen, Jie Liu, Katya Scheinberg, and Martin Tak{\'a}{\v{c}}.
\newblock Sarah: A novel method for machine learning problems using stochastic
  recursive gradient.
\newblock In {\em International Conference on Machine Learning}, pages
  2613--2621. PMLR, 2017.

\bibitem{nguyen2019finite}
Lam~M Nguyen, Marten van Dijk, Dzung~T Phan, Phuong~Ha Nguyen, Tsui-Wei Weng,
  and Jayant~R Kalagnanam.
\newblock Finite-sum smooth optimization with sarah.
\newblock {\em arXiv preprint arXiv:1901.07648}, 2019.

\bibitem{reddi2016aide}
Sashank~J Reddi, Jakub Kone{\v{c}}n{\`y}, Peter Richt{\'a}rik, Barnab{\'a}s
  P{\'o}cz{\'o}s, and Alex Smola.
\newblock Aide: Fast and communication efficient distributed optimization.
\newblock {\em arXiv preprint arXiv:1608.06879}, 2016.

\bibitem{sharma2019parallel}
Pranay Sharma, Swatantra Kafle, Prashant Khanduri, Saikiran Bulusu, Ketan
  Rajawat, and Pramod~K Varshney.
\newblock Parallel restarted spider--communication efficient distributed
  nonconvex optimization with optimal computation complexity.
\newblock {\em arXiv preprint arXiv:1912.06036}, 2019.

\bibitem{shokri2015privacy}
Reza Shokri and Vitaly Shmatikov.
\newblock Privacy-preserving deep learning.
\newblock In {\em Proceedings of the 22nd ACM SIGSAC conference on computer and
  communications security}, pages 1310--1321, 2015.

\bibitem{stich2018local}
Sebastian~U Stich.
\newblock Local sgd converges fast and communicates little.
\newblock {\em arXiv preprint arXiv:1805.09767}, 2018.

\bibitem{vlaski2020second}
Stefan Vlaski, Elsa Rizk, and Ali~H Sayed.
\newblock Second-order guarantees in federated learning.
\newblock In {\em 2020 54th Asilomar Conference on Signals, Systems, and
  Computers}, pages 915--922. IEEE, 2020.

\bibitem{woodworth2020minibatch}
Blake Woodworth, Kumar~Kshitij Patel, and Nathan Srebro.
\newblock Minibatch vs local sgd for heterogeneous distributed learning.
\newblock {\em arXiv preprint arXiv:2006.04735}, 2020.

\bibitem{woodworth2020local}
Blake Woodworth, Kumar~Kshitij Patel, Sebastian Stich, Zhen Dai, Brian Bullins,
  Brendan Mcmahan, Ohad Shamir, and Nathan Srebro.
\newblock Is local sgd better than minibatch sgd?
\newblock In {\em International Conference on Machine Learning}, pages
  10334--10343. PMLR, 2020.

\bibitem{xu2017first}
Yi~Xu, Rong Jin, and Tianbao Yang.
\newblock First-order stochastic algorithms for escaping from saddle points in
  almost linear time.
\newblock {\em arXiv preprint arXiv:1711.01944}, 2017.

\bibitem{yu2019parallel}
Hao Yu, Sen Yang, and Shenghuo Zhu.
\newblock Parallel restarted sgd with faster convergence and less
  communication: Demystifying why model averaging works for deep learning.
\newblock In {\em Proceedings of the AAAI Conference on Artificial
  Intelligence}, volume~33, pages 5693--5700, 2019.

\end{thebibliography}

\clearpage

\appendix

\section{Supplementary Material for Numerical Results}
In this section, we give additional information and numerical results that complement the contents in Section \ref{sec: experiments}.
\subsection*{Parameter Tuning}
For the implemented algorithms, learning rate $\eta$ was tuned. Also, for Noisy Minibatch SGD and  BVR-L-PSGD, noise radius $r$ was also tuned. We ran each algorithm for all the patterns of the tuning parameters and chose the ones that maximized the minimum train accuracy.

\subsection*{Additional Numerical Results}
Here, we provide the full results of our numerical experiments. Figures \ref{app_fig: q=0.1} and \ref{app_fig: q=0.35} show the comparisons of the six criterion, i.e., train gradient norm, train loss, train accuracy, test gradient norm, test loss and test accuracy with fixed local computation budget $\mathcal B = 1,024$ under $q = 0.1$ (I.I.D. case) and $q = 0.35$ (heterogeneous case) respectively.

\subsection*{Computing Infrastructures}
\begin{itemize}
    \item OS: Ubuntu 16.04.6
    \item CPU: Intel(R) Xeon(R) CPU E5-2680 v4 @ 2.40GHz
    \item CPU Memory: 128 GB.
    \item GPU: NVIDIA Tesla P100.
    \item GPU Memory: 16 GB
    \item Programming language: Python 3.7.3.
    \item Deep learning framework: Pytorch 1.3.1.
\end{itemize}

\begin{figure}[h]
\begin{subfigmatrix}{3}
\subfigure[Train Graident Norm]{\includegraphics[width=4.6cm]{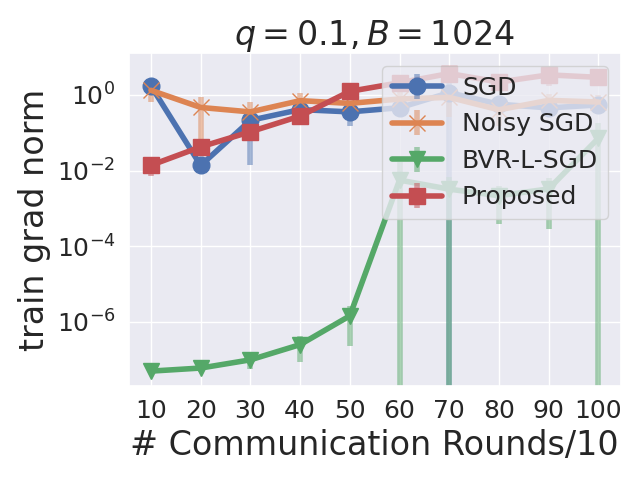}}
\subfigure[Train Loss]{\includegraphics[width=4.6cm]{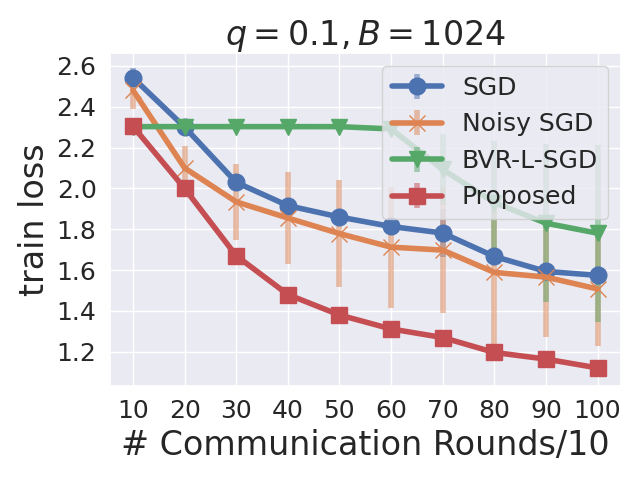}}
\subfigure[Train Accuracy]{\includegraphics[width=4.6cm]{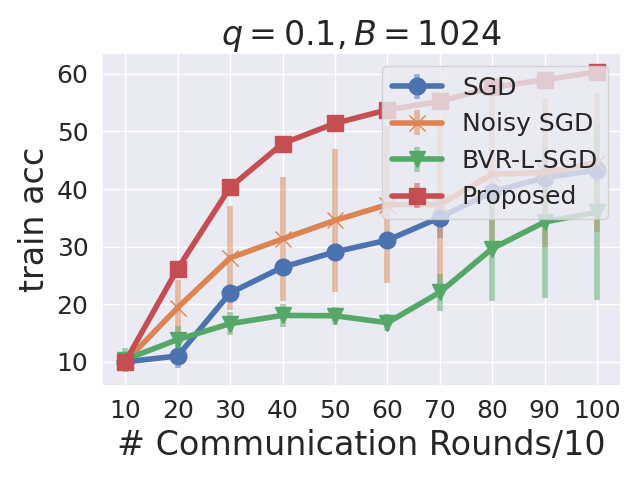}}
\end{subfigmatrix}
\begin{subfigmatrix}{3}
\subfigure[Test Gradient Norm]{\includegraphics[width=4.6cm]{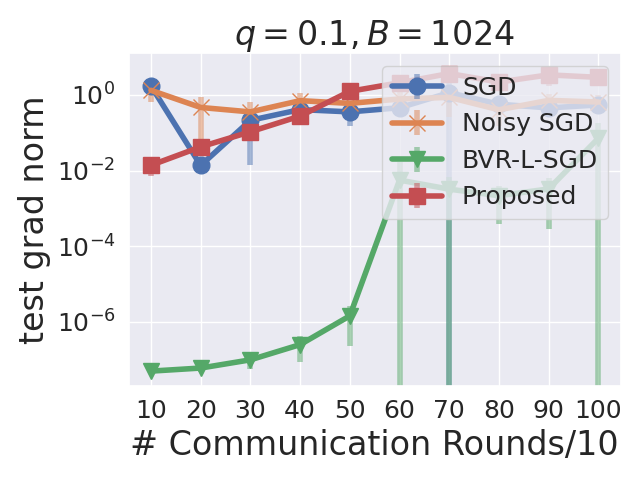}}
\subfigure[Test Loss]{\includegraphics[width=4.6cm]{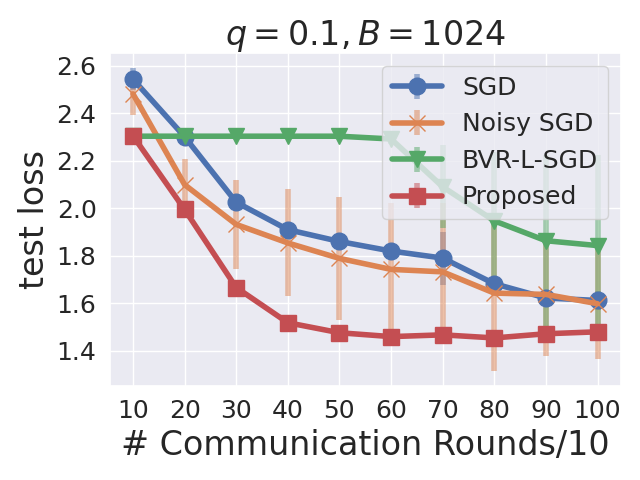}}
\subfigure[Test Accuracy]{\includegraphics[width=4.6cm]{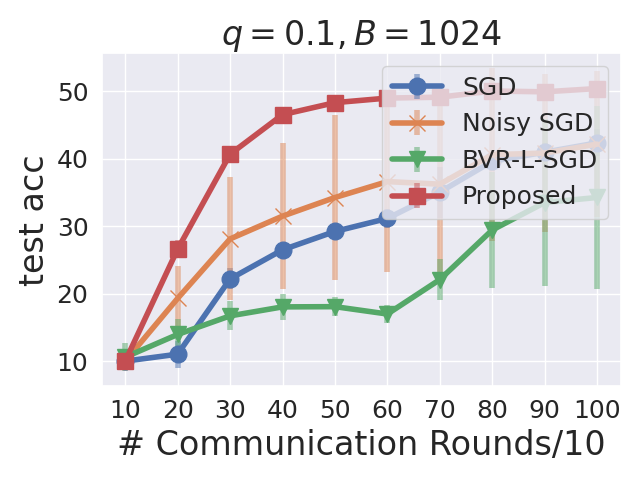}}
\end{subfigmatrix}
\caption{Comparison of the six criterion  against the number of communication rounds for a three layered DNN on I.I.D. CIFAR10 with $q = 0.1$. }
\label{app_fig: q=0.1}
\end{figure}

\begin{figure}[h]
\begin{subfigmatrix}{3}
\subfigure[Train Gradient Norm]{\includegraphics[width=4.6cm]{figs/q=0.35/train_grad_norm.png}}
\subfigure[Train Loss]{\includegraphics[width=4.6cm]{figs/q=0.35/train_loss.png}}
\subfigure[Train Accuracy]{\includegraphics[width=4.6cm]{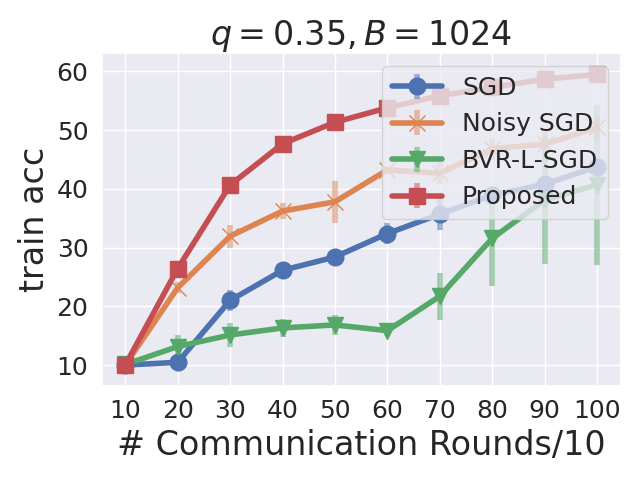}}
\end{subfigmatrix}
\begin{subfigmatrix}{3}
\subfigure[Test Gradient Norm]{\includegraphics[width=4.6cm]{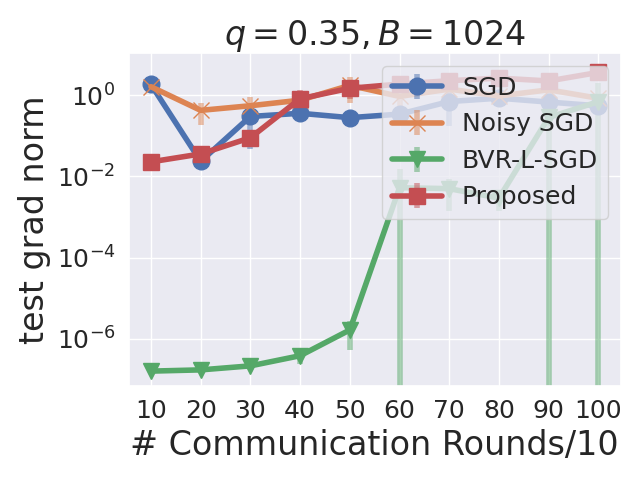}}
\subfigure[Test Loss]{\includegraphics[width=4.6cm]{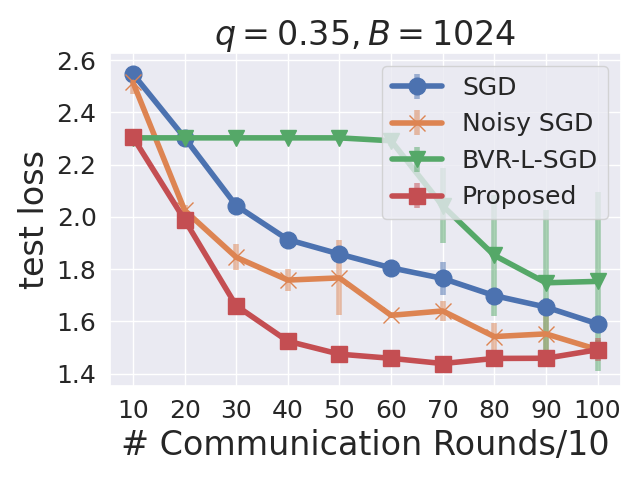}}
\subfigure[Test Accuracy]{\includegraphics[width=4.6cm]{figs/q=0.35/test_acc.png}}
\end{subfigmatrix}
\caption{Comparison of the six criterion  against the number of communication rounds for a three layered DNN on heterogeneous CIFAR10 with $q = 0.35$. }
\label{app_fig: q=0.35}
\end{figure}

\clearpage

\section{Convergence Analysis}
In this section, complete analysis of BVR-L-PSGD is provided. Particularly, detailed proofs of Proposition \ref{prop: first_order_optimality}, Corollary \ref{cor: first_order_optimality} (Subsection \ref{app: subsec: first_order}), Proposition \ref{prop: f_diff_decrease} (Subsection \ref{app: subsec: escapling_saddle} and Theorem \ref{thm: main} (Subsection \ref{app: subsec: main_thm}) are given. 

\subsection{Miscellaneous Results}
\begin{lemma}\label{lem: op_norm_bound}
Let $A \in \mathbb{R}^{d\times d}$ with the smallest and largest eigenvalues $\lambda_\mathrm{min} \in (-\infty, 0)$ and $ \lambda_\mathrm{max}\in [0, 1)$ respectively.  Then, for $J \in \mathbb{N}\cup\{0\}$, it holds that
\begin{align*}
    \|A(1-A)^J\| \leq (-\lambda_\mathrm{min})(1-\lambda_\mathrm{min})^J + \frac{e}{J+1}.
\end{align*}
\end{lemma}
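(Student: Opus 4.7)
The plan is to reduce the operator-norm bound to a scalar optimization over the spectrum of $A$. Since $A$ is characterized by its minimum and maximum eigenvalues and the statement compares two scalar quantities, I will interpret $A$ as symmetric (so that the operator norm coincides with the spectral radius of any polynomial in $A$). By the spectral theorem, $A$ is orthogonally diagonalizable, and for any polynomial $p$, $\|p(A)\| = \max_{\lambda \in \mathrm{spec}(A)} |p(\lambda)|$. Applying this to $p(t) = t(1-t)^J$, we obtain
\begin{equation*}
    \|A(1-A)^J\| \;=\; \max_{\lambda \in \mathrm{spec}(A)} \bigl|\lambda(1-\lambda)^J\bigr|,
\end{equation*}
and the task reduces to bounding the scalar function $g(\lambda) := \lambda(1-\lambda)^J$ on the interval $[\lambda_\mathrm{min}, \lambda_\mathrm{max}] \subseteq (-\infty, 0] \cup [0, 1)$.

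Next, I will split the analysis according to the sign of $\lambda$. For $\lambda \in [\lambda_\mathrm{min}, 0]$, we have $g(\lambda) \le 0$, and the map $x \mapsto x(1+x)^J$ (with $x = -\lambda \ge 0$) is strictly increasing on $[0, \infty)$, as its derivative $(1+x)^{J-1}\bigl(1 + (J+1)x\bigr)$ is positive. Hence on this piece the maximum of $|g|$ is attained at $\lambda = \lambda_\mathrm{min}$ and equals $(-\lambda_\mathrm{min})(1-\lambda_\mathrm{min})^J$, which is the first term of the claimed bound. For $\lambda \in [0, \lambda_\mathrm{max}] \subseteq [0, 1)$, we have $g(\lambda) \ge 0$ and
\begin{equation*}
    g'(\lambda) = (1-\lambda)^{J-1}\bigl(1 - (J+1)\lambda\bigr),
\end{equation*}
so the unique critical point is $\lambda^* = 1/(J+1)$, giving the maximum value
\begin{equation*}
    g(\lambda^*) = \tfrac{1}{J+1}\bigl(1 - \tfrac{1}{J+1}\bigr)^J \;\le\; \tfrac{1}{J+1} \;\le\; \tfrac{e}{J+1},
\end{equation*}
which is the second term of the claimed bound (the factor $e$ is not needed for this piece but is carried for uniformity/robustness to trivial edge cases such as $J=0$).

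Finally I will combine the two pieces. The maximum of $|g|$ over the full spectrum is the maximum of the two piecewise bounds, and since both bounds are nonnegative, the maximum is dominated by their sum, yielding
\begin{equation*}
    \|A(1-A)^J\| \;\le\; (-\lambda_\mathrm{min})(1-\lambda_\mathrm{min})^J + \tfrac{e}{J+1}.
\end{equation*}
There is no real obstacle here: the whole argument is an elementary calculus exercise plus a single invocation of the spectral theorem. The only stylistic care point is handling the boundary case $J=0$ (where $(1-\lambda)^{J-1}$ should not appear explicitly), which is trivially checked since $g(\lambda) = \lambda$ on $[0, 1)$ gives $g \le 1 \le e$.
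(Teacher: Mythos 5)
Your proposal is correct and follows essentially the same route as the paper's proof: reduce $\|A(1-A)^J\|$ to $\sup_{\sigma\in[\lambda_\mathrm{min},\lambda_\mathrm{max}]}|\sigma(1-\sigma)^J|$ (implicitly treating $A$ as symmetric), bound the negative part by monotonicity to get $(-\lambda_\mathrm{min})(1-\lambda_\mathrm{min})^J$, and bound the positive part via the critical point $1/(J+1)$ to get $e/(J+1)$. Your handling of the $J=0$ edge case and the explicit remark about the spectral theorem are, if anything, slightly more careful than the paper's.
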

\begin{proof}
First, when $J = 0$, trivially $\|A(1-A)^J\| \leq \mathrm{max}\{-\lambda_\mathrm{min}, \lambda_\mathrm{max}\} < (-\lambda_\mathrm{min}) +  e$. Thus, we assume $J > 0$. Note that $\|A(1-A)^J\| = \mathrm{sup}_{\sigma \in [\lambda_\mathrm{min}, \lambda_\mathrm{max}]} |\sigma(1-\sigma)^J|$. We consider the two cases $\sigma \in [\lambda_\mathrm{min}, 0)$ and $\sigma \in [0, \lambda_\mathrm{max}]$. 

In the former case, $h(\sigma):=|\sigma(1-\sigma)^J| = -\sigma(1-\sigma)^J$ is monotonically decreasing function on $(-\infty, 0)$ because the derivative function $h'(\sigma) = -(1-\sigma)^J + J\sigma(1-\sigma)^{J-1} = (1-\sigma)^{J-1}((J+1)\sigma - 1) < 0$, and hence $\mathrm{sup}_{\sigma \in [\lambda_\mathrm{min}, 0)} h(\sigma)\leq (-\lambda_\mathrm{min})(1-\lambda_\mathrm{min})^J$. 

In the latter case, $h(\sigma) = \sigma(1-\sigma)^J$ has the derivative function $h'(\sigma) = (1-\sigma)^{J} - J\sigma(1-\sigma)^{J-1} = (1-\sigma)^{J-1}(1 - (J+1)\sigma)$. Thus, it holds that $h'(1/(J+1)) = 0$, $h'(\sigma) > 0$ for $\sigma \in [0, 1/(J+1))$ and $h'(\sigma) < 0$ for $\sigma \in (1/(J+1), 1)$. Hence, for  $\sigma \in [0, \lambda_\mathrm{max}]$ with $\lambda_\mathrm{max} \in [0, 1)$, $h(\sigma) \leq h(1/(J+1)) \leq e/(J+1)$.

In summary, we have shown that $\mathrm{sup}_{\sigma \in [\lambda_\mathrm{min}, \lambda_\mathrm{max}]} h(\sigma)\leq (-\lambda_\mathrm{min})(1-\lambda_\mathrm{min})^J + e/(J+1)$. This is the desired result.
\end{proof}

\subsection{Concentration Inequalities}

\begin{lemma}[Corollary 8 in \cite{jin2019short}]
\label{lem: martingale_concentration_conditioned}
Let $X_1, \ldots, X_n$ be random vectors in $\mathbb{R}^d$. Suppose that $\{X_i\}_{i=1}^n$ and corresponding filtrations $\{\mathfrak F_{i}\}_{i=1}^n$ satisfies the following conditions:
\begin{align*}
    \mathbb{E}[X_i\mid \mathfrak F_{i-1}] = 0 \text{ and } \mathbb{P}(\|X_i\| \geq s\mid \mathfrak F_{i-1}) \leq 2 e^{-\frac{s^2}{2\sigma_i^2}}, \forall s \in \mathbb{R}, \forall i \in [n]
\end{align*}
for random variables $\{\sigma_i\}_{i=1}^n$ with $\sigma_i \in \mathfrak F_{i-1}$ ($i \in [n]$).
Then, for any $ q  \in (0, 1)$ and $A > a > 0$, with probability at least $1 -  q $ it holds that
\begin{align*}
    \sum_{i=1}^n \sigma_i^2 \geq A \text{ or } \left\|\sum_{i=1}^n X_i\right\| \leq c\sqrt{\mathrm{max}\left\{\sum_{i=1}^n \sigma_i^2, a\right\}\left(\mathrm{log}\frac{2d}{q } + \mathrm{log}\mathrm{log}\frac{A}{a}\right)}
\end{align*}
for some constant $c > 0$.
\end{lemma}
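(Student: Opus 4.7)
The plan is to establish the inequality by combining three ingredients: (a) a standard vector-valued Azuma--Hoeffding bound for martingale differences with a \emph{deterministic} upper bound on the variance proxy sum; (b) a stopping-time argument that reduces the random-variance case to the deterministic one; and (c) a geometric peeling over the range $[a,A]$ that produces the $\max\{\sum_i\sigma_i^2,\,a\}$ factor and the $\log\log(A/a)$ penalty.

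First I would invoke (or briefly derive) the deterministic version: if $\sum_{i=1}^n \sigma_i^2 \leq R$ almost surely, then $\bigl\|\sum_{i=1}^n X_i\bigr\| \leq c\sqrt{R\log(2d/q)}$ with probability at least $1-q$. This follows from a scalar Azuma applied to $\langle u,\sum_i X_i\rangle$ over a $1/2$-net of the unit sphere (of size $\leq 5^d$) combined with a truncation step that turns the sub-Gaussian tail on $\|X_i\|$ into boundedness at scale $\sigma_i\sqrt{\log(\cdots)}$; the centering error from truncation is absorbed into the constant because $\mathbb{E}[X_i\mid\mathfrak F_{i-1}]=0$ and the tails are sub-Gaussian. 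Alternatively one can cite Hayes's vector Azuma inequality directly.

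Next, to handle random $\sigma_i^2$, I introduce the stopping time $\tau_R := \min\{k\in[n]:\sum_{i=1}^k \sigma_i^2 > R\}$ (with $\tau_R = n+1$ if the threshold is never exceeded). Because each $\sigma_i^2$ is $\mathfrak F_{i-1}$-measurable, $\tau_R$ is a stopping time, and the stopped sequence $\widetilde X_i := X_i\,\mathbf 1\{i\leq \tau_R\}$ is still a martingale difference sequence with respect to $\{\mathfrak F_i\}$, satisfying the same sub-Gaussian tail with proxy $\widetilde\sigma_i^2 := \sigma_i^2\,\mathbf 1\{i\leq \tau_R\}$ and $\sum_i \widetilde\sigma_i^2 \leq R$ almost surely. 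Applying the deterministic bound to $\{\widetilde X_i\}$ gives $\bigl\|\sum_{i=1}^n \widetilde X_i\bigr\| \leq c\sqrt{R\log(2d/q)}$ with probability $\geq 1-q$, and on the event $\{\sum_i\sigma_i^2 \leq R\}$ the stopped sum agrees with the full sum $\sum_i X_i$.

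Finally, I peel: let $J := \lceil \log_2(A/a)\rceil$ and set $R_j := a\cdot 2^j$ for $j=0,1,\dots,J$, so $R_0 = a$ and $R_J \geq A$. Apply the stopping-time bound at each level $R_j$ with confidence $q/(J+1)$; a union bound makes all $J+1$ bounds hold simultaneously with probability at least $1-q$. On the event $\{\sum_i\sigma_i^2 < A\}$, choose the smallest $j$ with $\sum_i\sigma_i^2 \leq R_j$, which satisfies $R_j \leq 2\max\{\sum_i\sigma_i^2,\,a\}$ by construction; substituting into the level-$j$ bound and noting $\log(2d(J+1)/q)\leq \log(2d/q)+\log\log(A/a)+O(1)$ yields the claimed inequality after enlarging $c$. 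On the complementary event $\{\sum_i\sigma_i^2\geq A\}$ the first disjunct of the conclusion holds trivially. The main obstacle is the vector Azuma step: obtaining the sharp $\log(2d)$ factor (rather than a linear $d$ dependence) requires careful handling of the sub-Gaussian truncation and the covering/MGF argument; once that lemma is in hand, the stopping time plus peeling assembly is essentially bookkeeping.
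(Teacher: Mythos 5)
The paper gives no proof of this lemma at all---it is imported verbatim as Corollary~8 of \cite{jin2019short}---so there is nothing in-paper to compare against; your reconstruction (a fixed-variance vector concentration bound, a stopping time to reduce the adapted random variance proxies to that case, and dyadic peeling over $[a,A]$ with a union bound to generate the $\max\{\sum_i\sigma_i^2,a\}$ factor and the $\log\log(A/a)$ penalty) is exactly the standard route to this statement and is sound. Two minor points to tighten: with $\widetilde X_i = X_i\mathbf{1}\{i\le\tau_R\}$ the stopped variance sum can overshoot $R$ by $\sigma_{\tau_R}^2$, so you should truncate at $i<\tau_R$ (the event $\{i<\tau_R\}$ is still $\mathfrak F_{i-1}$-measurable because each $\sigma_j$ with $j\le i$ is, and the stopped and unstopped sums still agree on $\{\sum_i\sigma_i^2\le R\}$); and the $1/2$-net argument you sketch yields a factor of $d$, not $\log(2d)$, inside the square root, so to get the stated dependence you genuinely need Hayes's vector Azuma or the matrix-dilation/Freedman route, as you yourself flag.
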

Note that if $X$ is bounded and centered random vector, i.e., $\|X\| \leq \sigma$ a.s.  and $\mathbb{E}[X] = 0$, it holds that $\mathbb{P}(\|X\|\geq s) \leq 2e^{-s^2/2\sigma^2}$ for every $s \in \mathbb{R}$. Hence, $\|X_i\| \leq \sigma_i^2$ a.s. and $\mathbb{E}[X_i] = 0$ conditioned on $\mathfrak F_{i-1}$ is a sufficient condition for applying Lemma \ref{lem: martingale_concentration_conditioned}.

\subsection{Finding First-Order Stationary Points}
\label{app: subsec: first_order}

\subsection*{Proof of Proposition \ref{prop: first_order_optimality}}
We fix $k \in [K]\cup\{0\}$, $t \in [T-1]\cup\{0\}$ and $s \in [S-1]\cup\{0\}$. 
From $L$-smoothness of $f$, we have
\begin{align*}
    f(x_{I(k+1, t, s)}) \leq f(x_{I(k, t, s)}) + \langle \nabla f(x_{I(k, t, s)}), x_{I(k+1, t, s)} - x_{I(k, t, s)}\rangle + \frac{L}{2}\|x_{I(k+1, t, s)} - x_{I(k, t, s)}\|^2.
\end{align*}
From this inequality, we have
\begin{align}
    f(x_{I(k+1, t, s)}) \leq&\ f(x_{I(k, t, s)})
    + \langle \nabla f(x_{I(k, t, s)}) - v_{I(k, t, s)} + \xi_{I(k, t, s)}, x_{I(k+1, t, s)} - x_{I(k, t, s)}\rangle \notag \\
    &+ \langle v_{I(k, t, s)} - \xi_{I(k, t, s)}, x_{I(k+1, t, s)} - x_{I(k, t, s)}\rangle + \frac{L}{2}\|x_{I(k+1, t, s)} - x_{I(k, t, s)}\|^2 \notag \\
    =&\ f(x_{I(k, t, s)})
    + \langle \nabla f(x_{I(k, t, s)}) - v_{I(k, t, s)} + \xi_{I(k, t, s)}, x_{I(k+1, t, s)} - x_{I(k, t, s)}\rangle \notag\\
    &- \left(\frac{1}{\eta} - \frac{L}{2}\right)\|x_{I(k+1, t, s)} - x_{I(k, t, s)}\|^2 \notag\\
    =& f(x_{I(k, t, s)})
    + \frac{\eta}{2}\|v_{I(k, t, s)} - \xi_{I(k, t, s)} - \nabla f(x_{I(k, t, s)})\|^2 - \frac{\eta}{2}\|\nabla f(x_{I(k, t, s)})\|^2 \notag\\ 
    &+ \frac{1}{2\eta}\|x_{I(k+1, t, s)} - x_{I(k, t, s)}\|^2 - \left(\frac{1}{\eta} - \frac{L}{2}\right)\|x_{I(k+1, t, s)} - x_{I(k, t, s)}\|^2 \notag\\
    =&\ f(x_{I(k, t, s)})
    + \frac{\eta}{2}\|v_{I(k, t, s)} - \xi_{I(k, t, s)} - \nabla f(x_{I(k, t, s)})\|^2 - \frac{\eta}{2}\|\nabla f(x_{I(k, t, s)})\|^2 \notag\\
    &- \left(\frac{1}{2\eta} - \frac{L}{2}\right)\|x_{I(k+1, t, s)} - x_{I(k, t, s)}\|^2 \notag\\
    \leq&\  f(x_{I(k, t, s)})
    + \eta\|v_{I(k, t, s)} - \nabla f(x_{I(k, t, s)})\|^2 - \frac{\eta}{2}\|\nabla f(x_{I(k, t, s)})\|^2 \notag\\
    &- \left(\frac{1}{2\eta} - \frac{L}{2}\right)\|x_{I(k+1, t, s)} - x_{I(k, t, s)}\|^2 + \eta\|\xi_I(k, t, s)\|^2\notag\\
    \leq&\  f(x_{I(k, t, s)})
    + \eta\|v_{I(k, t, s)} - \nabla f(x_{I(k, t, s)})\|^2 - \frac{\eta}{2}\|\nabla f(x_{I(k, t, s)})\|^2 \notag \\
    &- \left(\frac{1}{2\eta} - \frac{L}{2}\right)\|x_{I(k+1, t, s)} - x_{I(k, t, s)}\|^2 + \eta r^2 \label{ineq: descent_one_iter}.
\end{align}
Here, for the first equality we used the fact $v_{I(k, t, s)} - \xi_{I(k, t, s)} = -(1/\eta)(x_{I(k+1, t, s)} - x_{I(k, t, s)})$. The second equality follows from the facts $v_{I(k, t, s)} - \xi_{I(k, t, s)} = (1/\eta)(x_{I(k+1, t, s)} - x_{I(k, t, s)})$ and $\langle a - b, -b\rangle = (1/2)(\|a - b\|^2 - \|a\|^2 + \|b\|^2)$ for any $a, b \in \mathbb{R}^d$. For the second inequality, we used the relation $\|a + b\| \leq 2(\|a\|^2 + \|b\|^2)$ for any $a, b \in \mathbb{R}^d$. The last inequality holds from the definition of $\xi_{I(k, t, s)}$. \par

Thus, for every $k, k_0 \in [K-1]$, $t, t_0 \in [T-1]$ and $s, s_0 \in [S-1]$ ($I(k, t, s) \geq I(k_0, t_0, s_0)$), we have

\begin{align}
    f(x_{I(k, t, s)}) \leq&\  f(x_{I(k_0, t_0, s_0)})
    + \eta \sum_{i=I(k_0, t_0, s_0)}^{I(k, t, s)-1} \|v_i - \nabla f(x_{i})\|^2 \notag \\
    &- \frac{\eta}{2} \sum_{i=I(k_0, t_0, s_0)}^{I(k, t, s)-1} \|\nabla f(x_{i})\|^2 - \left(\frac{1}{2\eta} - \frac{L}{2}\right)\sum_{i=I(k_0, t_0, s_0)}^{I(k, t, s)-1} \|x_{i+1} - x_{i}\|^2 \notag \\
    &+ \eta (I(k, t, s) - I(k_0, t_0, s_0)) r^2 \label{ineq: descent_multi_iters}.
\end{align}

Now we bound the deviation $\|v_{I(k, t, s)} - \nabla f(x_{I(k, t, s)})\|^2$. Observe that
\begin{align*}
    v_{I(k, t, s)} - \nabla f(x_{I(k, t, s)}) =&\  g_{I(k, t, s)} - g_{I(k, t, s)}^{\mathrm{ref}} + v_{I(k-1,t, s)} - \nabla f(x_{I(k, t, s)}) \\
    =&\ g_{I(k, t, s)} - g_{I(k, t, s)}^{\mathrm{ref}} + \nabla f_{p_{t, s}}(x_{I(k-1, t, s)}) - \nabla f_{p_{t, s}}(x_{I(k, t, s)}) \\
    &+ \nabla f_{p_{t, s}}(x_{I(k, t, s)}) - \nabla f_{p_{t, s}}(x_{I(k-1, t, s)}) + \nabla f(x_{I(k-1, t, s)}) - \nabla f(x_{I(k, t, s)}) \\
    &+ v_{I(k-1,t, s)} - \nabla f(x_{I(k-1, t, s)}) \\
    =&\ \sum_{\kappa=0}^{k-1}(g_{I(\kappa+1, t, s)} - g_{I(\kappa+1, t, s)}^{\mathrm{ref}} + \nabla f_{p_{t, s}}(x_{I(\kappa, t, s)}) - \nabla f_{p_{t, s}}(x_{I(\kappa+1, t, s)})) \\
    &+ \sum_{\kappa=0}^{k-1}(\nabla f_{p_{t, s}}(x_{I(\kappa+1, t, s)}) - \nabla f_{p_{t, s}}(x_{I(\kappa, t, s)}) + \nabla f(x_{I(\kappa, t, s)}) - \nabla f(x_{I(\kappa+1, t, s)})) \\
    &+ v_{I(0, t, s)} - \nabla f(x_{I(0, t, s)}).
\end{align*}
Further, we have
\begin{align*}
    v_{I(0, t, s)} - \nabla f(x_{I(0, t, s)}) =&\ \frac{1}{P}\sum_{p=1}^P (g_{I(0, t, s)}^{(p)} - g_{I(0, t, s)}^{(p), \mathrm{ref}} + v_{I(0, t-1, s)} - \nabla f(x_{I(0, t, s)}) \\
    =&\ \frac{1}{P}\sum_{p=1}^P (g_{I(0, t, s)}^{(p)} - g_{I(0, t, s)}^{(p), \mathrm{ref}} + \nabla f(x_{I(0, t-1, s)}) - \nabla f(x_{I(0, t, s)})\\
    &+ v_{I(0, t-1, s)} - \nabla f(x_{I(0, t-1, s)}) \\
    =&\ \sum_{\tau=0}^{t-1}\frac{1}{P}\sum_{p=1}^P (g_{I(0, \tau+1, s)}^{(p)} - g_{I(0, \tau+1, s)}^{(p), \mathrm{ref}} + \nabla f(x_{I(0, \tau, s)}) - \nabla f(x_{I(0, \tau+1, s)}) \\
    &+ v_{I(0, 0, s)} - \nabla f(x_{I(0, 0, s)}).
\end{align*}
Note that the last term is exactly zero from the definition of $v_{I(0, 0, s)}$. 

We define 
\begin{align*}
\begin{cases}
    \alpha_{I(\kappa, t, s)} :=&\ g_{I(\kappa, t, s)} - g_{I(\kappa, t, s)}^{\mathrm{ref}} + \nabla f_{p_{t, s}}(x_{I(\kappa-1, t, s)}) - \nabla f_{p_{t, s}}(x_{I(\kappa, t, s)}), \\
    \beta_{I(\kappa, t, s)} :=&\ \nabla f_{p_{t, s}}(x_{I(\kappa, t, s)}) - \nabla f_{p_{t, s}}(x_{I(\kappa-1, t, s)}) + \nabla f(x_{I(\kappa-1, t, s)}) - \nabla f(x_{I(\kappa, t, s)}), \\
    \gamma_{I(0, \tau, s)} :=&\  \frac{1}{P}\sum_{p=1}^P (g_{I(0, \tau, s)}^{(p)} - g_{I(0, \tau, s)}^{(p), \mathrm{ref}} + \nabla f(x_{I(0, \tau-1, s)}) - \nabla f(x_{I(0, \tau, s)}),
\end{cases}
\end{align*}
and
\begin{align*}
\begin{cases}
    A_{I(k, t, s)} :=&\ \sum_{\kappa=0}^{k-1} \alpha_{I(\kappa+1, t, s)}, \\
    B_{I(k, t, s)} :=&\ \sum_{\kappa=0}^{k-1} \beta_{I(\kappa+1, t, s)}, \\
    C_{I(0, t, s)} :=&\ \sum_{\tau=0}^{t-1} \gamma_{I(0, \tau+1, s)}. \\
\end{cases}
\end{align*}
Note that $\mathbb{E}[A_{I(k, t, s)}] = \mathbb{E}[C_{I(k, t, s)}] = 0$. 
Using these definitions, we have
\begin{align*}
    \|v_{I(k, t, s)} - \nabla f(x_{I(k, t, s)})\|^2 \leq&\ 3(\left\|A_{I(k, t, s)}\right\|^2 + \left\|B_{I(k, t, s)}\right\|^2 + \left\|C_{I(k, t, s)}\right\|^2).
\end{align*}
We denote all the randomness up to iteration $I(\kappa-1, t, s)$ as $\mathfrak F_{I(\kappa-1, t, s)}$. 

\subsection*{Bounding $\|A_{I(k, t, s)}\|$}
Let $\alpha_{l, I(\kappa, t, s)} := \nabla \ell(x_{I(\kappa, t, s)}, z_{l, I(\kappa, t, s)}) - \nabla \ell(x_{I(\kappa-1, t, s)}, z_{l, I(\kappa, t, s)}) + \nabla f_{p_{t, s}}(x_{I(\kappa-1, t, s)}) + \nabla f_{p_{t, s}}(x_{I(\kappa, t, s)})$. Then, $\alpha_{I(\kappa, t, s)} = (1/b)\sum_{l=1}^b \alpha_{l, I(\kappa, t, s)}$. Observe that $\alpha_{l, I(\kappa, t, s)}$ satisfies \begin{align*}
    \mathbb{E}[\alpha_{l, I(\kappa, t, s)}\mid \mathfrak F_{I(\kappa-1, t, s)}] = 0
\end{align*}
and
\begin{align*}
    \mathbb{P}(\|\alpha_{l, I(\kappa, t, s)}\|\geq s \mid \mathfrak F_{ I(\kappa-1, t, s)}) \leq 2e^{-\frac{s^2}{2\left(\sigma_{I(\kappa, t, s)}^{(\alpha)}\right)^2}}
\end{align*} 
for every $s \in \mathbb{R}$ and $\kappa \in [k]$, where $\sigma_{I(\kappa, t, s)}^{(\alpha)} := 2L\|x_{I(\kappa, t, s)} - x_{I(\kappa-1, t, s)}\|$. Here, we used the fact that $\|\nabla \ell(x_{I(\kappa, t, s)}, z_{l, I(\kappa, t, s)}) - \nabla \ell(x_{I(\kappa-1, t, s)}, z_{l, I(\kappa, t, s)}) + \nabla f_{p_{t, s}}(x_{I(\kappa-1, t, s)}) + \nabla f_{p_{t, s}}(x_{I(\kappa, t, s)})\| \leq 2L \|x_{I(\kappa, t, s)} - x_{I(\kappa-1, t, s)}\|$ from $L$-smoothness of $\ell$. Note that $\{\alpha_{l, I(\kappa, t, s)}\}_{l=1}^{b_k}$ is I.I.D. sequence with at least $b$ samples and $\|\alpha_{\ell, I(\kappa, t, s)}\| \leq 4 G$ almost surely from Assumption \ref{assump: bounded_loss_gradient}. From these results, we can use Lemma \ref{lem: martingale_concentration_conditioned} with $A = 4KG$ and $a = \widetilde \varepsilon$ ($\widetilde \varepsilon$ is some positive number and will be defined later) and get
\begin{align*}
    \left\|A_I(k, t, s)\right\|^2 \leq \frac{c^2}{b}\left(\left(\sum_{\kappa=0}^{k-1} \left(\sigma_{I(\kappa+1, t, s)}^{(\alpha)}\right)^2\right)+\widetilde \varepsilon\right)\left(\mathrm{log}\frac{2d}{ q }+\mathrm{log}\mathrm{log}\frac{4KG}{\widetilde \varepsilon}\right)
\end{align*}
with probability at least $1 -  q $ for some constant $c > 0$. Also, note that
$\|A_I(k, t, s)\| \leq 4kG$ almost surely.

\subsection*{Bounding $\|B_{I(k, t, s)}\|$}
Observe that
\begin{align*}
    \beta_{I(\kappa, t, s)} =&\ \nabla f_{p_{t, s}}(x_{I(\kappa, t, s)}) - \nabla f_{p_{t, s}}(x_{I(\kappa-1, t, s)}) + \nabla f(x_{I(\kappa-1, t, s)}) - \nabla f(x_{I(\kappa, t, s)}) \\
    =&\  (\nabla f_{p_{t, s}} - \nabla f)(x_{I(\kappa, t, s)}) - (\nabla f_{p_{t, s}} - \nabla f)(x_{I(\kappa-1, t, s)}) \\
    =&\ \left(\int_0^1 (\nabla^2 f_{p_{t, s}} - \nabla^2 f)(\theta x_{I(\kappa, t, s)} + (1 - \theta) x_{I(\kappa-1, t, s)})d\theta\right) (x_{I(\kappa, t, s)} - x_{I(\kappa-1, t, s)}).
\end{align*}
Hence, from Assumption \ref{assump: heterogeneous}, we get
\begin{align*}
    \|\beta_{I(\kappa, t, s)}\| \leq \zeta\|x_{I(\kappa, t ,s)} - x_{I(\kappa-1, t, s)}\|=: \sigma_{I(\kappa, t, s)}^{(\beta)
    }.
\end{align*}
This gives
\begin{align*}
    \left\|B_{I(k, t, s)}\right\|^2 \leq k\sum_{\kappa=0}^{k-1} \left(\sigma_{I(\kappa+1, t, s)}^{(\beta)}\right)^2.
\end{align*}
Here we used the relation $(\sum_{i=1}^m |a_i|)^2 \leq m\sum_{i=1}^m a_i^2$ for every $\{a_i\}_{i=1}^m \subset \mathbb{R}$. 
Also, note that $\|B_I(k, t, s)\| \leq 4kG$ almost surely.

\subsection*{Bounding $\|C_{I(0, t, s)}\|$}
The argument is similar to the one of the case of the first term. From Lemma  \ref{lem: martingale_concentration_conditioned}, the third term $\|C_{I(0, t, s)}\|$ can be bounded as
\begin{align*}
    \left\|C_I(0, t, s)\right\|^2 \leq \frac{c^2}{PKb}\left(\left(\sum_{\tau=0}^{t-1} 
    \left(\sigma_{I(0, \tau+1, s)}^{(\gamma)}\right)^2\right)+\widetilde \varepsilon\right)\left(\mathrm{log}\frac{2d}{ q }+\mathrm{log}\mathrm{log}\frac{4KTG}{\widetilde \varepsilon}\right),
\end{align*}
with probability at least $1 -  q $, where $\sigma_{I(0, \tau, s)}^{(\gamma)} := 2L\sum_{\kappa=0}^{K-1}\|x_{I(\kappa+1, \tau-1, s)} - x_{I(\kappa, \tau-1, s)}\|$ ($\geq 2L\|x_{I(0, \tau, s)} - x_{I(0, \tau-1, s)}\|$). Here, we used the fact that $\{g_{I(0, \tau, s)}^{(p)} - g_{I(0, \tau, s)}^{(p), \mathrm{ref}}\}_{p=1}^P$ is independent and each of them is constructed from $Kb$ i.i.d. data samples. Also, note that $\|C_I(k, t, s)\| \leq 4TG$ almost surely.

Put the three results all together, we obtain 
\begin{equation*}
  \begin{split}
    \|v_{I(k, t, s)} - \nabla f(x_{I(k, t, s)})\|^2 \leq&\  \frac{3c^2}{b}\left(\left(\sum_{\kappa=0}^{k-1} \left(\sigma_{I(\kappa+1, t, s)}^{(\alpha)}\right)^2\right)+\widetilde \varepsilon\right)\left(\mathrm{log}\frac{2KT Sd}{ q }+\mathrm{log}\mathrm{log}\frac{4KG}{\widetilde \varepsilon}\right) \\
    &+ 3k\sum_{\kappa=0}^{k-1} \left(\sigma_{I(\kappa+1, t, s)}^{(\beta)}\right)^2 \\
    &+ \frac{3c^2}{PKb}\left(\left(\sum_{\tau=0}^{t-1} \left(\sigma_{I(0, \tau+1, s)}^{(\gamma)}\right)^2\right)+\widetilde \varepsilon\right)\left(\mathrm{log}\frac{2KT Sd}{ q }+\mathrm{log}\mathrm{log}\frac{4TG}{\widetilde \varepsilon}\right)
  \end{split}
\end{equation*}
for every $k \in [K-1]$, $t \in [T-1]$ and $s \in [ S-1]$  with probability at least $1 - 3 q $ for some constant $c > 0$.
We set $ q  \leftarrow  q /(KT S)$. 
Now, we set 
{\color{blue}\begin{align*}
    6c^2 \widetilde \varepsilon \left(\mathrm{log}\frac{2KT Sd}{ q } + \mathrm{log}\mathrm{log}\frac{4KTG}{\widetilde \varepsilon}\right) \leq r^2.
\end{align*}
}
Then, we have
\begin{equation}
  \begin{split}
    \|v_{I(k, t, s)} - \nabla f(x_{I(k, t, s)})\|^2 \leq&\  \frac{3c^2}{b}\left(\sum_{\kappa=0}^{k-1} \left(\sigma_{I(\kappa+1, t, s)}^{(\alpha)}\right)^2\right)\left(\mathrm{log}\frac{2KT Sd}{ q }+\mathrm{log}\mathrm{log}\frac{G}{\widetilde \varepsilon}\right) \\
    &+ 3k\sum_{\kappa=0}^{k-1} \left(\sigma_{I(\kappa+1, t, s)}^{(\beta)}\right)^2 \label{ineq: variance_bound}\\
    &+ \frac{3c^2}{PKb}\left(\sum_{\tau=0}^{t-1} \left(\sigma_{I(0, \tau+1, s)}^{(\gamma)}\right)^2\right)\left(\mathrm{log}\frac{2KT Sd}{ q }+\mathrm{log}\mathrm{log}\frac{G}{\widetilde \varepsilon}\right) \\
    &+r^2
  \end{split}
\end{equation}

for every $I(k, t, s) \in [KT  S]\cup\{0\}$. \par
Let 
\begin{align*}
    V(k, t, s) :=&\  12c^2\left(\frac{L^2}{b} + K\zeta^2 + \frac{L^2 T}{Pb}\right)\left( \sum_{\kappa=0}^{k-1}\|x_{I(\kappa+1, t, s)} - x_{I(\kappa, t, s)}\|^2 + \frac{1}{T}\sum_{\tau=0}^{t-1} \sum_{\kappa=0}^{K-1}\|x_{I(\kappa+1, \tau, s)} - x_{I(\kappa, \tau, s)}\|^2\right) \\
    & \times \left(\mathrm{log}\frac{2KT Sd}{ q }+\mathrm{log}\mathrm{log}\frac{4KTG}{\widetilde \varepsilon}\right). 
\end{align*}
Observe that $\|v_{I(k, t, s)} - \nabla f(x_{I(k, t, s)})\|^2 \leq V(k, t, s) + r^2$ and $V(k, t, s) \leq V(k', t', s)$ for $k' \geq k$ and $t' \geq t$. \par 

Now, we bound $\sum_{i=I(k_0, t_0, s_0)}^{I(k, t, s)}\|v_i - \nabla f(x_i)\|^2$ by dividing three cases. 
\subsection*{Case I. $s=s_0$ and $t=t_0$.}
We bound $\sum_{i=I(k_{-}, t_{-}, s_{-})}^{I(k, t, s)}\|v_i - \nabla f(x_i)\|^2$ for general $k_{-}, t_{-}$ and $s_{-}$ with $k_{-} \leq k$,  $t_{-} = t$ and $s_{-} = s$.

\begin{align*}
    &\sum_{i=I(k_{-}, t_{-}, s_{-})}^{I(k, t_-, s_-)}\|v_i - \nabla f(x_i)\|^2 \\
    \leq&\ \sum_{k'=k_{-}}^k V(k', t_{-}, s_{-}) + (k-k_{-}+1)r^2 \\
    \leq&\ (k-k_{-}+1) V(k, t_{-}, s_{-})  + (k-k_{-}+1)r^2 \\
    \leq&\ 12c^2\left(\frac{K L^2}{b} + K^2\zeta^2 + \frac{K L^2 T}{Pb}\right) \\
    &\times \left(\frac{k-k_{-}+1}{K} \sum_{\kappa=0}^{k-1}\|x_{I(\kappa+1, t_{-}, s_{-})} - x_{I(\kappa, t_{-}, s_{-})}\|^2 + \frac{k-k_{-}+1}{KT}\sum_{\tau=0}^{t_{-}-1} \sum_{\kappa=0}^{K-1}\|x_{I(\kappa+1, \tau, s_{-})} - x_{I(\kappa, \tau, s_{-})}\|^2\right) \\
    & \times \left(\mathrm{log}\frac{2KTSd}{ q }+\mathrm{log}\mathrm{log}\frac{4KTG}{\widetilde \varepsilon}\right) \\
    &+ (k-k_{-}+1) r^2.
\end{align*}
Since 
\begin{align*}
    &\frac{k-k_{-}+1}{K} \sum_{\kappa=0}^{k-1}\|x_{I(\kappa+1, t_{-}, s_{-})} - x_{I(\kappa, t_{-}, s_{-})}\|^2 + \frac{k-k_{-}+1}{KT}\sum_{\tau=0}^{t_{-}-1} \sum_{\kappa=0}^{K-1}\|x_{I(\kappa+1, \tau, s_{-})} - x_{I(\kappa, \tau, s_{-})}\|^2 \\
    \leq&\ \sum_{\kappa=k_{-}}^{k-1}\|x_{I(\kappa+1, t_{-}, s_{-})} - x_{I(\kappa, t_{-}, s_{-})}\|^2 + \frac{k-k_{-}+1}{K}\sum_{\kappa=0}^{k_--1}\|x_{I(\kappa+1, t_-, s_{-})} - x_{I(\kappa, t_-, s_-)}\|^2 \\
    &+ \frac{k-k_{-}+1}{KT}\sum_{\tau=0}^{t_{-}-1} \sum_{\kappa=0}^{K-1}\|x_{I(\kappa+1, \tau, s_{-})} - x_{I(\kappa, \tau, s_{-})}\|^2 \\ 
    =&\ \sum_{i=I(k_{-}, t_{-}, s_{-})}^{I(k, t_-, s_-)-1} \|x_{i+1} - x_i\|^2 + \frac{(I(k, t, s) - I(k_{-}, t_{-}, s_{-})+1)\wedge K }{K} \sum_{i=I(0, t_{-}, s_{-})}^{I(k_{-}, t_{-}, s_{-})-1} \|x_{i+1} - x_i\|^2 \\
    &+ \frac{(I(k, t_-, s_-) - I(k_{-}, t_{-}, s_{-}) + 1)\wedge KT}{KT}\sum_{i=I(0, 0, s_{-})}^{I(0, t_{-}, s_{-})-1} \|x_{i+1} - x_{i}\|^2, 
\end{align*}
we get
\begin{align*}
    &\sum_{i=I(k_{-}, t_{-}, s_{-})}^{I(k, t_-, s_-)}\|v_i - \nabla f(x_i)\|^2 \\
    \leq&\ 12c^2\left(\frac{K L^2}{b} + K^2\zeta^2 + \frac{K L^2 T}{Pb}\right) \\
    &\times \left(\sum_{i=I(k_{-}, t_{-}, s_{-})}^{I(k, t_-, s_-)-1} \|x_{i+1} - x_i\|^2 + \frac{(I(k, t_-, s_-) - I(k_{-}, t_{-}, s_{-}) + 1)\wedge K }{K} \sum_{i=I(0, t_{-}, s_{-})}^{I(k_{-}, t_{-}, s_{-})-1} \|x_{i+1} - x_i\|^2 \right. \\
    &+ \left. \frac{(I(k, t_-, s_-) - I(k_{-}, t_{-}, s_{-}) + 1)\wedge KT}{KT}\sum_{i=I(0, 0, s_{-})}^{I(0, t_{-}, s_{-})-1}\|x_{i+1} - x_{i}\|^2\right) \\
    & \times \left(\mathrm{log}\frac{2KT Sd}{ q }+\mathrm{log}\mathrm{log}\frac{4KTG}{\widetilde \varepsilon}\right) \\
    &+ (I(k, t_-, s_-) - I(k_-, t_-, s_-)+1) r^2.
\end{align*}

Setting $k_{-} \leftarrow k_0$, $t_{-} \leftarrow t_0$ and $s_{-} \leftarrow s_0$ gives the desired bound.

\subsection*{Case II. $s=s_0$ and $t>t_0$.}
Note that $I(k, t, s_0) - I(k_0, t_0, s_0) \geq K$.
Again, we consider $\sum_{i=I(k_{-}, t_{-}, s_{-})}^{I(k, t, s_-)}\|v_i - \nabla f(x_i)\|^2$ for general $k_{-}, t_{-}$ and $s_{-}$ with $k \geq k_{-}$,  $t > t_{-}$ and $s = s_{-}$.

\begin{align*}
    &\sum_{i=I(k_{-}, t_{-}, s_{-})}^{I(k, t, s_-)}\|v_i - \nabla f(x_i)\|^2 \\
    \leq&\ \sum_{i=I(k_{-}, t_{-}, s_{-})}^{I(K-1, t_{-}, s_{-})} \|v_{i} - \nabla f(x_i)\|^2 + \sum_{t'=t_{-}+1}^{t-1} \sum_{i=I(0, t', s_{-})}^{I(K-1, t', s_{-})} \|v_{i} - \nabla f(x_{i})\|^2 +  \sum_{i=I(0, t, s_{-})}^{I(k, t, s_{-})} \|v_{i} - \nabla f(x_{i})\|^2.
\end{align*}
Using the result of Case I, the first term can be bounded as follows:
\begin{align*}
    &\sum_{i=I(k_{-}, t_{-}, s_{-})}^{I(K-1, t_{-}, s_{-})} \|v_{i} - \nabla f(x_i)\|^2 \\
     \leq&\ 12c^2\left(\frac{K L^2}{b} + K^2\zeta^2 + \frac{K L^2 T}{Pb}\right) \\
    &\times \left(\sum_{i=I(k_{-}, t_{-}, s_{-})}^{I(K-1, t_{-}, s_-)-1} \|x_{i+1} - x_i\|^2 + \frac{(I(K-1, t_{-}, s_{-}) - I(k_{-}, t_{-}, s_{-}) + 1)\wedge K }{K} \sum_{i=I(0, t_{-}, s_{-})}^{I(k_{-}, t_{-}, s_{-})-1} \|x_{i+1} - x_i\|^2 \right. \\
    &\ \ \ \ \ \ \ \ + \left. \frac{(I(K-1, t_{-}, s_{-}) - I(k_{-}, t_{-}, s_{-}) + 1)\wedge KT}{KT}\sum_{i=I(0, 0, s_{-})}^{I(0, t_{-}, s_{-})-1}\|x_{i+1} - x_{i}\|^2\right) \\
    & \times \left(\mathrm{log}\frac{2KT Sd}{ q }+\mathrm{log}\mathrm{log}\frac{4KTG}{\widetilde \varepsilon}\right) \\
    &+ (I(K-1, t_-, s_-) - I(k_-, t_-, s_-) + 1) r^2.
\end{align*}

Similarly, the second term can be bounded as:
\begin{align*}
     &\sum_{t'=t_{-}+1}^{t-1} \sum_{i=I(0, t', s_{-})}^{I(K-1, t', s_{-})} \|v_{i} - \nabla f(x_{i})\|^2 \\
     \leq&\ 12c^2\left(\frac{K L^2}{b} + K^2\zeta^2 + \frac{K L^2 T}{Pb}\right) \\
    &\times \left(\sum_{t'=t_{-}+1}^{t-1}\sum_{i=I(0, t', s_{-})}^{I(K-1, t', s_-)-1} \|x_{i+1} - x_i\|^2 \right. \\
    &\ \ \ \ \ \ \ \ + \left. \sum_{t'=t_{-}+1}^{t-1}\frac{(I(K-1, t', s_{-}) - I(0, t', s_{-})+1)\wedge KT}{KT}\sum_{i=I(0, 0, s_{-})}^{I(0, t', s_{-})-1}\|x_{i+1} - x_{i}\|^2\right) \\
    & \times \left(\mathrm{log}\frac{2KT Sd}{ q }+\mathrm{log}\mathrm{log}\frac{4KTG}{\widetilde \varepsilon}\right) \\
    \leq&\ 12c^2\left(\frac{K L^2}{b} + K^2\zeta^2 + \frac{K L^2 T}{Pb}\right) \\
    &\times \left(\sum_{i=I(0, t_{-}+1, s_{-})}^{I(K-1, t-1, s_-)-1} \|x_{i+1} - x_i\|^2 + \sum_{t'=t_{-}+1}^{t-1}\frac{(I(K-1, t', s_{-}) - I(0, t', s_{-}) + 1)\wedge KT}{KT}\sum_{i = I(0, t_-, s_-)}^{I(0, t', s_-)-1}\|x_{i+1} - x_i\|^2\right. \\
    &\ \ \ \ \ \ \ \ + \left.  \frac{(I(K-1, t-1, s_{-}) -I(0, t_{-}+1, s_{-}) + 1)\wedge KT}{KT}\sum_{i=I(0, 0, s_{-})}^{I( 0, t_{-}, s_{-})-1}\|x_{i+1} - x_{i}\|^2\right) \\
    & \times \left(\mathrm{log}\frac{2KT Sd}{q }+\mathrm{log}\mathrm{log}\frac{4KTG}{\widetilde \varepsilon}\right)
    \\
    &+ (I(K-1, t - 1, s_-) - I(0, t_-+1, s_-) + 1) r^2. 
\end{align*}
Now, we bound the second term as follows:
\begin{align*}
    &\sum_{t'=t_{-}+1}^{t-1}\frac{(I(K-1, t', s_{-}) - I(0, t', s_{-})+1)\wedge KT}{KT}\sum_{i = I(0, t_-, s_-)}^{I(0, t', s_-)-1}\|x_{i+1} - x_i\|^2 \\
    \leq&\ \frac{(I(K-1, t_-+1, s_-) - I(0, t_-+1, s_-)+1) \wedge KT}{KT}\sum_{i=I(0, t_-, s_-}^{I(k_-, t_-, s_-)-1}\|x_{i+1} - x_i\|^2 + \sum_{i=I(k_-, t_-, s_-)}^{I(0, t_-+1, s_-)-1}\|x_{i+1} - x_i\|^2 \\
    &+  \sum_{t'=t_{-}+2}^{t-1}\frac{(I(K-1, t', s_{-}) - I(0, t', s_{-})+1)\wedge KT}{KT}\sum_{i = I(0, t_-, s_-)}^{I(0, t', s_-)-1}\|x_{i+1} - x_i\|^2 \\
    \leq&\ \frac{(I(K-1, t_-+1, s_-) - I(0, t_-+1, s_-)+1) \wedge KT}{KT}\sum_{i=I(0, t_-, s_-}^{I(k_-, t_-, s_-)-1}\|x_{i+1} - x_i\|^2 + \sum_{i=I(k_-, t_-, s_-)}^{I(K-1, t-1, s_)-1}\|x_{i+1} - x_i\|^2. 
\end{align*}
Using this, we have
\begin{align*}
    &\sum_{t'=t_{-}+1}^{t-1} \sum_{i=I(0, t', s_{-})}^{I(K-1, t', s_{-})} \|v_{i} - \nabla f(x_{i})\|^2 \\
    \leq&\ 12c^2\left(\frac{K L^2}{b} + K^2\zeta^2 + \frac{K L^2 T}{Pb}\right) \\
    &\times \left(2\sum_{i=I(k_-, t_-, s_{-})}^{I(K-1, t-1, s_-)-1} \|x_{i+1} - x_i\|^2 + \frac{(I(K-1, t_-+1, s_{-}) - I(0, t_-+1, s_{-}) + 1)\wedge K}{K}\sum_{i = I(0, t_-, s_-)}^{I(k_-, t_-, s_-)-1}\|x_{i+1} - x_i\|^2\right. \\
    &\ \ \ \ \ \ \ \ + \left.  \frac{(I(K-1, t-1, s_{-}) -I(0, t_{-}+1, s_{-}) + 1)\wedge KT}{KT}\sum_{i=I(0, 0, s_{-})}^{I( 0, t_{-}, s_{-})-1}\|x_{i+1} - x_{i}\|^2\right) \\
    & \times \left(\mathrm{log}\frac{2KT Sd}{q }+\mathrm{log}\mathrm{log}\frac{4KTG}{\widetilde \varepsilon}\right)
    \\
    &+ (I(K-1, t - 1, s_-) - I(0, t_-+1, s_-) + 1) r^2.
\end{align*}

Finally, we bound the last term:
\begin{align*}
    &\sum_{i=I(0, t, s_{-})}^{I(k, t, s_{-})-1} \|v_{i} - \nabla f(x_{i})\|^2 \\
    \leq&\ 12c^2\left(\frac{K L^2}{b} + K^2\zeta^2 + \frac{K L^2 T}{Pb}\right) \\
    &\times \left(\sum_{i=I(0, t, s_{-})}^{I(k, t, s_{-})-1} \|x_{i+1} - x_i\|^2 +  \frac{(I(k, t, s_{-}) - I(0, t, s_{-})+1)\wedge KT}{KT}\sum_{i=I(0, 0, s_{-})}^{I(0, t, s_{-})-1}\|x_{i+1} - x_{i}\|^2\right) \\
    & \times \left(\mathrm{log}\frac{2KT Sd}{ q }+\mathrm{log}\mathrm{log}\frac{4KTG}{\widetilde \varepsilon}\right) \\
    \leq&\ 12c^2\left(\frac{K L^2}{b} + K\zeta^2 + \frac{K L^2 T}{Pb}\right) \\
    &\times \left(\sum_{i=I(k_{-}, t_{-}, s_{-})}^{I(k, t, s_{-})-1} \|x_{i+1} - x_i\|^2 + \frac{(I(k, t, s_{-}) - I(0, t, s_{-})+1)\wedge KT}{KT}\sum_{i=I(0, 0, s_{-})}^{I(k_{-}, t_{-}, s_{-})-1}\|x_{i+1} - x_{i}\|^2\right) \\
    & \times \left(\mathrm{log}\frac{2KT Sd}{ q }+\mathrm{log}\mathrm{log}\frac{4KTG}{\widetilde \varepsilon}\right) \\
    &+ (I(k, t, s_{-}) - I(0, t, s_{-}) + 1)r^2.
\end{align*}
Summing the upper bounds of the three terms, we get
\begin{align*}
     &\sum_{i=I(k_{-}, t_{-}, s_{-})}^{I(k, t, s_-)}\|v_i - \nabla f(x_i)\|^2 \\
    \leq&\ 48c^2\left(\frac{K L^2}{b} + K^2\zeta^2 + \frac{K L^2 T}{Pb}\right) \\
    &\times \left(\sum_{i=I(k_{-}, t_{-}, s_{-})}^{I(k, t, s_{-})-1} \|x_{i+1} - x_i\|^2 + \frac{(I(k, t, s_{-}) - I(k_{-}, t_{-}, s_{-})+1)\wedge K }{K} \sum_{i=I(0, t_{-}, s_{-})}^{I(k_{-}, t_{-}, s_{-})-1} \|x_{i+1} - x_i\|^2 \right. \\
    &+ \left. \frac{(I(k, t, s_{-}) -I(k_{-}, t_{-}, s_{-})+1)\wedge KT}{KT}\sum_{i=I(0, 0, s_{-})}^{I( 0, t_{-}, s_{-})-1}\|x_{i+1} - x_{i}\|^2\right) \\
    & \times \left(\mathrm{log}\frac{2KT Sd}{ q }+\mathrm{log}\mathrm{log}\frac{4KTG}{\widetilde \varepsilon}\right)
     \\
    &+ (I(k, t, s_{-}) - I(k_-, t_-, s_{-}) + 1)r^2.
\end{align*}

Setting $k_{-} \leftarrow k_0$, $t_{-} \leftarrow t_0$ and $s_{-} \leftarrow s_0$ gives the desired bound.

\subsection*{Case III. $s>s_0$}
In this case, note that $I(k, t, s) - I(k_0, t_0, s_0) \geq KT$ holds. Observe that 
\begin{align*}
    &\sum_{i=I(k_0, t_0, s_0)}^{I(k, t, s)}\|v_i - \nabla f(x_i)\|^2 \\
    \leq&\ \sum_{i=I(k_0, t_0, s_0)}^{I(K-1, T-1, s_0)} \|v_i - \nabla f(x_i)\|^2 + \sum_{s'=s_0+1}^{s-1}\sum_{i=I(0, 0, s')}^{I(K-1, T-1, s')} \|v_i - \nabla f(x_i)\|^2 + \sum_{i=I(0, 0, s)}^{I(k, t, s)} \|v_i - \nabla f(x_i)\|^2.
\end{align*}

Using the result of Case II, we bound the three terms. 

The first term can be bounded as follows:
\begin{align*}
    &\sum_{i=I(k_0, t_0, s_0)}^{I(K-1, T-1, s_0)} \|v_i - \nabla f(x_i)\|^2 \\
    \leq&\ 48c^2\left(\frac{K L^2}{b} + K^2\zeta^2 + \frac{K L^2 T}{Pb}\right) \\
    &\times \left(\sum_{i=I(k_0, t_0, s_0)}^{I(K-1, T-1, s_0)-1} \|x_{i+1} - x_i\|^2 + \frac{(I(K-1, T-1, s_0) - I(k_0, t_0, s_0)+1)\wedge K }{K} \sum_{i=I(0, t_0, s_0)}^{I(k_0, t_0, s_0)-1} \|x_{i+1} - x_i\|^2 \right. \\
    &+ \left. \frac{(I(K-1, T-1, s_0) - I(k_0, t_0, s_0)+1)\wedge KT}{KT} \sum_{i=I(0, 0, s_0)}^{I(0, t_0, s_0)-1} \|x_{i+1}-x_i\|^2 \right) \\
    & \times \left(\mathrm{log}\frac{2KT Sd}{ q }+\mathrm{log}\mathrm{log}\frac{G}{\widetilde \varepsilon}\right)     \\
    &+ (I(K-1, T-1, s_{0}) - I(k_0, t_0, s_{0}) + 1)r^2.
\end{align*}

Similarly, the second term can be bounded as
\begin{align*}
    &\sum_{s'=s_0+1}^{s-1}\sum_{i=I(0, 0, s')}^{I(K-1, T-1, s')} \|v_i - \nabla f(x_i)\|^2 \\
    \leq&\ 48c^2\left(\frac{K L^2}{b} + K^2\zeta^2 + \frac{K L^2 T}{Pb}\right) \left(\sum_{s'=s_0+1}^{s-1}\sum_{i=I(0, 0, s')}^{I(K-1, T-1, s')-1} \|x_{i+1} - x_i\|^2 \right) \left(\mathrm{log}\frac{2KT Sd}{ q }+\mathrm{log}\mathrm{log}\frac{4KTG}{\widetilde \varepsilon}\right)    \\
    &+ (I(K-1, T-1, s-1) - I(k_0, t_0, s_{0}+1) + 1)r^2.
\end{align*}

We bound the last term as
\begin{align*}
    &\sum_{i=I(0, 0, s)}^{I(k, t, s)} \|v_i - \nabla f(x_i)\|^2 \\
    \leq&\ 48c^2\left(\frac{K L^2}{b} + K^2\zeta^2 + \frac{K L^2 T}{Pb}\right) \\
    &\times \left(\sum_{i=I(0, 0, s)}^{I(k, t, s)-1} \|x_{i+1} - x_i\|^2 \right) \left(\mathrm{log}\frac{KT Sd}{ q }+\mathrm{log}\mathrm{log}\frac{4KTG}{\widetilde \varepsilon}\right)    \\
    &+ (I(k, t, s) - I(0, 0, s) + 1)r^2.
\end{align*}

Summing up the three terms, we get
\begin{align*}
    &\sum_{i=I(k_0, t_0, s_0)}^{I(k, t, s)} \|v_i - \nabla f(x_i)\|^2 \\
    \leq&\ 48c^2\left(\frac{K L^2}{b} + K^2\zeta^2 + \frac{K L^2 T}{Pb}\right) \\
    &\times \left(\sum_{i=I(k_0, t_0, s_0)}^{I(k, t, s)-1} \|x_{i+1} - x_i\|^2 + \frac{(I(k, t, s) - I(k_0, t_0, s_0)+1)\wedge K }{K} \sum_{i=I(0, t_0, s_0)}^{I(k_0, t_0, s_0)-1} \|x_{i+1} - x_i\|^2 \right. \\
    &+ \left.  \frac{(I(k, t, s) - I(k_0, t_0, s_0)+1)\wedge KT}{KT} \sum_{i=I(0, 0, s_0)}^{I(0, t_0, s_0)-1} \|x_{i+1}-x_i\|^2\right) \\
    &\times \left(\mathrm{log}\frac{2KT Sd}{ q }+\mathrm{log}\mathrm{log}\frac{4KTG}{\widetilde \varepsilon}\right)    \\
    &+ (I(k, t, s) - I(k_0, t_0, s_{0}) + 1)r^2.
\end{align*}

Combining the three cases, we obtain 
\begin{align*}
     &\sum_{i=I(k_0, t_0, s_0)}^{I(k, t, s)}\|v_i - \nabla f(x_i)\|^2 \\
    \leq&\ 48c^2\left(\frac{K L^2}{b} + K^2\zeta^2 + \frac{K L^2 T}{Pb}\right) \\
    &\times \left(\sum_{i=I(k_0, t_0, s_0)}^{I(k, t, s)-1} \|x_{i+1} - x_i\|^2 + \frac{(I(k, t, s) - I(k_0, t_0, s_0)+1)\wedge K }{K} \sum_{i=I(0, t_0, s_0)}^{I(k_0, t_0, s_0)-1} \|x_{i+1} - x_i\|^2 \right. \\
    &+ \left. \frac{(I(k, t, s) -I(k_0, t_0, s_0)+1)\wedge KT}{KT}\sum_{i=I(0, 0, s_0)}^{I( 0, t_0, s_0)-1}\|x_{i+1} - x_{i}\|^2\right) \\
    & \times \left(\mathrm{log}\frac{2KT Sd}{ q }+\mathrm{log}\mathrm{log}\frac{4KTG}{\widetilde \varepsilon}\right)\\
    &+ (I(k, t, s) - I(k_0, t_0, s_{0}) + 1)r^2.
\end{align*}

Combining this bound with (\ref{ineq: descent_multi_iters}), we obtain 
\begin{align*}
    &f(x_{I(k, t, s)})   \\
    \leq&\ f(x_{I(k_0, t_0, s_0)}) - \frac{\eta}{2}\sum_{i=I(k_0, t_0, s_0)}^{I(k, t, s)-1}\|\nabla f(x_i)\|^2\\
    &- \left(\frac{1}{2\eta} - \frac{L}{2} - 48c^2 \eta \left(\frac{KL^2}{b} +  K^2\zeta^2 + \frac{KTL^2}{Pb}\right)\left(\mathrm{log}\frac{2KT Sd}{ q } + \mathrm{log}\mathrm{log}\frac{4KTG}{\widetilde \varepsilon}\right)\right)\sum_{i=I(k_0, t_0, s_0)}^{I(k, t, s)-1}\|x_{i+1} - x_{i}\|^2 \\
    &+ \Biggl\{48c^2 \eta \left(\frac{KL^2}{b} +  K^2\zeta^2 + \frac{KTL^2}{Pb}\right)\left(\mathrm{log}\frac{2KT Sd}{ q } + \mathrm{log}\mathrm{log}\frac{4KTG}{\widetilde \varepsilon}\right)\\
    &\times \left(\frac{(I(k, t, s) - I(k_0, t_0, s_0))\wedge K }{K} \sum_{i=I(0, t_0, s_0)}^{I(k_0, t_0, s_0)-1} \|x_{i+1} - x_i\|^2 \right. \\
    &+ \left. \frac{(I(k, t, s) -I(k_0, t_0, s_0))\wedge KT}{KT}\sum_{i=I(0, 0, s_0)}^{I( 0, t_0, s_0)-1}\|x_{i+1} - x_{i}\|^2\right)\Biggr\} \\
    &+ \eta (I(k, t, s) - I(k_0, t_0, s_0))r^2
\end{align*}
with probability at least $1 - 3 q $.

We can choose {\color{blue}$\eta = \widetilde \Theta(1/L \wedge \sqrt{b/K}/L \wedge 1/(K\zeta) \wedge \sqrt{Pb}/(\sqrt{KT}L))$} such that {\color{blue}$\eta \leq 1/(8L)$} and 
{\color{blue}
\begin{align*}
    48c^2 \eta \left(\frac{KL^2}{b} +  K^2\zeta^2 + \frac{KTL^2}{Pb}\right)\left(\mathrm{log}\frac{2KT Sd}{ q } + \mathrm{log}\mathrm{log}\frac{4KTG}{\widetilde \varepsilon}\right) \leq \frac{c_\eta}{\eta}.
\end{align*}
}
for some constant $c_\eta \in (0, 1/4)$. 
Then, the above result can be simplified as 
\begin{align}
    f(x_{I(k, t, s)})  
    \leq&\ f(x_{I(k_0, t_0, s_0)}) - \frac{\eta}{2}\sum_{i=I(k_0, t_0, s_0)}^{I(k, t, s)-1}\|\nabla f(x_i)\|^2 \notag\\
    &- \frac{1}{8\eta}\sum_{i=I(k_0, t_0, s_0)}^{I(k, t, s)-1}\|x_{i+1} - x_{i}\|^2 \notag\\
    &+ \frac{c_\eta}{\eta}\left(\frac{(I(k, t, s) - I(k_0, t_0, s_0))\wedge K }{K} \sum_{i=I(0, t_0, s_0)}^{I(k_0, t_0, s_0)-1} \|x_{i+1} - x_i\|^2 \right. \notag\\
    &+ \left. \frac{(I(k, t, s) -I(k_0, t_0, s_0))\wedge KT}{KT}\sum_{i=I(0, 0, s_0)}^{I( 0, t_0, s_0)-1}\|x_{i+1} - x_{i}\|^2\right) \notag\\
    &+ \eta (I(k, t, s) - I(k_0, t_0, s_0))r^2 \label{ineq: f_diff}
\end{align}
with probability at least $1-3 q $.
\qed

Also, we bound $\|x_{I(k, t, s)} - x_{I(k_0, t_0, s_0)}\|^2$. Note that
\begin{align*}
    \|x_{I(k, t, s)} - x_{I(k_0, t_0, s_0)}\|^2  \leq&\ (I(k, t, s) - I(k_0, t_0, s_0))\sum_{i=I(k_0,t_0, s_0)}^{I(k, t, s)-1}\|x_{i+1} - x_i\|^2 \\
    \leq&\ 8\eta (I(k, t, s) - I(k_0, t_0, s_0)) \Biggl\{f(x_{I(k_0, t_0, s_0)}) - f(x_{I(k, t, s)}) \Biggr. \\
    &- \frac{1}{8\eta} \sum_{i=I(k_0,t_0, s_0)}^{I(k-1, t, s)}\|x_{i+1} - x_i\|^2 \\
    &+ \frac{c_\eta}{\eta}\left(\frac{(I(k, t, s) - I(k_0, t_0, s_0))\wedge K }{K} \sum_{i=I(0, t_0, s_0)}^{I(k_0, t_0, s_0)-1} \|x_{i+1} - x_i\|^2 \right. \notag\\
    &+ \left. \frac{(I(k, t, s) -I(k_0, t_0, s_0))\wedge KT}{KT}\sum_{i=I(0, 0, s_0)}^{I( 0, t_0, s_0)-1}\|x_{i+1} - x_{i}\|^2\right) \\
    &+  \eta (I(k, t, s) - I(k_0, t_0, s_0))r^2 \Biggr\}.
\end{align*}
for every $k, k_0 \in [K-1]$, $t, t_0 \in [T-1]$ and $s, s_0 \in [ S-1]$ ($I(k, t, s) \geq I(k_0, t_0, s_0)$) with probability at least $1 - 3 q $. \par

By the way, we also derive (loose) almost sure bound as follows:
From (\ref{ineq: descent_one_iter}) and the fact that $\|v_{I(k, t, s)} - \nabla f(x_{I(k, t, s)})\|^2 \leq 3(4KG^2 + 4KG^2 + 4TG^2) \leq 36 KT G^2$ almost surely, it holds that
\begin{align}
    f(x_{I(k, t, s)}) - f(x_{I(k_0, t_0, s_0)}) \leq&\ 36 \eta KT (I(k, t, s) - I(k_0, t_0, s_0))G^2 + \eta (I(k, t, s) - I(k_0, t_0, s_0))r^2 \notag \\
    \leq&\ 36 \eta K^2T^2  S G^2 + \eta KT S r^2 \notag \\
    \leq&\ 36\eta K^2T^2 S(G^2+ r^2)\label{ineq: almost_sure_object_bound}
\end{align}
almost surely. \par

\subsection*{Proof of Corollary \ref{cor: first_order_optimality}}
Using Proposition \ref{prop: first_order_optimality} with $I(k, t, s) = KTS$ and $I(k_0, t_0, s_0) = 0$, we have
\begin{align*}
    f(x_{KTS})  
    \leq&\ f(x_0) - \frac{\eta}{2}\sum_{i=0}^{KTS - 1}\|\nabla f(x_i)\|^2 \\
    &- \frac{1}{8\eta}\sum_{i=0}^{KTS-1}\|x_{i+1} - x_{i}\|^2 + \eta KTS r^2.
\end{align*}

Then, $-\|\nabla f(x_{i})\|^2 \leq -(1/2)\|\nabla f(\widetilde x_{i})\|^2 + \|\nabla f(x_{i}) - f(\widetilde x_{i})\|^2 \leq -(1/2)\|\nabla f(\widetilde x_{I(k, t, s)})\|^2 + \eta^2L^2r^2 \leq -(1/2)\|\nabla f(\widetilde x_{i})\|^2 + r^2$ gives 
\begin{align*}
    f(x_{KTS})  
    \leq&\ f(x_0) - \frac{\eta}{4}\sum_{i=0}^{KTS-1}\|\nabla f(\widetilde x_i)\|^2 \\
    &- \frac{1}{8\eta}\sum_{i=0}^{KTS-1}\|x_{i+1} - x_{i}\|^2 + 2\eta KTS r^2.
\end{align*}

Choosing {\color{blue}$c_r \leq 1/4$} immediately leads the desired result.
\qed

\subsection{Escaping Saddle Points}
\label{app: subsec: escapling_saddle}

Given $\{x_i\}_{i=0}^{KTS-1}$, we introduce the concept of coupling sequence \cite{jin2021nonconvex}. Given $x_{I(k_0, t_0, s_0)}$, let  $\{\bm{e}_i\}_{i=1}^d$ be the normalized eigenvectors of $\nabla f(\widetilde x_{I(k_0, t_0, s_0)})$ associated with the eigenvalues $\lambda_1 < \cdots < \lambda_d$. We set  $\bm{e}_\mathrm{min} := \bm{e}_1$ and $\lambda_\mathrm{min} := \lambda_1$. We assume that $\lambda := -\lambda_\mathrm{min} > \sqrt{\rho\varepsilon}$.

Then, for given $\hat I \geq I(k_0, t_0, s_0)$, we define coupling sequence $\{x_i'\}_{i=0}^{KTS-1}$ as follows: (1) $\langle \xi_{\widetilde I}', \bm{e}_\mathrm{min}\rangle = - \langle  \xi_{\widetilde I}, \bm{e}_\mathrm{min}\rangle$; (2) $\langle \xi_{\widetilde I}', \bm{e}_j\rangle = \langle  \xi_{\widetilde I}, \bm{e}_j \rangle$ for $j \in \{2, \ldots, d\}$; and (3) All the other randomness is completely same as the one of $\{x_i\}_{i=0}^{KTS-1}$. Let $r_0 := |\langle \xi_{\widetilde I}, \bm{e}_\mathrm{min}\rangle|$. Note that  $|\langle \xi_{\widetilde I} - \xi_{\widetilde I}', \bm{e}_\mathrm{min}\rangle| = 2 r_0$ and thus $\|\xi_{\widetilde I} - \xi_{\widetilde I}'\| = 2r_0$. Also, observe that $x_{\widetilde I+1} - x_{\widetilde I+1}' = \eta \langle \xi_{\widetilde I} - \xi_{\widetilde I}', \bm{e}_\mathrm{min}\rangle \bm{e}_\mathrm{min}$.
We define $\widetilde I$ used in the definition of the coupling sequence as follows:
\begin{align*}
    \widetilde I := 
    \begin{cases}
        I(k_0, t_0, s_0), & (1/(\eta \lambda) \leq \sqrt{K}) \\
        I(k_0', t_0, s_0) - 1, &(\sqrt{K} < 1/(\eta \lambda) \leq K) \\
        I(0, t_0+1, s_0)-1, & (K < 1/(\eta \lambda) \leq KT) \\
        I(0, 0, s_0+1)-1. & (KT < 1/(\eta \lambda))
    \end{cases}
\end{align*}
Here, $k_0'$ is the minimum index $k$ that satisfies $k > k_0$ and $k \equiv 0\ (\mathrm{mod} \lceil \sqrt{K}\rceil)$. We can easily check that $\widetilde I - I(k_0, t_0, s_0) \leq 1/(\eta \lambda)$.

\par

Note that
\begin{align}
    \mathbb{P}\left(r_0 \geq  \frac{qr}{2\sqrt{d}}\right) \geq 1 -  q \label{ineq: unstack_prop}
\end{align}
from the arguments in Section A.2 of \cite{jin2017escape}.

To prove Proposition \ref{prop: f_diff_decrease}, first note that the following result:
\begin{proposition}
\label{prop: improve_or_localize}
Let $k_0 \in [K]\cup\{0\}$, $t_0 \in [T-1]\cup\{0\}$ and $s_0 \in [S-1]\cup\{0\}$. Fix any $\mathcal J \in \{1, \ldots, I(0, 0, S) - I(k_0, t_0, s_0)\}$ and $\mathcal F > 0$. Under the same conditions as Proposition \ref{prop: first_order_optimality}, it holds that
\begin{align*}
    &\mathrm{min}\left\{f(x_{I(k_0, t_0, s_0)+\mathcal J}) - f(x_{I(k_0, t_0 s_0)}), f(x_{I(k_0, t_0, s_0)+\mathcal J}') - f(x_{I(k_0, t_0 s_0)}')\right\} \\ \leq&\  - \mathcal F + \frac{2c_\eta}{\eta}\left(\frac{\mathcal J\wedge K }{K} \sum_{i=I(0, t_0, s_0)}^{I(k_0, t_0, s_0)-1} \|x_{i+1} - x_i\|^2 + \frac{\mathcal J\wedge KT}{KT}\sum_{i=I(0, 0, s_0)}^{I( 0, t_0, s_0)-1}\|x_{i+1} - x_{i}\|^2\right)\\ 
    \text{ or }& \forall J \in [\mathcal J]: \mathrm{max}\left\{\|x_{I(k_0, t_0, s_0)+J} - x_{I(k_0, t_0, s_0)}\|^2, \|x_{I(k_0, t_0, s_0)+J}' - x_{I(k_0, t_0, s_0)}'\|^2\right\} \\
    \leq&\ 8\eta \mathcal J \left(\mathcal F + 2\eta \mathcal J r^2\right)
\end{align*}
with probability at least $1 - 6 q $. 
\end{proposition}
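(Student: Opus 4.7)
The plan is a standard ``improve-or-localize'' dichotomy built directly on top of the Descent Lemma (Proposition \ref{prop: first_order_optimality}), applied in parallel to $\{x_i\}$ and to the coupled trajectory $\{x_i'\}$. Since $\{x_i'\}$ is itself a valid run of Algorithm \ref{alg: bvr_l_psgd}—only one component of one perturbation is flipped—Proposition \ref{prop: first_order_optimality} applies verbatim to it with an independent $3q$-failure event. A union bound places us on an event of probability at least $1-6q$ on which the Descent Lemma holds simultaneously for both sequences and for every endpoint of the form $I(k_0,t_0,s_0)+J$ with $J\in[\mathcal J]$.

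For each such $J$, I drop the nonpositive $-\frac{\eta}{2}\sum\|\nabla f\|^2$ term and denote the boundary expression by
\[
\mathcal B(J) := \tfrac{c_\eta}{\eta}\Bigl(\tfrac{J\wedge K}{K}\textstyle\sum_{i=I(0, t_0, s_0)}^{I(k_0, t_0, s_0)-1}\|x_{i+1}-x_i\|^2 + \tfrac{J\wedge KT}{KT}\textstyle\sum_{i=I(0,0,s_0)}^{I(0,t_0,s_0)-1}\|x_{i+1}-x_i\|^2\Bigr),
\]
so that Proposition \ref{prop: first_order_optimality} rearranges to
\[
\tfrac{1}{4\eta}\!\!\sum_{i=I(k_0,t_0,s_0)}^{I(k_0,t_0,s_0)+J-1}\!\!\|x_{i+1}-x_i\|^2 \leq f(x_{I(k_0,t_0,s_0)}) - f(x_{I(k_0,t_0,s_0)+J}) + \mathcal B(J) + \eta J r^2,
\]
and analogously for $\{x_i'\}$. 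Crucially, $\mathcal B(J)\leq \mathcal B(\mathcal J)$ for all $J\in[\mathcal J]$, since $J\wedge K\leq \mathcal J\wedge K$ and $J\wedge KT\leq \mathcal J\wedge KT$.

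I then split into two cases. If both trajectories satisfy $\|x_{I(k_0,t_0,s_0)+J}-x_{I(k_0,t_0,s_0)}\|^2\leq 8\eta\mathcal J(\mathcal F+2\eta\mathcal J r^2)$ for every $J\in[\mathcal J]$ (and the same for the primed trajectory), we are exactly in the localize conclusion. Otherwise one of the two sequences—say $\{x_i\}$—has displacement exceeding that bound at some index $J_0\leq \mathcal J$. Cauchy--Schwarz gives
\[
\sum_{i=I(k_0,t_0,s_0)}^{I(k_0,t_0,s_0)+\mathcal J-1}\|x_{i+1}-x_i\|^2 \geq \tfrac{1}{J_0}\|x_{I(k_0,t_0,s_0)+J_0}-x_{I(k_0,t_0,s_0)}\|^2 > 8\eta(\mathcal F+2\eta\mathcal J r^2),
\]
and feeding this into the Descent Lemma at endpoint $I(k_0,t_0,s_0)+\mathcal J$ yields
\[
f(x_{I(k_0,t_0,s_0)+\mathcal J}) - f(x_{I(k_0,t_0,s_0)}) < -2\mathcal F - 4\eta\mathcal J r^2 + \mathcal B(\mathcal J) + \eta\mathcal J r^2 \leq -\mathcal F + 2\mathcal B(\mathcal J),
\]
which is the improve conclusion; the argument is symmetric if $\{x_i'\}$ is the non-localized one. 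Taking the minimum of the two $f$-differences recovers the stated bound.

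No step is deep: the entire content is the combination of (i) the Descent Lemma holding uniformly in the endpoint, (ii) the inequality $\|x_J-x_0\|^2\leq J\sum\|x_{i+1}-x_i\|^2$, and (iii) the coupling preserving the Descent Lemma for $\{x_i'\}$. The only mild bookkeeping obstacle is ensuring the factor $2$ in the coefficient $2c_\eta/\eta$ of the improve conclusion; this absorbs the replacement of $\mathcal B(J_0)$ by the larger $\mathcal B(\mathcal J)$, which is justified by the monotonicity noted above.
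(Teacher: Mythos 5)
Your proof is correct, and it takes a somewhat more streamlined route than the paper's. The paper derives the improve-or-localize dichotomy by applying the Descent Lemma \emph{twice}: once on $[I(k_0,t_0,s_0),\, I(k_0,t_0,s_0)+J]$ to upper-bound the squared displacement in terms of $f(x_{I(k_0,t_0,s_0)}) - f(x_{I(k_0,t_0,s_0)+J})$, and a second time on $[I(k_0,t_0,s_0)+J,\, I(k_0,t_0,s_0)+\mathcal J]$ to telescope $f(x_{I(k_0,t_0,s_0)+J})$ down to $f(x_{I(k_0,t_0,s_0)+\mathcal J})$, carefully absorbing the history/boundary terms that arise at the intermediate anchor; this bookkeeping is where the factor $2$ in $2c_\eta/\eta$ genuinely enters the paper's argument. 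Your version replaces the second application of the Descent Lemma with the elementary observation that a large displacement at some $J_0\leq\mathcal J$ forces, via Cauchy--Schwarz and nonnegativity of the summands, a large value of $\sum_{i=I(k_0,t_0,s_0)}^{I(k_0,t_0,s_0)+\mathcal J-1}\|x_{i+1}-x_i\|^2$, which feeds directly into a single Descent Lemma at the endpoint $\mathcal J$ and yields the improve conclusion with room to spare (you actually prove the stronger bound $-\mathcal F + c_\eta/\eta\,(\cdots)$ rather than $-\mathcal F + 2c_\eta/\eta\,(\cdots)$). This avoids the intermediate-anchor boundary-term bookkeeping entirely. Two small remarks: (i) the closing sentence about $\mathcal B(J_0)$ being absorbed by $\mathcal B(\mathcal J)$ is vestigial --- your argument never invokes $\mathcal B(J_0)$, only $\mathcal B(\mathcal J)$, so the factor $2$ is slack rather than load-bearing; (ii) ``independent $3q$-failure event'' should read ``separate'': the failure events for $\{x_i\}$ and $\{x_i'\}$ are highly correlated because the two trajectories share all randomness except one coordinate of one perturbation, but a plain union bound still delivers the stated $1-6q$ and is all that is needed.
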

\begin{proof}
First note that $x_{i} = x_i'$ for $i \leq I(k_0, t_0, s_0)$. 
From the bounds of $\|x_{I(k_0, t_0, s_0)+J} - x_{I(k_0, t_0, s_0)}\|^2$ and $\|x_{I(k_0, t_0, s_0)+J}' - x_{I(k_0, t_0, s_0)}'\|^2$, we can see that
\begin{align*}
    &\mathrm{max}\left\{\|x_{I(k_0, t_0, s_0)+J} - x_{I(k_0, t_0, s_0)}\|^2, \|x_{I(k_0, t_0, s_0)+J}' - x_{I(k_0, t_0, s_0)}'\|^2\right\} \\
    \leq&\ 8\eta J \Biggl(\mathrm{max}\left\{f(x_{I(k_0, t_0, s_0)}) - f(x_{I(k_0, t_0 s_0)+J}) - \frac{1}{8\eta} \sum_{i=I(k_0,t_0, s_0)}^{I(k_0, t_0, s_0)+J-1}\|x_{i+1} - x_i\|^2, \right. \\
    & \left. f(x_{I(k_0, t_0, s_0)}') - f(x_{I(k_0, t_0 s_0)+J}') - \frac{1}{8\eta} \sum_{i=I(k_0,t_0, s_0)}^{I(k_0, t_0, s_0)+J-1}\|x_{i+1}' - x_i'\|^2\right\} \Biggr. \\
    &+ \Biggl. \frac{c_\eta}{\eta}\left(\frac{ J\wedge K }{K} \sum_{i=I(0, t_0, s_0)}^{I(k_0, t_0, s_0)-1} \|x_{i+1} - x_i\|^2 + \frac{J\wedge KT}{KT}\sum_{i=I(0, 0, s_0)}^{I( 0, t_0, s_0)-1}\|x_{i+1} - x_{i}\|^2\right)+ \eta  {J} r^2\Biggr)
\end{align*}
for every $J \in [\mathcal J]$ with probability at least $1 - 6 q $.

We define $I(k_J, t_J, s_J) := I(k_0, t_0, s_0) + J$. Note that $s_J \geq s_0$. From (\ref{ineq: f_diff}), 
\begin{align*}
    &f(x_{I(k_0, t_0, s_0)}) - f(x_{I(k_0, t_0, s_0)+J}) - \frac{1}{8\eta} \sum_{i=I(k_0,t_0, s_0)}^{I(k_0, t_0, s_0)+J-1}\|x_{i+1} - x_i\|^2 \\
    =&\ f(x_{I(k_0, t_0, s_0)}) - f(x_{I(k_0, t_0, s_0)+\mathcal J}) \\
    &+ f(x_{I(k_0, t_0, s_0)+\mathcal J}) - f(x_{I(k_0, t_0, s_0)+J}) -  \frac{1}{8\eta} \sum_{i=I(k_0,t_0, s_0)}^{I(k_0, t_0, s_0)+J-1}\|x_{i+1} - x_i\|^2\\
    \leq&\ f(x_{I(k_0, t_0, s_0)}) - f(x_{I(k_0, t_0, s_0)+\mathcal J}) \\
    &-  \frac{1}{8\eta} \sum_{i=I(k_0,t_0, s_0)}^{I(k_0, t_0, s_0)+J-1}\|x_{i+1} - x_i\|^2\\
    &+ \frac{c_\eta}{\eta}\left(\frac{(\mathcal J - J)\wedge K }{K} \sum_{i=I(0, t_J, s_J)}^{I(k_J, t_J, s_J)-1} \|x_{i+1} - x_i\|^2 + \frac{(\mathcal J - J)\wedge KT}{KT}\sum_{i=I(0, 0, s_J)}^{I(0, t_J, s_J)-1}\|x_{i+1} - x_{i}\|^2\right) \\
    &+ \eta (\mathcal J - J)  r^2 \\
    \leq&\ f(x_{I(k_0, t_0, s_0)}) - f(x_{I(k_0, t_0, s_0)+\mathcal J}) \\
    &+ \frac{c_\eta}{\eta}\left(\frac{\mathcal J\wedge K }{K} \sum_{i=I(0, t_0, s_0)}^{I(k_0, t_0, s_0)-1} \|x_{i+1} - x_i\|^2 + \frac{\mathcal J\wedge KT}{KT}\sum_{i=I(0, 0, s_0)}^{I( 0, t_0, s_0)-1}\|x_{i+1} - x_{i}\|^2\right) \\
    &+ \eta \mathcal  J r^2.
\end{align*}
Here, for the last inequality, we used the fact that $I(0, t_J, s_J) \geq I(0, t_0, s_0)$. Also, we assumed {\color{blue}$c_\eta  \leq 1/8$}. 

Similarly, we can show that
\begin{align*}
    &f(x_{I(k_0, t_0, s_0)}') - f(x_{I(k_0, t_0, s_0)+J}') - \frac{1}{8\eta} \sum_{i=I(k_0,t_0, s_0)}^{I(k_0, t_0, s_0)+J-1}\|x_{i+1}' - x_i'\|^2 \\
    \leq&\ f(x_{I(k_0, t_0, s_0)}') - f(x_{I(k_0, t_0, s_0)+\mathcal J}') \\
    &+\frac{c_\eta}{\eta}\left(\frac{\mathcal J\wedge K }{K} \sum_{i=I(0, t_0, s_0)}^{I(k_0, t_0, s_0)-1} \|x_{i+1} - x_i\|^2 + \frac{\mathcal J\wedge KT}{KT}\sum_{i=I(0, 0, s_0)}^{I( 0, t_0, s_0)-1}\|x_{i+1} - x_{i}\|^2\right) \\
    &+ \eta \mathcal  J r^2.
\end{align*}
Therefore, we get
\begin{align*}
    &\mathrm{max}\left\{\|x_{I(k_0, t_0, s_0)+J} - x_{I(k_0, t_0, s_0)}\|^2, \|x_{I(k_0, t_0, s_0)+J}' - x_{I(k_0, t_0, s_0)}'\|^2\right\} \\
    \leq&\ 8\eta J \Biggl\{-\mathrm{min}\left\{f(x_{I(k_0, t_0, s_0)+\mathcal J}) - f(x_{I(k_0, t_0 s_0)}), f(x_{I(k_0, t_0, s_0)+\mathcal J}') - f(x_{I(k_0, t_0 s_0)}')\right\} \Biggr. \\
    &+ \Biggl. \frac{2c_\eta}{\eta}\left(\frac{\mathcal J\wedge K }{K} \sum_{i=I(0, t_0, s_0)}^{I(k_0, t_0, s_0)-1} \|x_{i+1} - x_i\|^2 + \frac{\mathcal J\wedge KT}{KT}\sum_{i=I(0, 0, s_0)}^{I( 0, t_0, s_0)-1}\|x_{i+1} - x_{i}\|^2\right) + 2\eta \mathcal  J r^2 \Biggr\}
\end{align*}
for every $J \in [\mathcal J]$.
Now, suppose that 
\begin{align}
    &\mathrm{min}\left\{f(x_{I(k_0, t_0, s_0)+\mathcal J}) - f(x_{I(k_0, t_0 s_0)}), f(x_{I(k_0, t_0, s_0)+\mathcal J}') - f(x_{I(k_0, t_0 s_0)}')\right\} \notag \\ >&\  - \mathcal F + \frac{2c_\eta}{\eta}\left(\frac{\mathcal J\wedge K }{K} \sum_{i=I(0, t_0, s_0)}^{I(k_0, t_0, s_0)-1} \|x_{i+1} - x_i\|^2 + \frac{\mathcal J\wedge KT}{KT}\sum_{i=I(0, 0, s_0)}^{I( 0, t_0, s_0)-1}\|x_{i+1} - x_{i}\|^2\right). \label{app: ineq: coupling_sequence_min}
\end{align}

Then, using  (\ref{app: ineq: coupling_sequence_min}), we obtain 
\begin{align*}
    &\mathrm{max}\left\{\|x_{I(k_0, t_0, s_0)+J} - x_{I(k_0, t_0, s_0)}\|^2, \|x_{I(k_0, t_0, s_0)+J}' - x_{I(k_0, t_0, s_0)}'\|^2\right\} \\
    \leq&\ 8\eta \mathcal J(\mathcal F + 2\eta \mathcal Jr^2).
\end{align*}
This finishes the proof. 
\end{proof}

We fix $k_0 \in [K-1]$, $t_0 \in [T-1]$, $s_0 \in [S-1]$ and $\mathcal J_{I(k_0, t_0, s_0)} \in \mathbb{N}$. Let  $\mathcal F_{I(k_0, t_0, s_0)} := c_\mathcal F \eta \mathcal J_{I(k_0, t_0, s_0)} r^2$. From this definition, (\ref{ineq: f_diff}) immediately implies that
\begin{align}
    &f(x_{I(k_0, t_0, s_0)+J})- f(x_{I(k_0, t_0, s_0)}) \notag \\
    \leq&\  \frac{c_\eta}{\eta}\left(\frac{J\wedge K }{K} \sum_{i=I(0, t_0, s_0)}^{I(k_0, t_0, s_0)-1} \|x_{i+1} - x_i\|^2 + \frac{ J\wedge KT}{KT}\sum_{i=I(0, 0, s_0)}^{I( 0, t_0, s_0)-1}\|x_{i+1} - x_{i}\|^2\right)+ \eta \mathcal  J r^2. \notag \\
    =&\  \frac{2}{c_\mathcal F}\mathcal F_{I(k_0, t_0, s_0)} + \frac{c_\eta}{\eta}\left(\frac{\mathcal J_{I(k_0, t_0, s_0)} \wedge K }{K} \sum_{i=I(0, t_0, s_0)}^{I(k_0, t_0, s_0)-1} \|x_{i+1} - x_i\|^2 + \frac{\mathcal J_{I(k_0, t_0, s_0)}\wedge KT}{KT}\sum_{i=I(0, 0, s_0)}^{I( 0, t_0, s_0)-1}\|x_{i+1} - x_{i}\|^2\right) \label{ineq: f_diff_always_bound}
\end{align}
for every $J \in [\mathcal J_{I(k_0, t_0, s_0)}]$ with probability at least $1-3q$. Here, for simplifying the notations, we set $\mathcal F:= \mathcal F_{I(k_0, t_0, s_0)}$ and $\mathcal J := \mathcal J_{I(k_0, t_0, s_0)}$. \par

We want to show the following proposition:
\begin{proposition}\label{prop: unstack}
Under the same conditions as Proposition \ref{prop: f_diff_decrease}, it holds that
\begin{align*}
    & \mathrm{max}\left\{\|x_{I(k_0, t_0, s_0)+J} - x_{I(k_0, t_0, s_0)}\|^2, \|x_{I(k_0, t_0, s_0)+J}' - x_{I(k_0, t_0, s_0)}'\|^2\right\} > 8(c_\mathcal F+2)\eta^2 \mathcal{J}^2r^2
\end{align*}
for some $J \in [\mathcal J]$ with probability at least $1 - 3q$.
\end{proposition}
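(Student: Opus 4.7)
I will argue by contradiction. Assume that for every $J \in [\mathcal J]$ the localization bound
$$\max\{\|x_{I(k_0,t_0,s_0)+J}-x_{I(k_0,t_0,s_0)}\|^2, \|x'_{I(k_0,t_0,s_0)+J}-x'_{I(k_0,t_0,s_0)}\|^2\} \le 8(c_{\mathcal F}+2)\eta^2\mathcal J^2 r^2$$
holds. I will then exhibit a step $J \in [\mathcal J]$ for which the displacement of at least one of the two sequences from $\widetilde x_{I(k_0,t_0,s_0)}$ exceeds this bound, contradicting the assumption. The driving mechanism is that the coupling introduces an initial difference purely along $\bm e_{\min}$ of size $2\eta r_0$ at iteration $\widetilde I+1$, which grows geometrically under the (nearly linear) dynamics governed by $I-\eta\mathcal H$ since $\mathcal H$ has eigenvalue $-\lambda<-\sqrt{\rho\varepsilon}$ in that direction.

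\textbf{Setting up the difference recursion.} Define $w_j := x_{I(k_0,t_0,s_0)+j} - x'_{I(k_0,t_0,s_0)+j}$. By construction of the coupling, $w_j=0$ for $j\le \widetilde I - I(k_0,t_0,s_0)$, and $w_{\widetilde I+1-I(k_0,t_0,s_0)} = 2\eta r_0 \bm e_{\min}$. For subsequent iterations the noises in the two sequences agree, so
$$w_{j+1} = w_j - \eta(v_{I(k_0,t_0,s_0)+j} - v'_{I(k_0,t_0,s_0)+j}).$$
Writing $v_i - v'_i = (\nabla f(x_i)-\nabla f(x'_i)) + \Delta_i$ with $\Delta_i := (v_i-\nabla f(x_i))-(v'_i-\nabla f(x'_i))$, and using $\nabla f(x_i)-\nabla f(x'_i) = \mathcal H w_{i-I(k_0,t_0,s_0)} + e_i$ where $\|e_i\|\le \rho \cdot R\cdot \|w_{i-I(k_0,t_0,s_0)}\|$ with $R$ the localization radius (by Hessian Lipschitzness and the assumed localization), we unroll to get
$$w_{\mathcal J} = (I-\eta\mathcal H)^{\mathcal J - \widetilde I_0 - 1}(2\eta r_0 \bm e_{\min}) - \eta\sum_{i=\widetilde I+1}^{I(k_0,t_0,s_0)+\mathcal J-1}(I-\eta\mathcal H)^{I(k_0,t_0,s_0)+\mathcal J-1-i}(\Delta_i + e_i),$$
where $\widetilde I_0 := \widetilde I - I(k_0,t_0,s_0)$.

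\textbf{Controlling the error terms.} The Hessian-Lipschitz remainder $\sum \|(I-\eta\mathcal H)^{\cdot}e_i\|$ is straightforwardly bounded by $\eta \rho R \cdot (1+\eta\lambda)^{\mathcal J} \cdot \mathcal J \cdot \max_j\|w_j\|$ which, with $\mathcal J=\widetilde\Theta(1/(\eta\lambda))$ and $R$ small relative to $\lambda/\rho$, absorbs into a constant multiple of the leading term. The hard part is bounding the BVR deviation term $\|\sum_i (I-\eta\mathcal H)^{\mathcal J-1-i}\Delta_i\|$. Here the choice of $\widetilde I$ is essential: it is placed at a reset boundary of the bias-variance reduced estimator relative to the horizon $1/(\eta\lambda)$, so that $\Delta_i$ for $i>\widetilde I$ telescopes into increments of the form $\nabla\ell(x_i,z)-\nabla\ell(x_{i-1},z)-\nabla\ell(x'_i,z)+\nabla\ell(x'_{i-1},z)$ (and analogous full-gradient cross terms). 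Each such increment is bounded almost surely by $\rho(\|w_i\|+\|w_{i-1}\|)\cdot$(step size), using Hessian Lipschitzness, and has conditional mean controlled by the heterogeneity $\zeta$. Applying Lemma \ref{lem: martingale_concentration_conditioned} to the martingale-part, exactly parallel to the $A$, $B$, $C$ decomposition in the proof of Proposition \ref{prop: first_order_optimality} but now on the \emph{difference} process, together with the operator-norm bound from Lemma \ref{lem: op_norm_bound} to handle $(I-\eta\mathcal H)^{\cdot}$, shows that with probability at least $1-3q$ this cumulative deviation is dominated by $O(\eta(1+\eta\lambda)^{\mathcal J}\max_j\|w_j\|)$ times small factors $\sqrt{K\zeta^2/L^2 + KL^2/(bL^2) + KTL^2/(PbL^2)}$, which by the choice of $\eta$ are $\widetilde O(1)$.

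\textbf{Concluding the contradiction.} With these bounds and the choice $\mathcal J = \widetilde\Theta(1/(\eta\lambda))$ making $(1+\eta\lambda)^{\mathcal J-\widetilde I_0-1} \ge \widetilde\Omega(\sqrt d/(c_\mathcal F+2)^{1/2}\cdot \mathcal J\lambda/q)$ after absorbing the poly-logarithmic factors, and using the high-probability lower bound $r_0 \ge qr/(2\sqrt d)$ from (\ref{ineq: unstack_prop}), we obtain
$$\|w_{\mathcal J}\| \;\ge\; \tfrac{1}{2}(1+\eta\lambda)^{\mathcal J-\widetilde I_0-1}\cdot 2\eta r_0 \;>\; 2\sqrt{8(c_\mathcal F+2)}\,\eta\mathcal J r.$$
But under the contradiction hypothesis the triangle inequality gives $\|w_{\mathcal J}\|\le 2\sqrt{8(c_\mathcal F+2)}\,\eta\mathcal J r$, a contradiction. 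Hence the localization must fail for some $J\in[\mathcal J]$, and the total failure probability is at most $3q$ (one $q$ for (\ref{ineq: unstack_prop}), one for each of the two martingale concentration applications on the coupled sequences $\{x_i\}$ and $\{x_i'\}$; the almost-sure terms do not add to the probability budget). The principal obstacle is the BVR difference bound, since one must track both the SARAH-style accumulation across local steps, communications, and super-epochs, and the coupling-induced cancellations—this is exactly the novel ingredient alluded to in the main text.
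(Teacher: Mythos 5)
Your plan follows the same broad strategy as the paper: contradiction against the localization hypothesis, unrolling the difference recursion for $w_j = x_{I(k_0,t_0,s_0)+j} - x'_{I(k_0,t_0,s_0)+j}$, controlling the BVR-deviation term via an $\hat A/\hat B/\hat C$-style decomposition and Lemma~\ref{lem: martingale_concentration_conditioned}, invoking Lemma~\ref{lem: op_norm_bound} for the operator powers, and using the $r_0$ lower bound from (\ref{ineq: unstack_prop}) together with the choice of $c_{\mathcal J}$ to beat $U_\Delta$. You also correctly flag that the positioning of $\widetilde I$ at a reset boundary of the BVR estimator is what makes the telescoping of the deviation increments work.

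There is, however, a genuine gap in the concluding step. After you unroll, your bound on the cumulative deviation is of the form $O\bigl(\eta(1+\eta\lambda)^{\mathcal J}\max_j\|w_j\|\bigr)$ times small factors. You then jump to
\[
\|w_{\mathcal J}\| \;\ge\; \tfrac12(1+\eta\lambda)^{\mathcal J-\widetilde I_0-1}\cdot 2\eta r_0,
\]
but this does not follow: the error term is self-referential in $\max_j\|w_j\|$, which a priori could be as large as, or larger than, the leading geometric term at step $\mathcal J$, so subtracting it does not yield the stated lower bound. The paper breaks this circularity with a simultaneous induction that establishes, for all $I$ up to $I(k_0,t_0,s_0)+\mathcal J$, the upper bounds $\|w_I\|\le U_w(I) = c_{\mathrm{upper}}^{(w)}\eta(1+\eta\lambda)^{I-\widetilde I}r_0$ and $\|y_I\|\le U_y(I)$ with matched geometric growth; only then is the cumulative error provably at most $\tfrac{1}{3c_{\mathrm{upper}}^{(w)}}U_w(\mathcal J) = \tfrac13\eta(1+\eta\lambda)^{\widetilde{\mathcal J}}r_0$, giving $\|w_{\mathcal J}\|\ge \tfrac23\eta(1+\eta\lambda)^{\widetilde{\mathcal J}}r_0$. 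Your proposal never sets up this inductive control on $\max_j\|w_j\|$, so the "absorbs into a constant multiple of the leading term" assertion is unsupported. Relatedly, the case analysis on where $\widetilde I$ sits (four cases depending on how $1/(\eta\lambda)$ compares to $\sqrt K$, $K$, $KT$) is what lets the single-step noise contribution $\|w_{\widetilde I+1}-w_{\widetilde I}\|=\eta r_0$ enter the $\hat A$ and $\hat C$ sums with the right small prefactors, and this part is only gestured at. As a minor point, the two Lemma~\ref{lem: martingale_concentration_conditioned} applications are to the difference-process martingales $\hat A$ and $\hat C$, not to $\{x_i\}$ and $\{x_i'\}$ separately; your probability count of $3q$ is still correct, but the attribution is off.
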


\begin{proof}
We consider the event $H$ that is an intersection of (\ref{ineq: unstack_prop}), (\ref{ineq: bound_A_hat}) and (\ref{ineq: bound_C_hat}) (derived later), which holds probability at least $1-3q$. From now, the arguments are conditioned on $H$. Observe that $8\eta \mathcal J(\mathcal F + 2\eta\mathcal Jr^2) = 8(c_\mathcal F + 2)\eta^2\mathcal J^2 r^2$.

Suppose that
\begin{align*}
    & \mathrm{max}\left\{\|x_{I(k_0, t_0, s_0)+J} - x_{I(k_0, t_0, s_0)}\|^2, \|x_{I(k_0, t_0, s_0)+J}' - x_{I(k_0, t_0, s_0)}'\|^2\right\} \leq 8(c_\mathcal F+2)\eta^2 \mathcal{J}^2r^2, 
\end{align*}
which implies
\begin{align*}
    & \mathrm{max}\left\{\|x_{I(k_0, t_0, s_0)+J} - x_{I(k_0, t_0, s_0)}\|, \|x_{I(k_0, t_0, s_0)+J}' - x_{I(k_0, t_0, s_0)}'\|\right\} \leq 2\sqrt{2(c_\mathcal F + 2)}\eta \mathcal J r.
\end{align*}
for every $J \in [\mathcal J]$.  
Then, we have
\begin{align*}
    & \mathrm{max}\left\{\|x_{I(k_0, t_0, s_0)+J} - \widetilde x_{I(k_0, t_0, s_0)}\|, \|x_{I(k_0, t_0, s_0)+J}' - \widetilde x_{I(k_0, t_0, s_0)}\|\right\}\\
    \leq&\  \mathrm{max}\left\{\|x_{I(k_0, t_0, s_0)+J} - x_{I(k_0, t_0, s_0)}\|, \|x_{I(k_0, t_0, s_0)+J}' -  x_{I(k_0, t_0, s_0)}'\|\right\} + \eta r \\
    \leq&\ 4\sqrt{c_\mathcal F + 2}\eta \mathcal J r=: U_\Delta .
\end{align*}

We will derive a contradiction. Now, we consider the quantity $\|x_i - x_i'\|^2$ for $i > \widetilde I$. $w_i$ denotes $x_i - x_i'$. Since $\xi_i = \xi_i'$ for $i \neq \hat I$, for $I \geq \widetilde I$, we have that 
\begin{align*}
    w_{I+1} =&\ x_{I+1} - x_{I+1}' \\
    =&\ w_I - \eta(v_I - v_I') - \eta(\xi_I - \xi_I') \\
    =&\ w_I - \eta(\nabla f(x_I) - \nabla f(x_I') + v_I - \nabla f(x_I) - v_I' + \nabla f(x_I'))\\
    =&\ w_I - \eta((\mathcal H +  \Delta_I)w_I +  v_I - \nabla f(x_I) - v_I' + \nabla f(x_I')) \\
    =&\ (1 - \eta \mathcal H)w_I - \eta( \Delta_I w_I + y_I) \\
    =&\ \eta(1-\eta \mathcal H)^{I - \widetilde I} \hat \xi_{\widetilde I}
    - \eta \sum_{i=\widetilde I}^{I} (1 - \eta \mathcal H)^{I - i} ( \Delta_i w_i + y_i), 
\end{align*}
where $\mathcal H := \nabla^2 f(\widetilde x_{I(k_0, t_0, s_0)})$, $ \Delta_i := \int_0^1 (\nabla^2 f(\theta x_i + (1-\theta) x_i') - \mathcal H)d\theta$, $y_i := v_i - \nabla f(x_i) - v_i' + \nabla f(x_i')$ and $\hat \xi_i = \xi_i - \xi_i'$. Let $\lambda := -\lambda_\mathrm{min}(\nabla^2 f(\widetilde x_{I(k_0, t_0, s_0)}) > \sqrt{\rho \varepsilon}$. For the last inequality, we used $\widetilde x_{\widetilde I} = \widetilde x_{\widetilde I}'$. \\
First we give an upper bound of the term $\|\eta(1 - \eta \mathcal H)^{I - \widetilde I}\hat \xi_{\widetilde I}\|$. Since $\hat \xi_{\widetilde I} = \xi_{\widetilde I} - \xi_{\widetilde I}' = 2 \langle \xi_{\widetilde I}, \bm{e}_\mathrm{min}\rangle \bm{e}_\mathrm{min}$, we have
\begin{align*}
     \eta (1 - \eta\mathcal{H})^{I-\widetilde I}\hat \xi_{\widetilde I} = 2\eta(1+\eta \lambda)^{I-\widetilde I}\langle \xi_{\widetilde I}, \bm{e}_\mathrm{min}\rangle \bm e_\mathrm{min}.
\end{align*}
Since $r_0 = 2|\langle \xi_{\widetilde I}, \bm{e}_\mathrm{min}\rangle|$, we have
\begin{align}
    \left\|\eta (1 - \eta\mathcal{H})^{I-\widetilde I}\hat \xi_{\widetilde I}\right\| =  \eta(1+\eta\lambda)^{I - \widetilde I}r_0 =: U_{\hat \xi}(I). \label{ineq: cumulative_noise}
\end{align}

From now, we will show that the following claims hold for $ I \in \{0, \ldots, I(k_0, t_0, s_0)+\mathcal J\}$ with probability at least $1 -  q $ using mathematical induction:
\begin{align*}
    \|w_I\| \leq c_\mathrm{upper}^{(w)} \cdot\eta(1+\eta\lambda)^{I-\widetilde I}r_0=: U_{w}(I)
\end{align*}
for  $c_\mathrm{upper}^{(w)} = \widetilde \Theta(1) > 0$, and
\begin{align*}
    \|y_I\| \leq c_\mathrm{upper}^{(y)} \cdot \eta^2\lambda \left(L+ \frac{ \sqrt{K}L}{\sqrt{b}} + K\zeta + \frac{\sqrt{KT}L}{\sqrt{Pb}}\right)(1 + \eta\lambda)^{I - \widetilde I} r_0  =: U_{y}(I)
\end{align*}
for some $c_\mathrm{upper}^{(y)} = \widetilde \Theta(1) > 0$. Observe that $U_\xi$, $U_w$ and $U_y$ are monotonically increasing with respect to $I$ for $I \geq \widetilde I)$. First we check the case $I \in \{0, \ldots, \widetilde I\}$. In this case, the both claims trivially holds from the definition of $\{x_i'\}_{i=0}^{KTS-1}$ because $w_{i} = y_{i} = 0$ for $i \leq \widetilde I$. Suppose that the two claims hold for the cases $\{0, \ldots, I\}$ with $I \geq \widetilde I$. We want to show that the two claims also hold for the case $I+1 > \widetilde I$. 
\begin{align*}
    \|w_{I+1}\| \leq&\ \eta \sum_{i=\widetilde I}^I (1+\eta\lambda)^{I-i}\| \Delta_i w_i\| + \eta \sum_{i=\widetilde I}^I (1+\eta\lambda)^{I-i}\|y_i\| + U_{\hat \xi}(I+1).
\end{align*}
Here we used inequality (\ref{ineq: cumulative_noise}). 
Observe that
\begin{align*}
    \| \Delta_i w_i\| \leq&\ \| \Delta_i\|\|w_i\| \\
    \leq&\ \| \Delta_i\| U_w(i) \\
    \leq&\ \| \Delta_i\| U_w(I+1)
\end{align*}
and 
\begin{align*}
    \| \Delta_i\| \leq&\ \rho \int_0^1 \|\theta x_i + (1 - \theta) x_i' - \widetilde x_{I(k_0, t_0, s_0)}\|d\theta \\
    \leq&\  \rho \mathrm{max}\{\|x_i - \widetilde x_{I(k_0, t_0, s_0)}\|, \|x_i' - \widetilde x_{I(k_0, t_0, s_0)}'\|\} \\
    \leq&\ \rho U_\Delta.
\end{align*}
Hence, we get
\begin{align}
    &\eta \sum_{i=\widetilde I}^I (1+\eta\lambda)^{I-i}\| \Delta_i w_i\| \notag \\
    \leq&\ \eta (I-\widetilde I) \rho U_\Delta  U_w(I+1) \notag \\
    \leq&\ \eta \rho  \mathcal J U_\Delta   U_w(I+1) \label{ineq: delta_w_sum} 
\end{align}
Similarly, from the inductive assumption on $\|y_i\|$,
\begin{align}
    &\eta \sum_{i=\widetilde I}^I (1+\eta\lambda)^{I-i}\|y_i\| \notag \\
    \leq&\ c_\mathrm{upper}^{(y)} \eta\cdot \eta\lambda \mathcal J \cdot \eta \left(L+\frac{ \sqrt{K}L}{\sqrt{b}} + K\zeta + \frac{\sqrt{KT}L}{\sqrt{Pb}}\right)(1 + \eta\lambda)^{I - \widetilde I} r_0 \notag \\
    \leq&\ \left(\frac{c_\mathrm{upper}^{(y)}\eta \lambda \mathcal J\left(L+\frac{ \sqrt{K}L}{\sqrt{b}} + K\zeta + \frac{\sqrt{KT}L}{\sqrt{Pb}}\right)}{c_\mathrm{upper}^{(w)}}\right) U_w(I). \label{ineq: y_sum}
\end{align}

These results imply
\begin{align*}
    \|w_{I+1}\| \leq&\ \eta \rho \mathcal J U_\Delta  U_w(I+1) +\left( \frac{c_\mathrm{upper}^{(y)}\eta \lambda \mathcal J\cdot\eta \left(L+\frac{ \sqrt{K}L}{\sqrt{b}} + K\zeta + \frac{\sqrt{KT}L}{\sqrt{Pb}}\right)}{c_\mathrm{upper}^{(w)}}\right)U_w(I) + U_{\hat \xi}(I+1) \\
    \leq&\ \left(\eta \rho \mathcal J U_\Delta  +\frac{c_\mathrm{upper}^{(y)}\eta \lambda \mathcal J\cdot \eta \left(L+ \frac{ \sqrt{K}L}{\sqrt{b}} + K\zeta + \frac{\sqrt{KT}L}{\sqrt{Pb}}\right)}{c_\mathrm{upper}^{(w)}} + \frac{1}{c_\mathrm{upper}^{(w)}}\right)U_w(I+1).
\end{align*}
Here, we again used the monotonicity of $U_w(i)$ with respect to $i$. Now, we define 
{\color{blue}$\mathcal J := \mathcal J_{I(k_0, t_0, s_0)} := c_\mathcal{J}/(\eta\lambda)$} ($\leq c_\mathcal J/(\eta \sqrt{\rho \varepsilon})$) for some  $c_\mathcal{J} = \widetilde \Theta (1) \geq 2$, which does not depend on index $I(k_0, t_0, s_0)$ and will be determined later. 
Also, we set {\color{blue}$c_\mathrm{upper}^{(w)} \geq 3 $ and $c_\mathrm{upper}^{(y)} := c_\mathrm{upper}^{(w)}$}. These definitions with appropriate {\color{blue}$\eta \leq  1/(c_\mathcal J(L + \sqrt{K}L/\sqrt{b} + K\zeta + \sqrt{KT}L/\sqrt{Pb}))\}\times 1/( 6c_\mathrm{upper}^{(w)}) = \widetilde \Theta(1/L \wedge \sqrt{b/K}/L) \wedge 1/(K\zeta) \wedge \sqrt{Pb}/(\sqrt{KT}L)$} and {\color{blue} $c_r \leq 1/(24\sqrt{c_\mathcal F + 2}c_\mathcal J^2 c_\mathrm{upper}^{(w)})$} give
\begin{align}
    \eta \rho \mathcal J U_\Delta  \leq 4 c_r\sqrt{c_\mathcal F + 4} \times \eta^2\mathcal J^2 \rho \varepsilon \leq  \frac{1}{6c_\mathrm{upper}^{(w)}} \leq \frac{1}{18} \label{ineq: delta_w_sum_coef}
\end{align}
and 
\begin{align}
    \frac{c_\mathrm{upper}^{(y)}\eta \lambda \mathcal J \cdot \eta \left(L+ \frac{ \sqrt{K}L}{\sqrt{b}} + K\zeta + \frac{\sqrt{KT}L}{\sqrt{Pb}}\right)}{c_\mathrm{upper}^{(w)}} \leq \frac{1}{6c_\mathrm{upper}^{(w)}} \leq \frac{1}{18}. \label{ineq: y_sum_coef}
\end{align}
Hence, we obtain 
\begin{align*}
    \|w_{I+1}\| \leq \frac{4}{9}U_w(I+1) \leq U_w(I+1).
\end{align*}

Next, we consider the quantity $\|y_{I+1}\|$. Let $k, t, s$ be $I+1 = I(k, t, s)$. We define 
\begin{align*}
\begin{cases}
    \alpha_{I(\kappa, t, s)} :=& g_{I(\kappa, t, s)} - g_{I(\kappa, t, s)}^{\mathrm{ref}} + \nabla f_{p_{t, s}}(x_{I(\kappa-1, t, s)}) - \nabla f_{p_{t, s}}(x_{I(\kappa, t, s)}), \\
    \beta_{I(\kappa, t, s)} :=& \nabla f_{p_{t, s}}(x_{I(\kappa, t, s)}) - \nabla f_{p_{t, s}}(x_{I(\kappa-1, t, s)}) + \nabla f(x_{I(\kappa-1, t, s)}) - \nabla f(x_{I(\kappa, t, s)}), \\
    \gamma_{I(0, \tau, s)} :=&  \frac{1}{P}\sum_{p=1}^P (g_{I(0, \tau, s)}^{(p)} - g_{I(0, \tau, s)}^{(p), \mathrm{ref}} + \nabla f(x_{I(0, \tau-1, s)}) - \nabla f(x_{I(0, \tau, s)}).
\end{cases}
\end{align*}
Similarly, we define
\begin{align*}
\begin{cases}
    \alpha_{I(\kappa, t, s)}' :=& g_{I(\kappa, t, s)}' - (g_{I(\kappa, t, s)}^{\mathrm{ref}})' + \nabla f_{p_{t, s}}(x_{I(\kappa-1, t, s)}') - \nabla f_{p_{t, s}}(x_{I(\kappa, t, s)}'), \\
    \beta_{I(\kappa, t, s)}':=& \nabla f_{p_{t, s}}(x_{I(\kappa, t, s)}') - \nabla f_{p_{t, s}}(x_{I(\kappa-1, t, s)}') + \nabla f(x_{I(\kappa-1, t, s)}') - \nabla f(x_{I(\kappa, t, s)}'), \\
    \gamma_{I(0, \tau, s)}' :=&  \frac{1}{P}\sum_{p=1}^P ((g_{I(0, \tau, s)}^{(p)})' - (g_{I(0, \tau, s)}^{(p), \mathrm{ref}})' + \nabla f(x_{I(0, \tau-1, s)}') - \nabla f(x_{I(0, \tau, s)}')
\end{cases}
\end{align*}
that are associated with sequence $\{x_i'\}_{i=I(k_0, t_0, s_0)}^\infty$.
Let $\hat{\alpha}_{I(\kappa, t, s)} = \alpha_{I(\kappa, t, s)} - \alpha_{I(\kappa, t, s)}'$, $\hat{\beta}_{I(\kappa, t, s)} = \beta_{I(\kappa, t, s)} - \beta_{I(\kappa, t, s)}'$ and $\hat{\gamma}_{I(\kappa, t, s)} = \gamma_{I(\kappa, t, s)} - \gamma_{I(\kappa, t, s)}'$. Then we further define
\begin{align*}
    \begin{cases}
        \hat A_{I(k, t, s)} :=& \sum_{\kappa=0}^{k-1}\hat \alpha_{I(\kappa+1, t, s)}, \\
        \hat B_{I(k, t, s)} :=& \sum_{\kappa=0}^{k-1}\hat \beta_{I(\kappa+1, t, s)}, \\
        \hat C_{I(0, t, s)} :=& \sum_{\tau=0}^{t-1}\hat \gamma_{I(0, \tau+1, s)}
    \end{cases}
\end{align*}
These definitions give 
\begin{align*}
    y_{I+1} =&\ v_{I+1} - \nabla f(x_{I+1}) - v_{I+1}' + \nabla f(x_{I+1}') \\ 
    =&\ \hat A_{I(k, t, s)} + \hat B_{I(k, t, s)} + \hat C_{I(0, t, s)}\\
    &+ v_{I(0, 0, s)} - \nabla f(x_{I(0, 0, s)}) - v_{I(0, 0, s)}' + \nabla f(x_{I(0, 0, s)}')
\end{align*}

This implies
\begin{align*}
    \|y_{I+1}\| =&\ \|v_{I+1} - \nabla f(x_{I+1}) - v_{I+1}' + \nabla f(x_{I+1}')\| \\ 
    \leq&\ \left\|\hat A_{I(k, t, s)}\right\| + \left\|\hat B_{I(k, t, s)}\right\| + \left\|\hat C_{I(0, t, s)}\right\|.
\end{align*}
Here, we used the fact that $v_{I(0, 0, s)} - \nabla f(x_{I(0, 0, s)}) - v_{I(0, 0, s)}' + \nabla f(x_{I(0, 0, s)}') = 0$. 
\subsection*{Bounding $\|\hat A_{I(k, t, s)}\|$}
Observe that $\hat \alpha_I(\kappa, t, s)$ satisfies \begin{align*}
    \mathbb{E}[\hat \alpha_I(\kappa, t, s)\mid \mathcal F_{I(\kappa-1, t, s)}] = 0.
\end{align*}
Let 
\begin{align*}
    &\hat u_{l, I(\kappa, t, s)}^{(\alpha)} \\
    :=&\  \nabla \ell(x_{I(\kappa, t, s)}, z_{l, I(\kappa, t, s)}) - \nabla \ell(x_{I(\kappa, t, s)}', z_{l, I(\kappa, t, s)}) - (\nabla \ell(x_{I(\kappa-1, t, s)}, z_{l, I(\kappa, t, s)}) - \nabla \ell(x_{I(\kappa-1, t, s)}', z_{l, I(\kappa, t, s)}) \\
    &+ (\nabla f_{p_{t, s}}(x_{I(\kappa-1, t, s)}) - \nabla f_{p_{t, s}}(x_{I(\kappa-1, t, s)}')) - (\nabla f_{p_{t, s}}(x_{I(\kappa, t, s)}) - \nabla f_{p_{t, s}}(x_{I(\kappa, t, s)}')). 
\end{align*}
Note that $\mathbb{E}[\hat u_{l, I(\kappa, t, s)}^{(\alpha)}| \mathcal F_{I(\kappa, t, s)-1}] = 0$ and $\hat \alpha_{I(\kappa, t, s)} = (1/b)\sum_{l=1}^{b'} \hat u_{l, I(\kappa, t, s)}^{(\alpha)}$.
Observe that 
\begin{align*}
    &\|\hat u_{l, I(\kappa, t, s)}^{(\alpha)}\| \\
    =&\ \|\nabla \ell(x_{I(\kappa, t, s)}, z_{l, I(\kappa, t, s)}) - \nabla \ell(x_{I(\kappa, t, s)}', z_{l, I(\kappa, t, s)}) - (\nabla \ell(x_{I(\kappa-1, t, s)}, z_{l, I(\kappa, t, s)}) - \nabla \ell(x_{I(\kappa-1, t, s)}', z_{l, I(\kappa, t, s)})) \\
    &+ (\nabla f_{p_{t, s}}(x_{I(\kappa-1, t, s)}) - \nabla f_{p_{t, s}}(x_{I(\kappa-1, t, s)}')) - (\nabla f_{p_{t, s}}(x_{I(\kappa, t, s)}) - \nabla f_{p_{t, s}}(x_{I(\kappa, t, s)}'))\| \\
    =&\ \left\|\int_0^1 \nabla^2 \ell(\theta x_{I(\kappa, t, s)} + (1-\theta)x_{I(\kappa, t, s)}', z_{l, I(\kappa, t, s)})d\theta(x_
    {I(\kappa, t, s)} - x_{I(\kappa, t, s)}')\right. \\
    &- \int_0^1 \nabla^2 \ell(\theta x_{I(\kappa-1, t, s)} + (1-\theta)x_{I(\kappa-1, t, s)}', z_{l, I(\kappa, t, s)})d\theta(x_
    {I(\kappa-1, t, s)} - x_{I(\kappa-1, t, s)}') \\
    &+\int_0^1 \nabla^2 f_{p_{t, s}}(\theta x_{I(\kappa, t, s)} + (1-\theta)x_{I(\kappa-1, t, s)}')d\theta(x_
    {I(\kappa, t, s)} - x_{I(\kappa, t, s)}') \\
    &- \left. \int_0^1 \nabla^2 f_{p_{t, s}}(\theta x_{I(\kappa-1, t, s)} + (1-\theta)x_{I(\kappa-1, t, s)}')d\theta(x_
    {I(\kappa-1, t, s)} - x_{I(\kappa-1, t, s)}')\right\| \\
    =&\ \|\mathcal H_{z_{l, I(\kappa, t, s)}}w_{I(\kappa, t, s)} +  \Delta_{ z_{l, I(\kappa, t, s)}, I(\kappa, t, s)}w_{I(\kappa, t, s)} - (\mathcal H_{z_{l, I(\kappa, t, s)}}w_{I(\kappa-1, t, s)} +  \Delta_{z_{l, I(\kappa, t, s)}, I(\kappa-1, t, s)}w_{I(\kappa-1, t, s)}) \\
    &+ \mathcal H_{p_{t, s}} w_{I(\kappa, t, s)} +  \Delta_{p_{t, s}, I(\kappa, t, s)} w_{I(\kappa, t, s)}- (\mathcal H_{p_{t, s}} w_{I(\kappa-1, t, s)} +  \Delta_{p_{t, s}, I(\kappa-1, t, s)}w_{I(\kappa-1, t, s)})\|\\ 
    \leq&\ \|(\mathcal H_{z_{l, I(\kappa, t, s)}} - \mathcal H_{p_{t, s}})(w_{I(\kappa, t, s)} - w_{I(\kappa-1, t, s)})\| \\
    &+ \|( \Delta_{I(\kappa, t, s), z_{l, I(\kappa, t, s)}} -  \Delta_{I(\kappa, t, s)})w_{I(\kappa, t, s)}\| + \|( \Delta_{I(\kappa-1, t, s), z_{l, I(\kappa, t, s)}} -  \Delta_{I(\kappa-1, t, s)})w_{I(\kappa-1, t, s)}\| \\
    \leq&\ 2L\|w_{I(\kappa, t, s)} - w_{I(\kappa-1, t, s)}\| \\
    &+ 2\rho \mathrm{max}\{\|x_{I(\kappa, t, s)} - \widetilde x_{I(k_0, t_0, s_0)}\|, \|x_{I(\kappa, t, s)}' - \widetilde x_{I(k_0, t_0, s_0)}\|, \\
    &\hspace{4.6em}\|x_{I(\kappa-1, t, s)} - \widetilde x_{I(k_0, t_0, s_0)}\|, \|x_{I(\kappa-1, t, s)}' - \widetilde x_{I(k_0, t_0, s_0)}\|\}(\|w_{I(\kappa, t, s)}\| + \|w_{I(\kappa-1, t, s)}\|) \\
    \leq&\ 2L\|w_{I(\kappa, t, s)} - w_{I(\kappa-1, t, s)}\| + 4\rho U_\Delta  U_w(I+1).
\end{align*}
Here, $\mathcal H_{z} := \nabla^2 \ell(\widetilde x_{I(k_0, t_0, s_0)}, z)$, $\mathcal H_{p_{t, s}} := \nabla^2 f_{p_{t, s}}(\widetilde x_{I(k_0, t_0, s_0)})$, $\Delta_{z, I(\kappa, t, s)} := \int_0^1 (\nabla^2 \ell(\theta x_{I(\kappa, t, s)} + (1 - \theta)x_{I(\kappa, t, s)}', z) - \mathcal H_{z})d\theta$ and $\Delta_{p_{t, s}, I(\kappa, t, s)} := \int_0^1 (\nabla^2 f_{p_{t, s}}(\theta x_{I(\kappa, t, s)} + (1 - \theta)x_{I(\kappa, t, s)}') - \mathcal H_{p_{t, s}})d\theta$.  \\
We define 
\begin{align*}
    \hat \sigma_{I(\kappa, t, s)}^{(\alpha)} :=&\ 2L\|w_{I(\kappa, t, s)} - w_{I(\kappa-1, t, s)}\| +4\rho U_\Delta  U_w(I+1).
\end{align*}
Here, for the last inequality, we used the  inductive assumption on $\|w_{I(\kappa, t, s)}\|$ for $I(\kappa, t, s) \leq I(k-1, t, s)$ and the proven bound for $\|w_{I(k, t, s)}\|$. Also, we used the simple fact that $(1+\eta\lambda)^{I(\kappa, t, s) - I(k_0, t_0, s_0)} \leq (1+\eta\lambda)^{I+1 - I(k_0, t_0, s_0)}$ 
Hence, we have
\begin{align*}
    \mathbb{P}(\|\hat u_{l, I(\kappa, t, s)}^{(\alpha)}\|\geq s \mid \mathfrak F_{I(\kappa-1, t, s)}) \leq 2e^{-\frac{s^2}{2\left(\hat \sigma_{I(\kappa, t, s)}^{(\alpha)}\right)^2}}
\end{align*} 
for every $s \in \mathbb{R}$ and $\kappa \in [k]$. Also note that $\{\hat u_{l, I(\kappa, t, s)}^{(\alpha)}\}_{l=1}^{b_\kappa}$ is i.i.d. sequence conditioned on $\mathfrak F_{I(\kappa-1, t, s)}$.  Also note that $\|\hat \alpha_{I(\kappa, t, s)}\| \leq 8 G$ almost surely from Assumption \ref{assump: bounded_loss_gradient}. From these results, we can use Lemma \ref{lem: martingale_concentration_conditioned} with $A = 8kG$ and $a = \widetilde \varepsilon'$ ($\widetilde \varepsilon'$ is some positive number and will be defined later) and get
\begin{align}
    \left\|\hat A_{I(k, t, s)}\right\| \leq c\sqrt{\left(\left(\sum_{\kappa=0}^{k-1}\frac{1}{b_{\kappa+1}} \left(\hat \sigma_{I(\kappa+1, t, s)}^{(\alpha)}\right)^2\right)+\widetilde \varepsilon'\right)\left(\mathrm{log}\frac{2KTSd}{ q }+\mathrm{log}\mathrm{log}\frac{8KG}{\widetilde \varepsilon'}\right)} \label{ineq: bound_A_hat}
\end{align}
for every $k \in [K]\cup\{0\}$, $t\in [T-1]\cup\{0\}$ and $s \in [S]\cup\{0\}$ with probability at least $1 -  q $ for some constant $c > 0$. Note that this event always holds under $H$. 

\subsection*{Bounding $\|\hat B_{I(k, t, s)}\|$}
Observe that
\begin{align*}
    \hat B_{I(k, t, s)} =&\ \nabla f_{p_{t, s}}(x_{I(k, t, s)}) - \nabla f_{p_{t, s}}(x_{I(0, t, s)}) + \nabla f(x_{I(0, t, s)}) - \nabla f(x_{I(k, t, s)}) \\
    &+ \nabla f_{p_{t, s}}(x_{I(k, t, s)}') - \nabla f_{p_{t, s}}(x_{I(0, t, s)}') + \nabla f(x_{I(0, t, s)}') - \nabla f(x_{I(k, t, s)}') \\
    =&\ \int_0^1 \nabla^2 f_{p_{t, s}}(\theta x_{I(k, t, s)} + (1-\theta)x_{I(k, t, s)}')d\theta (x_{I(k, t, s)} - x_{I(k, t, s)}') \\
    &- \int_0^1 \nabla^2 f_{p_{t, s}}(\theta x_{I(0, t, s)} + (1-\theta)x_{I(0, t, s)}')d\theta (x_{I(0, t, s)} - x_{I(0, t, s)}') \\
    &+ \int_0^1 \nabla^2 f(\theta x_{I(k, t, s)} + (1-\theta)x_{I(k, t, s)}')d\theta (x_{I(k, t, s)} - x_{I(k, t, s)}') \\
    &- \int_0^1 \nabla^2 f(\theta x_{I(0, t, s)} + (1-\theta)x_{I(0, t, s)}')d\theta (x_{I(0, t, s)} - x_{I(0, t, s)}') \\
    =&\ (\mathcal H_{p_{t, s}} +  \Delta_{p_{t, s}, I(\kappa, t, s)})w_{I(k, t, s)} - (\mathcal H_{p_{t, s}} +  \Delta_{p_{t, s}, I(0, t, s)}) w_{I(0, t, s)}\\
    &+ (\mathcal H +  \Delta_{I(0, t, s)})w_{I(0, t, s)} - (\mathcal H +  \Delta_{I(k, t, s)})w_{I(k, t, s)} \\
    =&\ (\mathcal H_{p_{t, s}} - \mathcal H)(w_{I(k, t, s)} - w_{I(0, t, s)}) \\
    &+ ( \Delta_{I(k, t, s), p_{t, s}} -  \Delta_{I(k, t, s)})w_{I(k, t, s)} - ( \Delta_{I(0, t, s), p_{t, s}} -  \Delta_{I(0, t, s)})w_{I(0, t, s)}.
\end{align*}
This implies that
\begin{align*}
    \left\|\hat B_{I(k, t, s)}\right\| \leq&\  \zeta \|w_{I(k, t, s)} - w_{I(0, t, s)}\| + 4\rho U_\Delta  U_w(I+1) \\
    \leq&\ \zeta \sum_{\kappa=0}^{k-1}\|w_{I(\kappa+1, t, s)} - w_{I(\kappa, t, s)}\| + 4\rho U_\Delta  U_w(I+1).
\end{align*}

\subsection*{Bounding $\|\hat C_{I(0, t, s)}\|$}
The argument is similar to the case of $\|\hat A_{I(k, t, s)}\|$. From Lemma \ref{lem: martingale_concentration_conditioned}, the third term $\|\hat C_{I(0, t, s)}\|$ can be bounded as
\begin{align}
    \left\|\hat C_{I(0, t, s)}\right\| \leq \frac{c}{\sqrt{PKb}}\sqrt{\left(\left(\sum_{\tau=0}^{t-1} 
    \left(\hat \sigma_{I(0, \tau+1, s)}^{(\gamma)}\right)^2\right)+\widetilde \varepsilon'\right)\left(\mathrm{log}\frac{2KTSd}{ q }+\mathrm{log}\mathrm{log}\frac{8TG}{\widetilde \varepsilon'}\right)} \label{ineq: bound_C_hat}
\end{align}
for every $t \in [T-1]\cup\{0\}$ and $s \in [S-1]\cup\{0\}$ with probability at least $1 -  q$,  where
\begin{align*}
    \sigma_{I(0, \tau, s)}^{(\gamma)} :=&\ 2L\|w_{I(0, \tau, s)} - w_{I(0, \tau-1, s)}\| +  4\rho U_\Delta  U_w(I+1).
\end{align*} 
Here, we used the facts that $\{g_{I(0, \tau, s)}^{(p)} - g_{I(0, \tau, s)}^{(p)\mathrm{ref}} + \nabla f_p(x_{I(0, \tau-1, s)}) - \nabla f_p(x_{I(0, \tau, s)})\}_{p=1}^P$ has mean zero and each of them is constructed from $Kb$ i.i.d. data samples, and $\{(g_{I(0, \tau, s)}^{(p)})' - (g_{I(0, \tau, s)}^{(p)\mathrm{ref}})' + \nabla f_p(x_{I(0, \tau-1, s)}') - \nabla f_p(x_{I(0, \tau, s)}')\}_{p=1}^P$ possesses the same property. 

Hence, we have
\begin{align*}
    &\|y_{I+1}\| \\
    =&\ \|v_{I+1} - \nabla f(x_{I+1}) - v_{I+1}' + \nabla f(x_{I+1}')\| \\ 
    \leq&\ \left\|\hat A_{I(k, t, s)}\right\| + \left\|\hat B_{I(k, t, s)}\right\| + \left\|\hat C_{I(0, t, s)}\right\| \\
    \leq&\ \left\{ c\sqrt{8L^2\sum_{\kappa=0}^{k-1}\frac{1}{b_{\kappa+1}} \|w_{I(\kappa+1, t, s)} - w_{I(\kappa, t, s)}\|^2 + 32K \rho^2 U_\Delta ^2 U_w(I+1)^2+\widetilde \varepsilon'} \right. \\
    &+\zeta \sum_{\kappa=0}^{k-1} \|w_{I(\kappa+1, t, s)} - w_{I(\kappa, t, s)}\| + 4 \rho  U_\Delta  U_w(I+1) \\
    &+ \left. \frac{c}{\sqrt{PKb}}\sqrt{8L^2 \sum_{\tau=0}^{t-1} \|w_{I(0, \tau+1, s)} - w_{I(0, \tau, s)}\|^2 + 32 T\rho^2 U_\Delta ^2 U_w(I+1)^2+\widetilde \varepsilon'}\right\} \\
    &\times \sqrt{ \mathrm{log}\frac{2KTSd}{q } + \mathrm{log}\mathrm{log}\frac{8KTG}{\widetilde \varepsilon'}}
\end{align*}

Now, we further bound the term $\|w_{I(\kappa+1, \tau, s)} - w_{I(\kappa, \tau, s)}\|$.

To do this, it is important to carefully distinguish the three cases: $I(\kappa+1, \tau, s) = \widetilde I+ 1$, $I(\kappa+1, \tau, s) < \widetilde I+1$ and $I(\kappa+1, \tau, s) > \widetilde I+1$. 

For the former case, note that $\|w_{\widetilde I+1} - w_{\widetilde I}\| = \|w_{\widetilde I+1}\| = \eta r_0$.
Also note that $\|w_{I(\kappa+1, \tau, s)} - w_{I(\kappa, \tau, s)}\|=0$ for $I(\kappa+1, \tau, s) < \widetilde I + 1$.

\subsection*{Case I. $1/(\eta \lambda)\leq \sqrt{K}$. }
In this case, $\widetilde I = I(k_0, t_0, s_0)$. 
Suppose that $s = s_0$ and $t = t_0$. Then, since $1/(\eta\lambda) \leq \sqrt{K}$, it holds that
\begin{align*}
    \sum_{\kappa=0}^{k-1}\frac{1}{b_{\kappa+1}}\|w_{I(\kappa+1, t, s)} - w_{I(\kappa, t, s)}\|^2 \leq&\ \frac{1}{b}\sum_{i \in \{I(0, t, s), \ldots, I(k-1, t, s)\}\setminus\{\widetilde I\}}\|w_{i+1} - w_{i}\|^2 + \frac{\eta^2 r_0^2}{b} \\
    \leq&\ \frac{1}{b}\sum_{i \in \{I(0, t, s), \ldots, I(k-1, t, s)\}\setminus\{\widetilde I\}}\|w_{i+1} - w_{i}\|^2 +  \frac{\eta^4\lambda^2K r_0^2}{b}
\end{align*}
and 
\begin{align*}
    \sum_{\kappa=0}^{k-1}\|w_{I(\kappa+1, t, s)} - w_{I(\kappa, t, s)}\| \leq&\ \sum_{i \in \{I(0, t, s), \ldots, I(k-1, t, s)\}\setminus\{\widetilde I\}}\|w_{i+1} - w_{i}\| + \eta r_0 \\
    \leq&\ \sum_{i \in \{I(0, t, s), \ldots, I(k-1, t, s)\}\setminus\{\widetilde I\}}\|w_{i+1} - w_{i}\| + \eta^2\lambda K r_0.
\end{align*}

Also, $\sum_{\tau=0}^{t-1} \|w_{I(0, \tau+1, s)} - w_{I(0, \tau, s)}\|^2 = 0$. 

Next, suppose that $s = s_0$ and $t > t_0$. Since $I(0, t, s) > I(k_0, t_0, s_0)$,  $\|w_{I(k_0+1, t_0, s_0)} - w_{I(k_0, t_0, s_0)}\|$ does not appear in the two terms
\begin{align*}
    \sum_{\kappa=0}^{k-1}\frac{1}{b_{\kappa+1}}\|w_{I(\kappa+1, t, s)} - w_{I(\kappa, t, s)}\|^2 \leq \frac{1}{b}\sum_{\kappa=0}^{k-1}\|w_{I(\kappa+1, t, s)} - w_{I(\kappa, t, s)}\|^2
\end{align*}
and 
\begin{align*}
     \sum_{\kappa=0}^{k-1}\|w_{I(\kappa+1, t, s)} - w_{I(\kappa, t, s)}\|.
\end{align*}

Also, since $I(k_0, t_0, s_0) > I(0, 0, s)$, 
\begin{align*}
    \sum_{\tau=0}^{t-1} \|w_{I(0, \tau+1, s)} - w_{I(0, \tau, s)}\|^2 =&\ \sum_{\tau=\{0, \ldots, t-1\}\setminus\{t_0\}} \|w_{I(0, \tau+1, s)} - w_{I(0, \tau, s)}\|^2 + \|w_{I(0, t_0+1, s)} - w_{I(0, t_0, s)}\|^2 \\ \leq&\ K\sum_{i \in \{I(0, 0, s), \ldots, I(0, t, s)-1\}\setminus \{I(0, t_0, s_0), \ldots, I(0, t_0+1, s_0)-1\}}\|w_{i+1} - w_{i}\|^2 \\
    &+ 2K \sum_{i \in \{I(0, t_0, s), \ldots, I(0, t_0+1, s)-1\}\setminus\{I(k_0, t_0, s_0)\}}\|w_{i+1} - w_{i}\|^2 + 2\eta^2 r_0^2 \\
    \leq&\ 2K\sum_{i \in \{I(0, 0, s), \ldots, I(0, t, s)-1\}\setminus \{I(k_0, t_0, s_0)\}}\|w_{i+1} - w_{i}\|^2 + 2\eta^2r_0^2 \\
    =&\ 2K\sum_{i \in \{I(0, 0, s), \ldots, I(0, t, s)-1\}\setminus\{\widetilde I\}}\|w_{i+1} - w_{i}\|^2 + 2\eta^2r_0^2 
\end{align*}

Finally, when $s > s_0$, $\|w_{\widetilde I + 1} - w_{\widetilde I}\|$ never appears in the bound of $\|y_I\|$. 

\subsection*{Case II. $\sqrt{K} < 1/(\eta \lambda)\leq K$. }
In this case, $\widetilde I = I(k_0' , t_0, s_0) - 1$, where $k_0'$ is the minimum number that satisfies $k_0' > k_0$ and $k_0' \equiv 0 \ (\mathrm{mod} \lceil \sqrt{K}\rceil)$. Note that $b_{k_0'} = \lceil \sqrt{K}\rceil b$. 

Suppose that $s = s_0$ and $t = t_0$. Then, since $1/(\eta\lambda) \leq K$, it holds that
\begin{align*}
    \sum_{\kappa=0}^{k-1}\frac{1}{b_{\kappa+1}}\|w_{I(\kappa+1, t, s)} - w_{I(\kappa, t, s)}\|^2 \leq&\ \frac{1}{b}\sum_{i \in \{I(0, t, s), \ldots, I(k-1, t, s)\}\setminus\{\widetilde I\}}\|w_{i+1} - w_{i}\|^2 + \frac{\eta^2 r_0^2}{\sqrt{K}b}
\end{align*}
and 
\begin{align*}
    \sum_{\kappa=0}^{k-1}\|w_{I(\kappa+1, t, s)} - w_{I(\kappa, t, s)}\| \leq&\ \sum_{i \in \{I(0, t, s), \ldots, I(k-1, t, s)\}\setminus\{\widetilde I\}}\|w_{i+1} - w_{i}\| + \eta r_0 \\
    \leq&\ \sum_{i \in \{I(0, t, s), \ldots, I(k-1, t, s)\}\setminus\{\widetilde I\}}\|w_{i+1} - w_{i}\| + \eta^2\lambda K r_0.
\end{align*}

Also, $\sum_{\tau=0}^{t-1} \|w_{I(0, \tau+1, s)} - w_{I(0, \tau, s)}\|^2 = 0$.

Next, suppose that $s = s_0$ and $t > t_0$. Since $I(0, t, s) > I(k_0, t_0, s_0)$,  $\|w_{I(k_0+1, t_0, s_0)} - w_{I(k_0, t_0, s_0)}\|$ does not appear in the two terms
\begin{align*}
    \sum_{\kappa=0}^{k-1}\frac{1}{b_{\kappa+1}}\|w_{I(\kappa+1, t, s)} - w_{I(\kappa, t, s)}\|^2 \leq \frac{1}{b}\sum_{\kappa=0}^{k-1}\|w_{I(\kappa+1, t, s)} - w_{I(\kappa, t, s)}\|^2
\end{align*}
and 
\begin{align*}
     \sum_{\kappa=0}^{k-1}\|w_{I(\kappa+1, t, s)} - w_{I(\kappa, t, s)}\|.
\end{align*}

Also, similar to Case I, since $I(k_0', t_0, s_0) > I(0, 0, s)$, 
\begin{align*}
    \sum_{\tau=0}^{t-1} \|w_{I(0, \tau+1, s)} - w_{I(0, \tau, s)}\|^2 
    \leq&\ 2K\sum_{i \in \{I(0, 0, s), \ldots, I(0, t, s)-1\}\setminus \{I(k_0, t_0, s_0)\}}\|w_{i+1} - w_{i}\|^2 + 2\eta^2r_0^2 \\
    =&\ 2K\sum_{i \in \{I(0, 0, s), \ldots, I(0, t, s)-1\}\setminus\{\widetilde I\}}\|w_{i+1} - w_{i}\|^2 + 2\eta^2r_0^2.
\end{align*}

Finally, when $s > s_0$, $\|w_{\widetilde I + 1} - w_{\widetilde I}\|$ never appears in the bound of $\|y_I\|$. 

\subsection*{Case III. $K < 1/(\eta \lambda) \leq KT$. }
In this case, $\widetilde I = I(0, t_0+1, s_0) - 1$. Since $I+1 = I(k, t, s) > \widetilde I$, if $s = s_0$, then we can see that $t \geq t_0+1 > t_0$. Then, $\|w_{\widetilde I +1} - w_{\widetilde I}\|$ does not appear in the two terms
\begin{align*}
    \sum_{\kappa=0}^{k-1}\frac{1}{b_{\kappa+1}}\|w_{I(\kappa+1, t, s)} - w_{I(\kappa, t, s)}\|^2 \leq \frac{1}{b}\sum_{\kappa=0}^{k-1}\|w_{I(\kappa+1, t, s)} - w_{I(\kappa, t, s)}\|^2
\end{align*}
and 
\begin{align*}
     \sum_{\kappa=0}^{k-1}\|w_{I(\kappa+1, t, s)} - w_{I(\kappa, t, s)}\|.
\end{align*}

Observe that 
\begin{align*}
    \sum_{\tau=0}^{t-1} \|w_{I(0, \tau+1, s)} - w_{I(0, \tau, s)}\|^2 =&\ \sum_{\tau=\{0, \ldots, t-1\}\setminus\{t_0\}} \|w_{I(0, \tau+1, s)} - w_{I(0, \tau, s)}\|^2 + \|w_{I(0, t_0+1, s)} - w_{I(0, t_0, s)}\|^2 \\ 
    \leq&\ K\sum_{i \in \{I(0, 0, s), \ldots, I(0, t, s)-1\}\setminus \{I(0, t_0, s_0), \ldots, I(0, t_0+1, s_0)-1\}}\|w_{i+1} - w_{i}\|^2 \\
    &+ 2K \sum_{i \in \{I(0, t_0, s), \ldots, I(0, t_0+1, s)-2\}}\|w_{i+1} - w_{i}\|^2 + 2\eta^2 r_0^2 \\
    \leq&\ 2K\sum_{i \in \{I(0, 0, s), \ldots, I(0, t, s)-1\}\setminus \{\widetilde I\}}\|w_{i+1} - w_{i}\|^2 + 2\eta^2r_0^2.
\end{align*}

When $s > s_0$, $\|w_{\widetilde I +1} - w_{\widetilde I}\|$ never appears in the bound of $\|y_{I+1}\|$.

\subsection*{Case IV. $KT < 1/(\eta\lambda)$.}
In this case, $\widetilde I = I(0, 0, s_0+1) - 1$. Since $I+1 = I(k, t, s) > \widetilde I$, we know that $s \geq s_0+1 > s_0$. Hence, $\|w_{\widetilde I +1} - w_{\widetilde I}\|$ never appears in the bound of $\|y_{I+1}\|$.

In summary, we have
\begin{align*}
    &\|y_{I+1}\| \\
    \leq&\ \left\{c\sqrt{8L^2\left(\frac{1}{b}\sum_{i \in \{I(0, t, s), \ldots, I(k-1, t, s)\}\setminus\{\widetilde I\}}\|w_{i+1} - w_{i}\|^2 + \frac{\left(\eta^2\lambda^2K + 1/\sqrt{K}\right)\eta^2r_0^2}{b}\right) + 32K \rho^2 U_\Delta ^2 U_w(I+1)^2+\widetilde \varepsilon'} \right. \\
    &+ \zeta \left(\sum_{i \in \{I(0, t, s), \ldots, I(k-1, t, s)\}\setminus\{\widetilde I\}}\|w_{i+1} - w_{i}\| + \eta^2\lambda K r_0\right) + 4 \rho  U_\Delta  U_w(I+1) \\
    &+ \left. \frac{c}{\sqrt{PKb}}\sqrt{8L^2\left(2K \sum_{i \in \{I(0, 0, s), \ldots, I(0, t, s)-1\}\setminus\{\widetilde I\}}\|w_{i+1} - w_{i}\|^2 + 2\eta^2r_0^2\right) + 32 T\rho^2 U_\Delta ^2 U_w(I+1)^2+\widetilde \varepsilon'}\right\} \\
    &\times \sqrt{ \mathrm{log}\frac{2KTSd}{q } + \mathrm{log}\mathrm{log}\frac{8KTG}{\widetilde \varepsilon'}}.
\end{align*}

Now, we bound $\|w_{I(\kappa+1, \tau, s)} - w_{I(\kappa, \tau, s)}\|$ for the case $I(\kappa+1, \tau, s) > \widetilde I + 1$. 
\begin{align*}
    &\|w_{I(\kappa+1, \tau, s)} - w_{I(\kappa, \tau, s)}\| \\
    =&\ \left\|\eta (1 - \eta \mathcal H)^{I(\kappa+1, \tau, s) - \widetilde I}\hat \xi_{\widetilde I} - \eta \sum_{i=\widetilde I}^{I(\kappa, \tau, s)} (1 - \eta \mathcal H)^{I(\kappa, \tau, s) - i} ( \Delta_i w_i + y_i) \right. \\
    &- \left. \eta (1 - \eta \mathcal H)^{I(\kappa, \tau, s) - \widetilde I}\hat \xi_{\widetilde I} + \eta \sum_{i=\widetilde I}^{I(\kappa, \tau, s)-1} (1 - \eta \mathcal H)^{I(\kappa, \tau, s)-1 - i} ( \Delta_i w_i + y_i)\right\| \\
    =&\ \left\| -\eta^2 \mathcal H(1 - \eta \mathcal H)^{I(\kappa, t, s) - \widetilde I}\hat \xi_{\widetilde I} \right. \\
    &+ \left. \eta \sum_{i=\widetilde I}^{I(\kappa, \tau, s)-1} \eta \mathcal H (1 - \eta \mathcal H)^{I(\kappa, \tau, s) - 1 - i}( \Delta_i w_i + y_i) -\eta(\Delta_{I(\kappa, \tau, s)}w_{I(\kappa, \tau, s)} + y_{I(\kappa, \tau, s)})\right\| \\
    \leq&\ \eta\left\| \eta \mathcal H(1 - \eta \mathcal H)^{I(\kappa, t, s) - \widetilde I}\hat \xi_{\widetilde I}\right\| \\
    &+ \eta \sum_{i=\widetilde I}^{I(\kappa, \tau, s)-1} \left\|\eta \mathcal H (1 - \eta \mathcal H)^{I(\kappa, \tau, s) - 1 - i}\right\|\left\| \Delta_i w_i + y_i\right\| + \eta\|\Delta_{I(\kappa, \tau, s)}w_{I(\kappa, \tau, s)} + y_{I(\kappa, \tau, s)}\| \\
    \leq&\ \eta^2 \lambda(1 + \eta \lambda)^{I(\kappa, t, s) - \widetilde I}r_0 \\
    &+ \eta \sum_{i=\widetilde I}^{I(\kappa, \tau, s)-1} \left(\eta \lambda(1 + \eta \lambda)^{I(\kappa, t, s) - 1 - i} + \frac{e}{I(\kappa, t, s) - i}\right)\left\| \Delta_i w_i + y_i\right\| + \eta\|\Delta_{I(\kappa, \tau, s)}w_{I(\kappa, \tau, s)} + y_{I(\kappa, \tau, s)}\|.
\end{align*}
For the second inequality, we used the following two facts:
\begin{align*}
    \left\|\eta \mathcal H(1 - \eta\mathcal H)^J \hat \xi_{\widetilde I}\right\| \leq \eta \lambda (1 + \eta \lambda)^J \|\hat \xi_{\widetilde I}\|
\end{align*}
and
\begin{align*}
    \left\|\eta \mathcal H(1 - \eta\mathcal H)^J\right\| \leq \eta \lambda (1 + \eta \lambda)^J + \frac{e}{J+1}
\end{align*}
for $J \in \mathbb{N}\cup\{0\}$. The former inequality holds because $\hat \xi_{\widetilde I} = 2\langle \xi_{\widetilde I}, \bm{e}_\mathrm{min}\rangle \bm{e}_\mathrm{min}$ and $\bm{e}_\mathrm{min}$ is the minimum eigenvector of $\mathcal H$. The latter inequality is the direct result of the from Lemma \ref{lem: op_norm_bound}. 

Then, we further bound the upper bound as follows:
\begin{align*}
    &\|w_{I(\kappa+1, \tau, s)} - w_{I(\kappa, \tau, s)}\| \\
    \leq&\ \eta^2 \lambda(1 + \eta \lambda)^{I(\kappa, t, s) - \widetilde I}r_0 \\
    &+ \eta \sum_{i=\widetilde I}^{I(\kappa, \tau, s)-1} \left(\eta \lambda(1 + \eta \lambda)^{I(\kappa, t, s) - 1 - i} + \frac{e}{I(\kappa, t, s) - i}\right)\left\| \Delta_i w_i + y_i\right\| + \eta\|\Delta_{I(\kappa, \tau, s)}w_{I(\kappa, \tau, s)} + y_{I(\kappa, \tau, s)}\|\\
    \leq&\ \eta^2 \lambda(1 + \eta \lambda)^{I(\kappa, t, s) - \widetilde I}r_0 \\
    &+ 4e(1+\mathrm{log}\mathcal J) \eta\rho U_\Delta  U_w(I)+ 2e(1+\mathrm{log}\mathcal J)\eta(1+\eta \lambda \mathcal J) U_y(I). \\
    \leq&\ 4e(1+\mathrm{log}\mathcal J)\eta\rho U_\Delta  U_w(I) + \left(\frac{1}{c_\mathrm{upper}^{(y)}\left(L+ \frac{ \sqrt{K}L}{\sqrt{b}} + K\zeta + \frac{\sqrt{KT}L}{\sqrt{Pb}}\right)} + 2e(1+\mathrm{log}\mathcal J)\eta(1+\eta \lambda \mathcal J)\right) U_y(I) \\
    =:&\ U_{\hat w}(I). 
\end{align*}

For the first inequality, we used $\| \Delta_i\| \leq \rho U_\Delta $, the inductive assumptions on $\|w_i\|$ and $\|y_i\|$ for $i \leq I(k, t, s)-1$ and $\sum_{i=i_0}^{i'} 1/(i+1 - i_0) \leq 1 + \mathrm{log}(i' + 1- i_0)$ for $i' \geq i_0$. 

Concretely, we computed
\begin{align*}
     &\sum_{i=\widetilde I}^{I(\kappa, \tau, s)-1} \left(\eta \lambda(1 + \eta \lambda)^{I(\kappa, t, s) - 1 - i} + \frac{e}{I(\kappa, t, s) - i}\right) \left\| \Delta_i w_i\right\| \\
     \leq&\ \sum_{i=\widetilde I}^{I(\kappa, \tau, s)-1} \left(\eta \lambda(1 + \eta \lambda)^{I(\kappa, t, s) - 1 - i} + \frac{e}{I(\kappa, t, s) - i}\right) \rho U_\Delta  U_w(i) \\
     \leq&\ \rho U_\Delta (1 + e(1+\mathrm{log}\mathcal J)) U_w(I) \\
     \leq&\ 2e(1+\mathrm{log}\mathcal J) \rho U_\Delta  U_w(I).
\end{align*}
Also, we computed
\begin{align*}
     &\sum_{i=\widetilde I}^{I(\kappa, \tau, s)-1} \left(\eta \lambda(1 + \eta \lambda)^{I(\kappa, t, s) - 1 - i} + \frac{e}{I(\kappa, t, s) - i}\right) \left\|y_i\right\| \\ 
     \leq&\ \sum_{i=\widetilde I}^{I(\kappa, \tau, s)-1} \left(\eta \lambda(1 + \eta \lambda)^{I(\kappa, t, s) - 1 - i} + \frac{e}{I(\kappa, t, s) - i}\right) \\
     &\times \left(c_\mathrm{upper}^{(y)} 
     \eta^2\lambda\left(L+ \frac{ \sqrt{K}L}{\sqrt{b}} + K\zeta + \frac{\sqrt{KT}L}{\sqrt{Pb}}\right) (1 + \eta\lambda)^{i - \widetilde I} r_0\right) \\
     \leq&\ c_\mathrm{upper}^{(y)}\eta^3\lambda^2 \mathcal J \left(L+\frac{ \sqrt{K}L}{\sqrt{b}} + K\zeta + \frac{\sqrt{KT}L}{\sqrt{Pb}}\right)(1 + \eta\lambda)^{I(\kappa, t, s) - 1 - \widetilde I} r_0 \\
     &+ c_\mathrm{upper}^{(y)}e(1+ \mathrm{log}\mathcal J )\eta^2\lambda \left(L+ \frac{ \sqrt{K}L}{\sqrt{b}} + K\zeta + \frac{\sqrt{KT}L}{\sqrt{Pb}}\right)(1 + \eta\lambda)^{I(\kappa, t, s) - \widetilde I} r_0\\
     \leq&\ e(1+\mathrm{log}\mathcal J)(1+\eta \lambda \mathcal J) U_y(I).
\end{align*}
Using the bound of $\|w_{I(\kappa+1, \tau, s)} - w_{I(\kappa, \tau, s)}\|$, we get

\begin{align*}
    &\|y_{I+1}\| \\
    \leq&\ \left\{c\sqrt{8L^2\left(\frac{K}{b}U_{\hat w}(I) + \frac{\left(\eta^2\lambda^2K + 1/\sqrt{K}\right)\eta^2r_0^2}{b}\right) + 32K \rho^2 U_\Delta ^2 U_w(I+1)^2+\widetilde \varepsilon'} \right. \\
    &+ \zeta \left(KU_{\hat w}(I) + \eta^2\lambda K r_0\right) + 4 \rho  U_\Delta  U_w(I+1) \\
    &+ \left. \frac{c}{\sqrt{PKb}}\sqrt{8L^2\left(2K^2T U_{\hat w}(I) + 2\eta^2r_0^2\right) + 32 T\rho^2 U_\Delta ^2 U_w(I+1)^2+\widetilde \varepsilon'}\right\} \\
    &\times \sqrt{ \mathrm{log}\frac{2KTSd}{q } + \mathrm{log}\mathrm{log}\frac{8KTG}{\widetilde \varepsilon'}} \\
    \leq&\ \left\{ \left(\frac{2\sqrt{2}c\sqrt{K}L}{\sqrt{b}} + K\zeta + \frac{4c\sqrt{KTL}}{\sqrt{PKb}}\right)U_{\hat w}(I) +\left(\frac{2\sqrt{2}c\eta\lambda \sqrt{K}L}{\sqrt{b}} + \frac{2\sqrt{2}cL}{K^{1/4}\sqrt{b}} + \eta\lambda K\zeta + \frac{4cL}{\sqrt{PKb}}\right)\eta r_0 \right. \\
    &+ \left. \left(\frac{4\sqrt{2}c\sqrt{K}}{\sqrt{b}} + 4 + \frac{4\sqrt{2}c\sqrt{T}}{\sqrt{PKb}}\right)\rho U_\Delta U_w(I+1) + 2c\sqrt{\widetilde \varepsilon'}
    \right\} \\
    &\times \sqrt{ \mathrm{log}\frac{2KTSd}{q } + \mathrm{log}\mathrm{log}\frac{8KTG}{\widetilde \varepsilon'}}    
\end{align*}
Under $\color{blue}b \geq 1/(K^{1/2}\eta^2(L + \sqrt{K}L/\sqrt{b} + K\zeta + \sqrt{KT}L/\sqrt{Pb})^2\rho\varepsilon)$, from $\lambda \geq \sqrt{\rho\varepsilon}$, we have
\begin{align*}
   &\frac{2\sqrt{2}c\eta\lambda \sqrt{K}L}{\sqrt{b}} + \frac{2\sqrt{2}cL}{K^{1/4}\sqrt{b}} + \eta\lambda K\zeta + \frac{4cL}{\sqrt{PKb}} \\ \leq&\ \eta\left(\frac{2\sqrt{2}c\sqrt{K}L}{\sqrt{b}} + 2\sqrt{2}c\left(L + \frac{\sqrt{K}L}{\sqrt{b}} + \frac{\sqrt{KT}L}{\sqrt{Pb}}\right) + K\zeta + 4c\left(L + \frac{\sqrt{K}L}{\sqrt{b}} + \frac{\sqrt{KT}L}{\sqrt{Pb}}\right)\right) \lambda \\
   \leq&\ 12c\eta \left(L + \frac{\sqrt{K}L}{\sqrt{b}} + K\zeta +  \frac{\sqrt{KT}L}{\sqrt{Pb}}\right)\lambda.
\end{align*}

Also, under {\color{blue}$b \geq K$} and {\color{blue}$b \geq T/(PK)$}, we have
\begin{align*}
    \frac{4\sqrt{2}c\sqrt{K}}{\sqrt{b}} + 4 + \frac{4\sqrt{2}c\sqrt{T}}{\sqrt{PKb}} \leq 24c.
\end{align*}

We choose $\widetilde \varepsilon$ such that $\widetilde \varepsilon' \leq  \eta^4L^2\lambda^2r_0^2/(64(\mathrm{log}\frac{2KTSd}{q} + \mathrm{log}\mathrm{log}\frac{8KTG}{\widetilde \varepsilon'})c^2)$. Then, it holds that

\begin{align*}
    \|y_{I+1}\| 
    \leq&\  \left\{ 4c\left(\frac{\sqrt{K}L}{\sqrt{b}} + K\zeta + \frac{\sqrt{KTL}}{\sqrt{PKb}}\right)U_{\hat w}(I) +12c\eta\left(L + \frac{\sqrt{K}L}{\sqrt{b}} + K\zeta +  \frac{\sqrt{KT}L}{\sqrt{Pb}}\right)\eta\lambda r_0 \right. \\
    &+  24c\rho U_\Delta U_w(I+1)
    \Bigg\} \times \sqrt{ \mathrm{log}\frac{2KTSd}{q } + \mathrm{log}\mathrm{log}\frac{8KTG}{\widetilde \varepsilon'}} + 0.25U_y(I+1).
\end{align*}

From the definition of $U_{\hat w}(I)$: 
\begin{align*}
    U_{\hat w}(I) := 4e(1+\mathrm{log}\mathcal J)\eta\rho U_\Delta  U_w(I) + \left(\frac{1}{c_\mathrm{upper}^{(y)}\left(L+ \frac{ \sqrt{K}L}{\sqrt{b}} + K\zeta + \frac{\sqrt{KT}L}{\sqrt{Pb}}\right)} + 2e(1+\mathrm{log}\mathcal J)\eta(1+\eta \lambda \mathcal J)\right) U_y(I),
\end{align*}
we get
\begin{align*}
    \|y_{I+1}\| 
    \leq&\  \left\{ 4c\left(\frac{1}{c_\mathrm{upper}^{(y)}} + 2e(1+\mathrm{log}\mathcal J)\eta\left(\frac{\sqrt{K}L}{\sqrt{b}} + K\zeta + \frac{\sqrt{KTL}}{\sqrt{PKb}}\right)(1 + \eta\lambda\mathcal J) \right)U_{y}(I) \right. \\ &+12c\eta\left(L + \frac{\sqrt{K}L}{\sqrt{b}} + K\zeta +  \frac{\sqrt{KT}L}{\sqrt{Pb}}\right)\eta\lambda r_0  \\
    &+  \left(24c + 16ce(1+\mathrm{log}\mathcal J)\eta\left(\frac{\sqrt{K}L}{\sqrt{b}} + K\zeta + \frac{\sqrt{KTL}}{\sqrt{PKb}}\right) \right)\rho U_\Delta U_w(I+1)
    \Bigg\} \\
    &\times \sqrt{ \mathrm{log}\frac{2KTSd}{q } + \mathrm{log}\mathrm{log}\frac{8KTG}{\widetilde \varepsilon'}} + 0.25 U_y(I+1).
\end{align*}

From the definitions of $\mathcal J$ and $U_\Delta $ with $r = c_r \varepsilon$ with $c_r = \widetilde O(1)$, we have
\begin{align*}
    \rho U_\Delta  U_w(I) \leq&\  \rho U_\Delta  \frac{c_\mathrm{upper}^{(w)}}{c_\mathrm{upper}^{(y)} \eta\lambda \left(L + \frac{ \sqrt{K}L}{\sqrt{b}} + K\zeta + \frac{\sqrt{KT}L}{\sqrt{Pb}}\right)}U_y(I+1) \\
    \leq&\ \frac{4c_r\sqrt{c_\mathcal F+2} c_\mathcal{J}\eta \rho \varepsilon}{\eta\lambda^2}\frac{c_\mathrm{upper}^{(w)}}{c_\mathrm{upper}^{(y)}\eta\left(L +  \frac{ \sqrt{K}L}{\sqrt{b}} + K\zeta + \frac{\sqrt{KT}L}{\sqrt{Pb}}\right) }U_y(I+1) \\
    \leq& \frac{4c_r\sqrt{c_\mathcal F+2} c_\mathcal{J}}{\eta\left(L + \frac{ \sqrt{K}L}{\sqrt{b}} + K\zeta + \frac{\sqrt{KT}L}{\sqrt{Pb}}\right)}U_y(I+1).
\end{align*}
Here, for the last inequality, we used $\lambda \geq \sqrt{\rho \varepsilon}$ and $c_\mathrm{upper}^{(y)} = c_\mathrm{upper}^{(w)}$. 

Therefore, we arrive at
\begin{align*}
    \|y_{I+1}\| 
    \leq&\  \left\{ 4c\left(\frac{1}{c_\mathrm{upper}^{(y)}} + 2e(1+\mathrm{log}\mathcal J)\eta\left(\frac{\sqrt{K}L}{\sqrt{b}} + K\zeta + \frac{\sqrt{KTL}}{\sqrt{PKb}}\right)(1 + \eta\lambda\mathcal J) \right)U_{y}(I) \right. \\ &+\frac{12c}{c_\mathrm{upper}^{(y)}}U_y(I+1) \\
    &+  \left(\frac{96cc_r\sqrt{c_\mathcal F+2} c_\mathcal{J}}{\eta\left(L + \frac{ \sqrt{K}L}{\sqrt{b}} + K\zeta + \frac{\sqrt{KT}L}{\sqrt{Pb}}\right)}
    + 64ce(1+\mathrm{log}\mathcal J)c_r\sqrt{c_\mathcal F+2} c_\mathcal{J}\right)U_y(I+1)
    \Bigg\} \\
    &\times \sqrt{ \mathrm{log}\frac{2KTSd}{q } + \mathrm{log}\mathrm{log}\frac{8KTG}{\widetilde \varepsilon'}} + 0.25U_y(I+1).
\end{align*}

We set {\color{blue} $c_\mathrm{upper}^{(y)} = c_\mathrm{upper}^{(w)} = \mathrm{max}\{3, 48c\sqrt{ \mathrm{log}\frac{2KTSd}{q } + \mathrm{log}\mathrm{log}\frac{8KTG}{\widetilde \varepsilon'}}\} $}.
Then, since $\eta \lambda \mathcal J \leq c_\mathcal J$, if we choose $\eta$ such that {\color{blue}$\eta (L + \sqrt{K}L/\sqrt{b} + K\zeta + \sqrt{KT}L/\sqrt{Pb})) \leq 1/(48c e(1+c_\mathcal J)(1+\mathrm{log}\mathcal J)\sqrt{ \mathrm{log}\frac{2KTSd}{q } + \mathrm{log}\mathrm{log}\frac{8KTG}{\widetilde \varepsilon'}})$}, the first term can be bounded by $0.25 U_y(I+1)$. 

Next, from the definition of $c_\mathrm{upper}^{(y)}$, we can see that the second term is bounded by $0.25 U_y(I+1)$.

Finally, we can choose $\eta$ such that {\color{blue}$\eta (L + \sqrt{K}L/\sqrt{b} + K\zeta + \sqrt{KT}L/\sqrt{Pb})) \geq \widetilde \Theta(\sqrt{c_\eta} + 1/(c_\mathcal J))$}. Then if we appropriately choose {\color{blue}$c_r \leq \widetilde O((\sqrt{c_\eta} + 1/c_\mathcal J)/(\sqrt{c_\mathcal F+2}c_\mathcal J))$}, the third term can be bounded by $0.25 U_y(I+1)$.  Therefore, we conclude that  
\begin{align*}
    \|y_{I+1}\| \leq U_y(I+1).
\end{align*}

This finishes the proof of the mathematical induction. \par

Let $\widetilde {\mathcal J} := \mathcal J - (\widetilde I - I(k_0, t_0, s_0))$.
From (\ref{ineq: cumulative_noise}), (\ref{ineq: delta_w_sum_coef}) and (\ref{ineq: y_sum_coef}), we have
\begin{align*}
    \|w_{I(k_0, t_0, s_0)+\mathcal J}\| =&\   \left\|\eta(1-\eta \mathcal H)^{\widetilde {\mathcal J}} \hat \xi_{\widetilde I}- \eta \sum_{i=\widetilde I}^{\widetilde I + \widetilde {\mathcal J}} (1 - \eta \mathcal H)^{\widetilde I + {\widetilde {\mathcal J}} - i} ( \Delta_i w_i + y_i)\right\| \\
    \geq&\ \|\eta(1-\eta \mathcal H)^{\widetilde{ \mathcal J}} \hat \xi_{\widetilde I}\| \\
    &-  \left\|\eta \sum_{i=\widetilde I}^{\widetilde I + \widetilde {\mathcal J}} (1 - \eta \mathcal H)^{\widetilde I + \widetilde {\mathcal J} - i} \Delta_i w_i\right\| \\
    &- \left\|\eta \sum_{i=\widetilde I}^{\widetilde I + \widetilde {\mathcal J}} (1 - \eta \mathcal H)^{\widetilde I + \widetilde {\mathcal J} - i} y_i\right\| \\
    \geq&\ \eta(1+\eta\lambda)^{\widetilde { \mathcal J}}r_0- \frac{1}{3c_\mathrm{upper}^{(w)}}U_w(I(k_0, t_0, s_0) + \mathcal J) \\
    =&\ \frac{2\eta(1+\eta\lambda)^{\widetilde {\mathcal J}}r_0}{3}.
\end{align*}

Now, we define $c_\mathcal J$ as the minimum positive number that satisfies
{\color{blue}$$c_\mathcal{J} \geq 1 +  2\mathrm{log}(48\sqrt{c_\mathcal F + 2}\mathcal J \sqrt{d}/q). $$}
From (\ref{ineq: unstack_prop}), we can see that
\begin{align*}
    \frac{2\eta(1+\eta\lambda)^{\widetilde {\mathcal{J}}}r_0}{3} \geq 4 U_\Delta .
\end{align*}

This is because we have
\begin{align*}
    \mathrm{log}\left((1+\eta\lambda)^{\widetilde {\mathcal{J}}}\right) =&\ \widetilde {\mathcal J} \mathrm{log}(1+\eta\lambda) \\
    \geq&\ \widetilde {\mathcal J} \left(1 - \frac{1}{1+\eta\lambda}\right) \\
    \geq&\ \frac{\eta \lambda \widetilde {\mathcal J}}{2} \\
    \geq&\ \frac{\eta \lambda(\mathcal J - 1/(\eta\lambda))}{2} \\
    =&\ \frac{c_\mathcal{J}-1}{2} \\
    \geq&\ \mathrm{log}(48\sqrt{c_\mathcal F + 2}\mathcal J\sqrt{d}/q ) 
\end{align*}
and thus  
\begin{align*}
    \frac{2\eta(1+\eta\lambda)^{\widetilde {\mathcal{J}}}r_0}{3} \geq&\ \frac{\eta(1+\eta\lambda)^{\widetilde {\mathcal{J}}} q  r}{3\sqrt{d}} \\
    \geq&\ 4 \times 4\sqrt{c_\mathcal F + 2}\eta \mathcal J r = 4 U_\Delta . 
\end{align*}

Here, the first inequality holds from (\ref{ineq: unstack_prop}). This contradicts with $\|w_{I(k_0, t_0, s_0)+\mathcal J}\| \leq 2U_\Delta $. 

\end{proof}

\subsection*{Proof of Proposition \ref{prop: f_diff_decrease}}

Now, we prove Proposition \ref{prop: f_diff_decrease}. Combining Proposition \ref{prop: improve_or_localize} with Proposition \ref{prop: unstack}, we have
\begin{align*}
    &\mathrm{min}\{f(x_{I(k_0, t_0, s_0)+\mathcal J_{I(k_0, t_0, s_0)}}) - f(x_{I(k_0, t_0 s_0)}), f(x_{I(k_0, t_0, s_0)+\mathcal J_{I(k_0, t_0, s_0)}}') - f(x_{I(k_0, t_0 s_0)}')\} \\
    \leq&\ -  \mathcal F_{I(k_0, t_0, s_0)} \\
    &+ \frac{2c_\eta}{\eta}\left(\frac{\mathcal J_{I(k_0, t_0, s_0)}\wedge K }{K} \sum_{i=I(0, t_0, s_0)}^{I(k_0, t_0, s_0)-1} \|x_{i+1} - x_i\|^2 + \frac{\mathcal J_{I(k_0, t_0, s_0)}\wedge KT}{KT}\sum_{i=I(0, 0, s_0)}^{I( 0, t_0, s_0)-1}\|x_{i+1} - x_{i}\|^2\right).
\end{align*}
with probability at least $1 - 9q $. \par

Finally, since $\{x_i\}_{i=0}^{KTS}$ has the same marginal distribution as $\{x_i'\}_{i=0}^{KTS}$, we conclude that
\begin{align}
    &f(x_{I(k_0, t_0, s_0)+\mathcal J_{I(k_0, t_0, s_0)}}) - f(x_{I(k_0, t_0 s_0)}) \notag\\
    \leq&\ -  \mathcal F_{I(k_0, t_0, s_0)} \notag\\
    &+ \frac{2c_\eta}{\eta}\left(\frac{\mathcal J_{I(k_0, t_0, s_0)}\wedge K }{K} \sum_{i=I(0, t_0, s_0)}^{I(k_0, t_0, s_0)-1} \|x_{i+1} - x_i\|^2 + \frac{\mathcal J_{I(k_0, t_0, s_0)}\wedge KT}{KT}\sum_{i=I(0, 0, s_0)}^{I( 0, t_0, s_0)-1}\|x_{i+1} - x_{i}\|^2\right)\label{ineq: f_diff_escape_saddle}.
\end{align}
with probability at least $1/2 - 9 q /2$. This finishes the proof of Proposition \ref{prop: f_diff_decrease}. \qed

\subsection{Finding Second Order Stationary Points}
\label{app: subsec: main_thm}
Let $\mathcal R_1 := \{x \in \mathbb{R}^d | \|\nabla f(x)\| > \varepsilon\}$, $\mathcal R_2 := \{x \in \mathbb{R}^d | \|\nabla f(x)\| \leq \varepsilon \wedge  \lambda_\mathrm{min}(\nabla^2 f(x)) < - \sqrt{\rho \varepsilon}\}$ and $\mathcal R_3 := \mathbb{R}^d \setminus (\mathcal R_1 \cup \mathcal R_2) = \{x \in \mathbb{R}^d | \|\nabla f(x)\| \leq \varepsilon \wedge  \lambda_\mathrm{min}(\nabla^2 f(x)) \geq - \sqrt{\rho \varepsilon}\}$. \par

We define 
\begin{align*}
    \iota_{m+1} = \begin{cases}
        \iota_m + 1 & (\widetilde x_{\iota_m} \in \mathcal R_1 \cup \mathcal R_3) \\
        \iota_m + \mathcal J_{\iota_m} & (\widetilde x_{\iota_m} \in \mathcal R_2)
    \end{cases} 
\end{align*}
with $\iota_1 := 0$. Note that $\mathcal J_{\iota_m} \leq c_{\mathcal J}/(\eta\sqrt{\rho \varepsilon})$. Let $M := \mathrm{min} \{m \in \mathbb{N} | \mathbb{E}[\iota_m] \geq KTS/8\}$. Observe that $\iota_M \leq M \times c_\mathcal J/(\eta\sqrt{\rho\varepsilon}) \leq (KTS/8) \times  c_\mathcal J/(\eta\sqrt{\rho\varepsilon})$ because $\iota_{KTS/8} \geq KTS/8$ always holds. We define $\check S$ as the minimum number that satisfies $\check S \geq (S / 8) \times  c_\mathcal J/(\eta\sqrt{\rho\varepsilon})) \vee S$ with $S = \Theta(1 + (f(\widetilde x_0) - f(x_*))/(\eta KT \varepsilon^2))$, where in the definition of $\eta$ we set $S \leftarrow \check S$. Then $\iota_M \leq KT\check S$ always holds. We will use Propositions \ref{prop: first_order_optimality} and \ref{prop: f_diff_decrease} with $S \leftarrow \check S$. $s(\iota_m)$ denotes the maximum natural number $s'$ satisfying $\iota_m \geq I(0, 0, s')$ and $t(\iota_m)$ denotes the maximum natural number $t'$ satisfying $\iota_m \geq I(0, t', s(\iota_m))$. We will show that $\widetilde x_i \in \mathcal R_3$ for some $i \in [KT S]\cup\{0\}$ with probability at least $1/2$. Let $E_i$ is the event that $\widetilde x_{i'} \notin \mathcal R_3$ for all $i' \leq i$ for $i \in [KTS]\cup\{0\}$. Note that $E_{i+1} \subset E_i$ for every $i$. We can say that the objective of this section is to show $\mathbb{P}(E_{KT S}) \leq 1/2$. \par

\begin{proposition}
\label{prop: second_order_stationary_probability}
Suppose that Assumptions \ref{assump: heterogeneous}, \ref{assump: local_loss_grad_lipschitzness}, \ref{assump: optimal_sol}, \ref{assump: local_loss_hessian_lipschitzness} and \ref{assump: bounded_loss_gradient} hold. Under $K = O(L/\zeta \wedge b \wedge Pb/T)$, if we appropriately choose $\eta = \widetilde \Theta(1/L \wedge 1/(K\zeta) \wedge \sqrt{b/K}/L \wedge \sqrt{Pb}/(\sqrt{KT}L))$ and $r = \widetilde \Theta(\varepsilon)$, then it holds that
\begin{align*}
    \frac{7\eta}{512}\mathbb{E}[\iota_{M}]\varepsilon^2  \leq f(\widetilde x_0) - f(x_*) +   \frac{\eta}{64}\sum_{m=1}^{M-1}\mathbb{P}(\widetilde x_{\iota_m} \in \mathcal R_3)\varepsilon^2.
\end{align*}
\end{proposition}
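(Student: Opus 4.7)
The plan is to telescope $f(x_{\iota_M})-f(\widetilde x_0)$ along the meta-sequence $\{\iota_m\}_{m=1}^{M}$, lower bound each block decrement $f(x_{\iota_{m+1}})-f(x_{\iota_m})$ using Proposition~\ref{prop: first_order_optimality} or Proposition~\ref{prop: f_diff_decrease} depending on which region $\widetilde x_{\iota_m}$ lies in, and then take expectations to convert block lengths into $\mathbb{E}[\iota_M]$.

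First I would case-split on $\widetilde x_{\iota_m}$. When $\widetilde x_{\iota_m}\in\mathcal R_1$, Proposition~\ref{prop: first_order_optimality} with $I(k_0,t_0,s_0)\leftarrow\iota_m$ and target index $\iota_m+1$, together with $\|\nabla f(\widetilde x_{\iota_m})\|>\varepsilon$ and $r=c_r\varepsilon$ for small $c_r$ (plus the $O(\eta^2 L^2 r^2)$ correction from $\|\nabla f(x_i)-\nabla f(\widetilde x_i)\|^2$), yields $f(x_{\iota_{m+1}})-f(x_{\iota_m})\leq -c_1 \eta\varepsilon^2+R^{(m)}$ for some $c_1>0$. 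When $\widetilde x_{\iota_m}\in\mathcal R_2$, Proposition~\ref{prop: f_diff_decrease} gives $f(x_{\iota_{m+1}})-f(x_{\iota_m})\leq -c_{\mathcal F}\eta\mathcal J_{\iota_m}c_r^2\varepsilon^2+R^{(m)}$ on a ``success'' event of probability at least $1/2-9q/2$; on the complement the almost-sure bound~(\ref{ineq: almost_sure_object_bound}) contributes at most $O(\eta K^2T^2\check S(G^2+r^2))$, which is made negligible by taking $q$ exponentially small (as is allowed because the final log factors depend on $1/q$ only polylogarithmically). When $\widetilde x_{\iota_m}\in\mathcal R_3$ we simply use the single-step Descent Lemma to get $f(x_{\iota_{m+1}})-f(x_{\iota_m})\leq c_3\eta\varepsilon^2+R^{(m)}$. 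Summing yields
\begin{align*}
f(x_{\iota_M})-f(\widetilde x_0)
\leq&\ -c_\star\eta\varepsilon^2\sum_{m=1}^{M-1}(\iota_{m+1}-\iota_m)\mathds{1}[\widetilde x_{\iota_m}\in\mathcal R_1\cup\mathcal R_2,\ \mathrm{success}] \\
&+c_3\eta\varepsilon^2\sum_{m=1}^{M-1}\mathds{1}[\widetilde x_{\iota_m}\in\mathcal R_3]+\mathcal E,
\end{align*}
with $\mathcal E:=\sum_m R^{(m)}$, after using $(\iota_{m+1}-\iota_m)=\mathcal J_{\iota_m}$ in the $\mathcal R_2$ case so that both regions scale uniformly in $\iota_{m+1}-\iota_m$.

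The key technical step is to show $\mathcal E\leq 0$. Each $R^{(m)}$ contributes a negative term $-\frac{1}{4\eta}\sum\|x_{i+1}-x_i\|^2$ across the current block and positive terms $\frac{c_\eta}{\eta}\frac{\Delta\wedge K}{K}\sum\|x_{i+1}-x_i\|^2$ and $\frac{c_\eta}{\eta}\frac{\Delta\wedge KT}{KT}\sum\|x_{i+1}-x_i\|^2$ referencing the current $K$- and $KT$-windows of prior steps. For any fixed index $i$, the number of later blocks $m$ whose positive remainder references $\|x_{i+1}-x_i\|^2$ through the $K$-window is at most $K$, each with weight $\leq\frac{c_\eta}{\eta K}$, and analogously at most $KT$ through the $KT$-window with weight $\leq\frac{c_\eta}{\eta KT}$, totalling at most $\frac{2c_\eta}{\eta}\|x_{i+1}-x_i\|^2$; this is absorbed by the $-\frac{1}{4\eta}\|x_{i+1}-x_i\|^2$ arising from the block containing $i$ once $c_\eta\leq 1/16$. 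Finally, taking expectations, using $f(x_{\iota_M})\geq f(x_*)$ and $\mathbb{E}[\iota_M]=\sum_m\mathbb{E}[\iota_{m+1}-\iota_m]$, together with the $1/2$ success probability in $\mathcal R_2$ (which halves the decrement in expectation), yields the claimed inequality with the constants $7/512$ and $1/64$ after tuning $c_{\mathcal F}, c_r$. The main obstacle is the second-paragraph bookkeeping of the remainders across blocks, which is essentially a Lyapunov potential argument tracking past squared-step sums; a secondary subtlety is the failure branch of Proposition~\ref{prop: f_diff_decrease}, handled by (\ref{ineq: almost_sure_object_bound}) under an exponentially small $q$.
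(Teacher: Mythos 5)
Your overall structure follows the paper closely, but the key remainder-absorption step has a genuine gap. You claim that ``Each $R^{(m)}$ contributes a negative term $-\frac{1}{4\eta}\sum\|x_{i+1}-x_i\|^2$ across the current block.'' This is true only for the blocks where you invoke Proposition~\ref{prop: first_order_optimality} (the $\mathcal R_1$ and $\mathcal R_3$ cases), because the $R_1$ remainder there contains the term $-\frac{1}{4\eta}\sum_{i}\|x_{i+1}-x_i\|^2$. However, for the $\mathcal R_2$ blocks you invoke Proposition~\ref{prop: f_diff_decrease}, whose remainder $R_2$ consists \emph{only} of the positive backward-looking terms and has no negative squared-step sum at all. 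Those $\mathcal R_2$ blocks are precisely the long ones (of length $\mathcal J_{\iota_m}$), so the indices inside them are left with no absorbing negative contribution, while later blocks' positive remainders can still reference those indices through the $K$- and $KT$-windows. This breaks the direct telescoping argument that $\mathcal E\leq 0$.

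The paper avoids this by running the Descent Lemma (Proposition~\ref{prop: first_order_optimality}) on \emph{every} block unconditionally as a \emph{second}, parallel bound on $\mathbb{E}[f(x_{\iota_{m+1}})-f(x_{\iota_m})]$ — independent of which region $\widetilde x_{\iota_m}$ lies in. This second bound supplies $-\frac{1}{16\eta}\mathbb{E}[\sum_{i=\iota_m}^{\iota_{m+1}-1}\|x_{i+1}-x_i\|^2]$ for all blocks including the $\mathcal R_2$ ones. The paper then averages the two summed bounds (region-dependent decrements from the first, absorbing squared-step sums from the second), and the coefficient-matching argument (for each fixed $i$, the backward remainder weight is at most $\frac{2c_\eta}{\eta}$, dominated by $\frac{1}{4\cdot 16\eta}$ once $c_\eta$ is small) makes the remainders cancel. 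Your argument would work if supplemented by this second pass. As a minor additional note, your per-block weight bound ``$\leq c_\eta/(\eta K)$'' is incorrect (a block of length $K$ contributes weight $c_\eta/\eta$ by itself through the $K$-window); what holds is that the \emph{total} weight of all blocks in a window sums to $\leq 2c_\eta/\eta$, via $\sum_{m\in\bm m_1\setminus\{\max\}}(\iota_{m+1}-\iota_m)\leq K$ and similarly for the $KT$-window, which is what the paper actually proves.
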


\subsection*{Proof of Proposition \ref{prop: second_order_stationary_probability}}
First, we consider the difference $\mathbb{E}[f(x_{\iota_{m+1}}) - f(x_{\iota_{m}})]$.
\subsection*{Bounding $\mathbb{E}[f(x_{\iota_{m+1}}) - f(x_{\iota_{m}})|\widetilde x_{\iota_m} \in \mathcal R_1]$}
Let $H_1$ be the event where (\ref{ineq: f_diff}) with $I(k_0, t_0, s_0)\leftarrow \iota_m$ and $I(k, t, s) \leftarrow \iota_{m+1}$ holds. Note that $\mathbb P(H_1|\widetilde x_{\iota_m} \in \mathcal R_1) \geq 1 - 3 q $. From Proposition \ref{prop: f_diff_decrease} and (\ref{ineq: almost_sure_object_bound}), we have for every $ q  \in (0, 1/6)$, 
\begin{align*}
     &\mathbb{E}[f(x_{\iota_{m+1}}) - f(x_{\iota_m}) | \widetilde x_{\iota_m} \in \mathcal R_1] \\
     =&\ \mathbb{E}[f(x_{\iota_{m+1}}) - f(x_{\iota_m}) | \widetilde x_{\iota_m} \in \mathcal R_1, H_1] \mathbb P(H_1|\widetilde x_{\iota_m} \in \mathcal R_1) \\ 
     &+ \mathbb{E}[f(x_{\iota_{m+1}}) - f(x_{\iota_m}) | \widetilde x_{\iota_m} \in \mathcal R_1, H_1^\complement] \mathbb P(H^\complement|\widetilde x_{\iota_m} \in \mathcal R_1) \\
     \leq&\  - (1-3 q )\frac{\eta }{2}\mathbb{E}\|\nabla f(x_{\iota_m})\|^2|\widetilde x_{I(k, t, s)} \in \mathcal R_1, H_1]+ \eta r^2 \\
     &+ \frac{c_\eta}{\eta}\mathbb{E}\left[\frac{(\iota_{m+1}-\iota_m) \wedge K }{K} \sum_{i=I(0, t(\iota_m), s(\iota_m)}^{\iota_m-1} \|x_{i+1} - x_i\|^2 \right. \\
     &+ \left. \frac{(\iota_{m+1}-\iota_m) \wedge KT}{KT}\sum_{i=I(0, 0, s(\iota_m))}^{I(0, t(\iota_m), s(\iota_m))-1}\|x_{i+1} - x_{i}\|^2|\widetilde x_{\iota_m} \in \mathcal R_1, H_1\right]\mathbb P(H_1|\widetilde x_{\iota_m} \in \mathcal R_1) \\
     &+ 3 q  \times 36 \eta K^2T^2S(G^2 + r^2) \\
     \leq&\ - \frac{\eta}{8}\varepsilon^2 + 2\eta r^2 \\
     &+ \frac{c_\eta}{\eta}\mathbb{E}\left[\frac{(\iota_{m+1}-\iota_m) \wedge K }{K} \sum_{i=I(0, t(\iota_m), s(\iota_m)}^{\iota_m-1} \|x_{i+1} - x_i\|^2 \right. \\
     &+ \left. \frac{(\iota_{m+1}-\iota_m) \wedge KT}{KT}\sum_{i=I(0, 0, s(\iota_m))}^{I(0, t(\iota_m), s(\iota_m))-1}\|x_{i+1} - x_{i}\|^2|\widetilde x_{\iota_m} \in \mathcal R_1, H_1\right]\mathbb P(H_1|\widetilde x_{\iota_m} \in \mathcal R_1) \\
     &+ 3 q \times (36 \eta K^2T^2S(G^2 + r^2) \\
     \leq&\ - \frac{\eta}{8}\varepsilon^2 + 2\eta r^2 \\
     &+ \frac{c_\eta}{\eta}\mathbb{E}\left[\frac{(\iota_{m+1}-\iota_m) \wedge K }{K} \sum_{i=I(0, t(\iota_m), s(\iota_m)}^{\iota_m-1} \|x_{i+1} - x_i\|^2 \right. \\
     &+ \left. \frac{(\iota_{m+1}-\iota_m) \wedge KT}{KT}\sum_{i=I(0, 0, s(\iota_m))}^{I(0, t(\iota_m), s(\iota_m))-1}\|x_{i+1} - x_{i}\|^2|\widetilde x_{\iota_m} \in \mathcal R_1\right] \\
     &+ 3 q  \times 36 \eta K^2T^2S(G^2 + r^2) .
\end{align*}
For the second inequality, we used $1/(1 - 3 q ) \leq 1/2 $ and $-\|\nabla f(x_{I(k, t, s)})\|^2 \leq -(1/2)\|\nabla f(\widetilde x_{I(k, t, s)})\|^2 + \|\nabla f(x_{I(k, t, s)}) - f(\widetilde x_{I(k, t, s)})\|^2 \leq -(1/2)\|\nabla f(\widetilde x_{I(k, t, s)})\|^2 + \eta^2L^2r^2 \leq -(1/2)\|\nabla f(\widetilde x_{I(k, t, s)})\|^2 + r^2$ since $\eta \leq 1/L$.  \par
Thus, setting $ q  := (\eta \varepsilon^2/16)/(96 K^2T^2S(G^2+\eta r^2))$ and  {\color{blue}$c_r \leq 1/\sqrt{96}$}, we get
\begin{align}
     &\mathbb{E}[f(x_{\iota_{m+1}}) - f(x_{\iota_m}) | \widetilde x_{\iota_m} \in \mathcal R_1]  \notag \\
     \leq&\  - \frac{\eta}{16}\varepsilon^2 + 3\eta r^2 \notag\\
     &+ \frac{c_\eta}{\eta}\mathbb{E}\left[\frac{(\iota_{m+1}-\iota_m) \wedge K }{K} \sum_{i=I(0, t(\iota_m), s(\iota_m)}^{\iota_m-1} \|x_{i+1} - x_i\|^2 \right. \notag \\
     &+ \left. \frac{(\iota_{m+1}-\iota_m) \wedge KT}{KT}\sum_{i=I(0, 0, s(\iota_m))}^{I(0, t(\iota_m), s(\iota_m))-1}\|x_{i+1} - x_{i}\|^2|\widetilde x_{\iota_m} \in \mathcal R_1\right] \notag \\
     \leq&\ - \frac{\eta}{32} \mathbb{E}[\iota_{m+1} - \iota_m| \widetilde x_{i_m} \in \mathcal{R}_1]\varepsilon^2 \notag \\
     &+  \frac{c_\eta}{\eta}\mathbb{E}\left[\frac{(\iota_{m+1}-\iota_m) \wedge K }{K} \sum_{i=I(0, t(\iota_m), s(\iota_m)}^{\iota_m-1} \|x_{i+1} - x_i\|^2 \right. \notag \\
     &+ \left. \frac{(\iota_{m+1}-\iota_m) \wedge KT}{KT}\sum_{i=I(0, 0, s(\iota_m))}^{I(0, t(\iota_m), s(\iota_m))-1}\|x_{i+1} - x_{i}\|^2|\widetilde x_{\iota_m} \in \mathcal R_1\right] \label{ineq: f_diff_x_in_r1}.
\end{align}
Here, we used $\mathbb{E}[\iota_{m+1} - \iota_m|\widetilde x_{i_m} \in \mathcal R_1] = 1$.

\subsection*{Bounding $\mathbb{E}[f(x_{\iota_{m+1}}) - f(x_{\iota_{m}})|\widetilde x_{\iota_m} \in \mathcal R_2]$}
$H_2$ denotes the event where (\ref{ineq: f_diff_escape_saddle}) with $I(k_0, t_0, s_0) \leftarrow \iota_m$ holds. Note that $\mathbb{P}(H_2 | \widetilde x_{\iota_m} \in \mathcal R_2) \geq 1/2 - 7 q /2$ by Proposition \ref{prop: f_diff_decrease}. Let $ q  \in (0, 1/14)$ and {\color{blue}with $c_\mathcal F \geq 16$}. We will use Proposition \ref{prop: f_diff_decrease}, (\ref{ineq: f_diff_always_bound}) and (\ref{ineq: almost_sure_object_bound}).
\begin{align*}
    &\mathbb{E}[f(x_{\iota_{m+1}}) - f(x_{\iota_m})| \widetilde x_{\iota_m} \in \mathcal R_2] \\
    =&\ \mathbb{E}[f(x_{\iota_{m+1}}) - f(x_{\iota_m})| \widetilde x_{\iota_m} \in \mathcal R_2, H_2]\mathbb{P}(H_2 |\widetilde x_{\iota_m} \in \mathcal R_2) \\
    &+ \mathbb{E}[f(x_{\iota_{m+1}}) - f(x_{\iota_m})| \widetilde x_{\iota_m} \in \mathcal R_2, H_1, H_2^\complement]P(H_1, H_2^\complement|\widetilde x_{\iota_m} \in \mathcal R_2) \\
    &+ \mathbb{E}[f(x_{\iota_{m+1}}) - f(x_{\iota_m})| \widetilde x_{\iota_m} \in \mathcal R_2, H_1^\complement, H_2^\complement]P(H_1^\complement, H_2^\complement|\widetilde x_{\iota_m} \in \mathcal R_2). 
\end{align*}
The first term can be bouded as 
\begin{align*}
    & \mathbb{E}[f(x_{\iota_{m+1}}) - f(x_{\iota_m})| \widetilde x_{\iota_m} \in \mathcal R_2, H_2]\mathbb{P}(H_2 |\widetilde x_{\iota_m} \in \mathcal R_2) \\
    \leq&\ \Biggl\{ -\mathbb{E}[
    \mathcal F_{\iota_m}|\widetilde x_{\iota_m} \in \mathcal R_2, H_2]  \\
    &+  \frac{2c_\eta}{\eta}\mathbb{E}\left[\frac{(\iota_{m+1}-\iota_m) \wedge K }{K} \sum_{i=I(0, t(\iota_m), s(\iota_m)}^{\iota_m-1} \|x_{i+1} - x_i\|^2  \right. \\
     &+ \left. \frac{(\iota_{m+1}-\iota_m) \wedge KT}{KT}\sum_{i=I(0, 0, s(\iota_m))}^{I(0, t(\iota_m), s(\iota_m))-1}\|x_{i+1} - x_{i}\|^2|\widetilde x_{\iota_m} \in \mathcal R_2, H_2\right]\Biggr\}\mathbb{P}(H_2 | x_{\iota_m} \in \mathcal R_2).    
\end{align*}
Here, the inequality holds from Proposition \ref{prop: f_diff_decrease}. 
The second term can be bounded as
\begin{align*}
    &  \mathbb{E}[f(x_{\iota_{m+1}}) - f(x_{\iota_m})| \widetilde x_{\iota_m} \in \mathcal R_2, H_1, H_2^\complement]P(H_1, H_2^\complement|\widetilde x_{\iota_m} \in \mathcal R_2) \\
    \leq&\ \Biggl\{\frac{2}{c_\mathcal F} \mathbb{E}[\mathcal F_{\iota_m}| \widetilde x_{\iota_m} \in \mathcal R_2, H_1, H_2^\complement] \\
    & + \frac{c_\eta}{\eta}\mathbb{E}\left[\frac{(\iota_{m+1}-\iota_m) \wedge K }{K} \sum_{i=I(0, t(\iota_m), s(\iota_m)}^{\iota_m-1} \|x_{i+1} - x_i\|^2  \right. \\
     &+ \left. \frac{(\iota_{m+1}-\iota_m) \wedge KT}{KT}\sum_{i=I(0, 0, s(\iota_m))}^{I(0, t(\iota_m), s(\iota_m))-1}\|x_{i+1} - x_{i}\|^2|\widetilde x_{\iota_m} \in \mathcal R_2, H_1,  H_2^\complement\right]\Biggr\}\mathbb{P}(H_1, H_2^\complement | x_{\iota_m} \in \mathcal R_2).
\end{align*}
Here, we used (\ref{ineq: f_diff_always_bound}).

Finally, the last term can be bounded as
\begin{align*}
    &\mathbb{E}[f(x_{\iota_{m+1}}) - f(x_{\iota_m})| \widetilde x_{\iota_m} \in \mathcal R_2, H_1^\complement, H_2^\complement]P(H_1^\complement, H_2^\complement|\widetilde x_{\iota_m} \in \mathcal R_2) \\
    \leq&\ 36 \eta K^2T^2S(G^2 + r^2)\mathbb{P}(H_1^\complement, H_2^\complement | \widetilde x_{\iota_m} \in \mathcal R_2).
\end{align*}
From these bounds, we have
\begin{align*}
    &\mathbb{E}[f(x_{\iota_{m+1}}) - f(x_{\iota_m})| \widetilde x_{\iota_m} \in \mathcal R_2] \\
    \leq&\ - \left(\frac{1}{2} - \frac{7 q }{2}  - \frac{2}{c_\mathcal F}\right)\mathbb{E}[
    \mathcal F_{\iota_m}|\widetilde x_{\iota_m} \in \mathcal R_2] \\
    &+\frac{3c_\eta}{\eta}\mathbb{E}\left[\frac{(\iota_{m+1}-\iota_m) \wedge K }{K} \sum_{i=I(0, t(\iota_m), s(\iota_m)}^{\iota_m-1} \|x_{i+1} - x_i\|^2  \right. \\
     &+ \left. \frac{(\iota_{m+1}-\iota_m) \wedge KT}{KT}\sum_{i=I(0, 0, s(\iota_m))}^{I(0, t(\iota_m), s(\iota_m))-1}\|x_{i+1} - x_{i}\|^2|\widetilde x_{\iota_m} \in \mathcal R_2\right] \\
    &+ 3 q  \times 36 \eta K^2T^2S(G^2 + r^2) \\
     \leq&\ - \frac{c_\mathcal F c_r^2\eta}{8}\mathbb{E}[
    \mathcal J_{\iota_m}|\widetilde x_{\iota_m} \in \mathcal R_2]\varepsilon^2 \\
    &+ \frac{3c_\eta}{\eta}\mathbb{E}\left[\frac{(\iota_{m+1}-\iota_m) \wedge K }{K} \sum_{i=I(0, t(\iota_m), s(\iota_m)}^{\iota_m-1} \|x_{i+1} - x_i\|^2  \right. \\
     &+ \left. \frac{(\iota_{m+1}-\iota_m) \wedge KT}{KT}\sum_{i=I(0, 0, s(\iota_m))}^{I(0, t(\iota_m), s(\iota_m))-1}\|x_{i+1} - x_{i}\|^2|\widetilde x_{\iota_m} \in \mathcal R_2\right] \\
    &+ 3 q  \times 36 \eta K^2T^2S(G^2 + r^2)   
\end{align*}
Here, for the first inequality, we used the facts that $\mathcal F_{\iota_m}$ only depends on the start point $\widetilde x_{\iota_m} \in \mathcal R_2$ and does not depend on $H_2$, which only captures the randomness after iteration index $\iota_m$, and $\mathbb{P}(H_2 | \widetilde x_{\iota_m} \in \mathcal R_2, E_{\iota_m}) \geq 1/2 - 7 q /2$. 
For the last inequality, we used  $\mathcal F_{I(k, t, s)} = c_\mathcal F\eta \mathcal J_{I(k, t, s)}r^2$ with $c_\mathcal F \geq 16$ and $r = c_r \varepsilon^2$. \par
Thus, setting $ q  := (c_\mathcal F c_r^2 \eta \varepsilon^2/16)/(96 K^2T^2S(G^2+\eta r^2))$, we get
\begin{align}
    &\mathbb{E}[f(x_{\iota_{m+1}}) - f(x_{\iota_m})| \widetilde x_{\iota_m} \in \mathcal R_2] \notag \\
    \leq&\ -\frac{c_\mathcal F c_r^2 \eta}{8}\mathbb{E}[\mathcal J_{\iota_m}| \widetilde x_{\iota_m} \in \mathcal R_2]\varepsilon^2 \notag \\
    &+ \frac{3c_\eta}{\eta}\mathbb{E}\left[\frac{(\iota_{m+1}-\iota_m) \wedge K }{K} \sum_{i=I(0, t(\iota_m), s(\iota_m)}^{\iota_m-1} \|x_{i+1} - x_i\|^2  \right. \notag\\
     &+ \left. \frac{(\iota_{m+1}-\iota_m) \wedge KT}{KT}\sum_{i=I(0, 0, s(\iota_m))}^{I(0, t(\iota_m), s(\iota_m))-1}\|x_{i+1} - x_{i}\|^2|\widetilde x_{\iota_m} \in \mathcal R_2\right] \notag \\
     &+ \frac{c_\mathcal Fc_r^2 \eta \varepsilon^2}{16} \notag \\
    =&\ -\frac{c_\mathcal F c_r^2 \eta}{16}\mathbb{E}[\iota_{m+1} - \iota_m| \widetilde x_{\iota_m} \in \mathcal R_2]\varepsilon^2 \notag \\
    &+ \frac{3c_\eta}{\eta}\mathbb{E}\left[\frac{(\iota_{m+1}-\iota_m) \wedge K }{K} \sum_{i=I(0, t(\iota_m), s(\iota_m)}^{\iota_m-1} \|x_{i+1} - x_i\|^2  \right. \notag \\
     &+ \left. \frac{(\iota_{m+1}-\iota_m) \wedge KT}{KT}\sum_{i=I(0, 0, s(\iota_m))}^{I(0, t(\iota_m), s(\iota_m))-1}\|x_{i+1} - x_{i}\|^2|\widetilde x_{\iota_m} \in \mathcal R_2\right] \label{ineq: f_diff_x_in_r2}
\end{align}

\subsection*{Bounding $\mathbb{E}[f(x_{\iota_{m+1}}) - f(x_{\iota_{m}})|\widetilde x_{\iota_m} \in \mathcal R_3]$}
Similar to the arguments for bounding $\mathbb{E}[f(x_{\iota_{m+1}}) - f(x_{\iota_{m}})|\widetilde x_{\iota_m} \in \mathcal R_1]$, we have

\begin{align}
     &\mathbb{E}[f(x_{\iota_{m+1}}) - f(x_{\iota_m}) | \widetilde x_{\iota_m} \in \mathcal R_3]  \notag \\
     \leq&\  3\eta r^2 + \frac{c_\eta}{\eta}\mathbb{E}\left[\frac{(\iota_{m+1}-\iota_m) \wedge K }{K} \sum_{i=I(0, t(\iota_m), s(\iota_m)}^{\iota_m-1} \|x_{i+1} - x_i\|^2  \right. \notag\\
     &+ \left. \frac{(\iota_{m+1}-\iota_m) \wedge KT}{KT}\sum_{i=I(0, 0, s(\iota_m))}^{I(0, t(\iota_m), s(\iota_m))-1}\|x_{i+1} - x_{i}\|^2|\widetilde x_{\iota_m} \in \mathcal R_3\right] \notag \\
     =&\  -\left(\frac{\eta}{32} \wedge \frac{c_\mathcal F c_r^2\eta}{16}\right)\mathbb{E}[\iota_{m+1} - \iota_m|\widetilde x_{\iota_m} \in \mathcal R_3] + \frac{\eta}{32} \wedge \frac{c_\mathcal F c_r^2\eta}{16}+ 3\eta r^2 \notag \\
     &+ \frac{c_\eta}{\eta}\mathbb{E}\left[\frac{(\iota_{m+1}-\iota_m) \wedge K }{K} \sum_{i=I(0, t(\iota_m), s(\iota_m)}^{\iota_m-1} \|x_{i+1} - x_i\|^2  \right. \notag \\
     &+  \left. \frac{(\iota_{m+1}-\iota_m) \wedge KT}{KT}\sum_{i=I(0, 0, s(\iota_m))}^{I(0, t(\iota_m), s(\iota_m))-1}\|x_{i+1} - x_{i}\|^2|\widetilde x_{\iota_m} \in \mathcal R_3\right]     
     \label{ineq: f_diff_x_in_r3}.
\end{align}
Here, we used the fact that $\mathbb{E}[\iota_{m+1} - \iota_m | \widetilde x_{i_m}\in \mathcal R_3]\mathbb{P}(\widetilde x_{\iota_m}\in\mathcal R_3) = \mathbb{P}(\widetilde x_{\iota_m}\in\mathcal R_3)$. 

Hence, combining (\ref{ineq: f_diff_x_in_r1}), (\ref{ineq: f_diff_x_in_r2}) and (\ref{ineq: f_diff_x_in_r3}) yields 
\begin{align*}
    &\mathbb{E}[f(x_{\iota_{m+1}}) - f(x_{\iota_m})] \\
    \leq&\ -\left(\frac{\eta}{32}\wedge \frac{c_\mathcal Fc_r^2\eta}{16}\right)\mathbb{E}[\iota_{m+1} - \iota_m]\varepsilon^2 +  \left(\frac{\eta}{32}\wedge \frac{c_\mathcal Fc_r^2\eta}{16} + 3\eta c_r^2\right)\mathbb{P}(\widetilde x_{\iota_m} \in \mathcal R_3)\varepsilon^2 \\
    &+   \frac{3c_\eta}{\eta}\mathbb{E}\left[\frac{(\iota_{m+1}-\iota_m) \wedge K }{K} \sum_{i=I(0, t(\iota_m), s(\iota_m)}^{\iota_m-1} \|x_{i+1} - x_i\|^2  \right. \\
     &+ \left. \frac{(\iota_{m+1}-\iota_m) \wedge KT}{KT}\sum_{i=I(0, 0, s(\iota_m))}^{I(0, t(\iota_m), s(\iota_m))-1}\|x_{i+1} - x_{i}\|^2\right] \\
    \leq&\ -\frac{c_\mathcal Fc_r^2\eta}{16}\mathbb{E}[\iota_{m+1} - \iota_m]\varepsilon^2 + \left(\frac{c_\mathcal Fc_r^2\eta}{16} + 3\eta c_r^2\right)\mathbb{P}(\widetilde x_{\iota_m} \in \mathcal R_3)\varepsilon^2 \\
    &+   \frac{3c_\eta}{\eta}\mathbb{E}\left[\frac{(\iota_{m+1}-\iota_m) \wedge K }{K} \sum_{i=I(0, t(\iota_m), s(\iota_m)}^{\iota_m-1} \|x_{i+1} - x_i\|^2  \right. \\
     &+ \left. \frac{(\iota_{m+1}-\iota_m) \wedge KT}{KT}\sum_{i=I(0, 0, s(\iota_m))}^{I(0, t(\iota_m), s(\iota_m))-1}\|x_{i+1} - x_{i}\|^2\right]  
\end{align*}
{\color{blue}under $c_r \leq 1/\sqrt{2c_\mathcal F}$}. 
Summing this inequality from $m=1$ to $M-1$ results in 
\begin{align}
    &\mathbb{E}[f(x_{\iota_{M}}) - f(x_0)] \notag \\
    \leq&\ -\frac{c_\mathcal Fc_r^2\eta}{16}\mathbb{E}[\iota_{M}]\varepsilon^2 + \left(\frac{c_\mathcal Fc_r^2\eta}{16} + 3\eta c_r^2\right)\sum_{m=1}^{M-1}\mathbb{P}(\widetilde x_{\iota_m} \in \mathcal R_3)\varepsilon^2 \notag \\
    &+  \frac{3c_\eta}{\eta} \mathbb{E}\left[\sum_{m=1}^{M-1}\frac{(\iota_{m+1}-\iota_m) \wedge K }{K} \sum_{i=I(0, t(\iota_m), s(\iota_m)}^{\iota_m-1} \|x_{i+1} - x_i\|^2  \right. \notag \\
     &+ \left. \frac{(\iota_{m+1}-\iota_m) \wedge KT}{KT}\sum_{i=I(0, 0, s(\iota_m))}^{I(0, t(\iota_m), s(\iota_m))-1}\|x_{i+1} - x_{i}\|^2\right]. \label{ineq: f_diff_final_bound1}
\end{align}
Here, we used the definition $\iota_1 = 0$. \par
By the way, from (\ref{ineq: f_diff}) and (\ref{ineq: almost_sure_object_bound}), we can also derive a different bound for $\mathbb{E}[f(x_{\iota_{m+1}}) - f(x_{\iota_m})]$. For every $ q  \in (0, 1/6)$, we have
\begin{align*}
     &\mathbb{E}[f(x_{\iota_{m+1}}) - f(x_{\iota_m})] \\
     =&\ \mathbb{E}[f(x_{\iota_{m+1}}) - f(x_{\iota_m}) |H_1] \mathbb P(H_1) \\ 
     &+ \mathbb{E}[f(x_{\iota_{m+1}}) - f(x_{\iota_m}) |H_1^\complement] \mathbb P(H^\complement) \\
     \leq&\  - (1-3 q )\frac{1}{8\eta}\mathbb{E}\left[\sum_{i=\iota_m}^{\iota_{m+1}-1}\|x_{i+1} - x_{i}\|^2| H_1\right]\mathbb P(H_1)+ 2\eta r^2 \mathbb{E}[\iota_{m+1}-\iota_m| H_1]\mathbb{P}(H_1) \\
     &+ \frac{c_\eta}{\eta}\mathbb{E}\left[\frac{(\iota_{m+1}-\iota_m) \wedge K }{K} \sum_{i=I(0, t(\iota_m), s(\iota_m)}^{\iota_m-1} \|x_{i+1} - x_i\|^2  \right. \\
     &+ \left. \frac{(\iota_{m+1}-\iota_m) \wedge KT}{KT}\sum_{i=I(0, 0, s(\iota_m))}^{I(0, t(\iota_m), s(\iota_m))-1}\|x_{i+1} - x_{i}\|^2|H_1\right]\mathbb P(H_1) \\
     &+ 3 q \times 36 \eta K^2T^2S(G + r^2).
\end{align*}

Observe that
\begin{align*}
    &-\mathbb{E}\left[\sum_{i=\iota_m}^{\iota_{m+1}-1}\|x_{i+1} - x_{i}\|^2| H_1\right]\mathbb P(H_1) \\
    =&\ -\mathbb{E}\left[\sum_{i=\iota_m}^{\iota_{m+1}-1}\|x_{i+1} - x_{i}\|^2\right] + \mathbb{E}\left[\sum_{i=\iota_m}^{\iota_{m+1}-1}\|x_{i+1} - x_{i}\|^2| H_1^\complement\right]P(H_1^\complement) \\
    \leq&\ -\mathbb{E}\left[\sum_{i=\iota_m}^{\iota_{m+1}-1}\|x_{i+1} - x_{i}\|^2\right] + 3 q \times192 \eta^2(KTG^2 + r^2) \\
\end{align*}
Here, for the inequality, we used 
\begin{align*}
    \|x_{i+1} - x_{i}\|^2 \leq&\  3\eta^2 \|v_i - \nabla f(x_i)\|^2 + 3\eta^2\|\nabla f(x_I)\|^2 + 3\eta^2 r^2 \\
    \leq&\ 96\eta^2KTG^2 + 3\eta^2G^2 + 3 \eta^2 r^2 \\
    \leq&\ 192 \eta^2(KTG^2 + r^2)
\end{align*}

Hence, with $ q  := \eta r^2/\{(96 K^2T^2S(G + \eta r^2) + 72\eta (KTG^2 + r^2)(c_\mathcal J/(\eta \sqrt{\rho \varepsilon}))\}$ we get 
\begin{align*}
     &\mathbb{E}[f(x_{\iota_{m+1}}) - f(x_{\iota_m})] \\
     \leq&\  - \frac{1}{16\eta}\mathbb{E}\left[\sum_{i=\iota_m}^{\iota_{m+1}-1}\|x_{i+1} - x_{i}\|^2\right]+ 2\eta r^2\mathbb{E}[\iota_{m+1}-\iota_m] \\
     &+ \frac{c_\eta}{\eta}\mathbb{E}\left[\frac{(\iota_{m+1}-\iota_m) \wedge K }{K} \sum_{i=I(0, t(\iota_m), s(\iota_m)}^{\iota_m-1} \|x_{i+1} - x_i\|^2  \right. \\
     &+ \left. \frac{(\iota_{m+1}-\iota_m) \wedge KT}{KT}\sum_{i=I(0, 0, s(\iota_m))}^{I(0, t(\iota_m), s(\iota_m))-1}\|x_{i+1} - x_{i}\|^2\right] \\
     &+  \eta r^2 \\
     \leq&\ - \frac{1}{16\eta}\mathbb{E}\left[\sum_{i=\iota_m}^{\iota_{m+1}-1}\|x_{i+1} - x_{i}\|^2\right]+  3\eta r^2\mathbb{E}[\iota_{m+1}-\iota_m] \\
     &+ \frac{2c_\eta}{\eta}\mathbb{E}\left[\frac{(\iota_{m+1}-\iota_m) \wedge K }{K} \sum_{i=I(0, t(\iota_m), s(\iota_m)}^{\iota_m-1} \|x_{i+1} - x_i\|^2  \right. \\
     &+ \left. \frac{(\iota_{m+1}-\iota_m) \wedge KT}{KT}\sum_{i=I(0, 0, s(\iota_m))}^{I(0, t(\iota_m), s(\iota_m))-1}\|x_{i+1} - x_{i}\|^2\right].
\end{align*}

Summing this inequality from $m=1$ to $M-1$ gives 
\begin{align*}
     &\mathbb{E}[f(x_{\iota_{M}}) - f(x_{0})] \\
     \leq&\  - \frac{1}{16\eta}\sum_{m=1}^{M-1}\mathbb{E}\left[\sum_{i=\iota_m}^{\iota_{m+1}-1}\|x_{i+1} - x_{i}\|^2\right]+ 3\eta r^2\mathbb{E}[\iota_{M}] \\
     &+ \frac{c_\eta}{\eta}\mathbb{E}\left[\sum_{m=1}^{M-1}\frac{(\iota_{m+1}-\iota_m) \wedge K }{K} \sum_{i=I(0, t(\iota_m), s(\iota_m)}^{\iota_m-1} \|x_{i+1} - x_i\|^2  \right. \\
     &+ \left. \frac{(\iota_{m+1}-\iota_m) \wedge KT}{KT}\sum_{i=I(0, 0, s(\iota_m))}^{I(0, t(\iota_m), s(\iota_m))-1}\|x_{i+1} - x_{i}\|^2\right]
\end{align*}
Combining this inequality with (\ref{ineq: f_diff_final_bound1}), with we obtain

\begin{align*}
    &\mathbb{E}[f(x_{\iota_{M}}) - f(x_0)]  \\
    \leq&\ -\frac{1}{2}\left(\frac{c_\mathcal F c_r^2\eta }{16} - 3 \eta c_r^2\right)\mathbb{E}[\iota_{M}]\varepsilon^2  +  \frac{1}{2}\left(\frac{c_\mathcal F c_r^2\eta }{16} + 3\eta c_r^2\right)\sum_{m=1}^{M-1}\mathbb{P}(\widetilde x_{\iota_m} \in \mathcal R_3)\varepsilon^2  \\
    &- \frac{1}{16\eta} \mathbb{E}\left[\sum_{i=0}^{\iota_{M}-1}\|x_{i+1} - x_{i}\|^2\right]\\
    &+  \frac{2c_\eta}{\eta}\mathbb{E}\left[\sum_{m=1}^{M-1}\left(\frac{(\iota_{m+1}-\iota_m) \wedge K }{K} \sum_{i=I(0, t(\iota_m), s(\iota_m)}^{\iota_m-1} \|x_{i+1} - x_i\|^2  \right. \right.\\
    &+ \left. \left. \frac{(\iota_{m+1}-\iota_m) \wedge KT}{KT}\sum_{i=I(0, 0, s(\iota_m))}^{I(0, t(\iota_m), s(\iota_m))-1}\|x_{i+1} - x_{i}\|^2\right)\right].
\end{align*}

We want to show that
\begin{align*}
     &\sum_{i=0}^{\iota_{M}-1}\|x_{i+1} - x_{i}\|^2 \\
     \geq&\  \frac{1}{4}\sum_{m=1}^{M-1}\left(\frac{(\iota_{m+1}-\iota_m) \wedge K }{K} \sum_{i=I(0, t(\iota_m), s(\iota_m))}^{\iota_m-1} \|x_{i+1} - x_i\|^2  \right. \\
     &+ \left. \frac{(\iota_{m+1}-\iota_m) \wedge KT}{KT}\sum_{i=I(0, 0, s(\iota_m))}^{I(0, t(\iota_m), s(\iota_m))-1}\|x_{i+1} - x_{i}\|^2\right).
\end{align*}
To prove this inequality, we fix $i' \in [\iota_M-1]\cup\{0\}$ and show that the coefficient of $\|x_{i'+1} - x_{i'}\|^2$ of the left hand side is greater than or equal to the one of the right hand side. At first, the coefficient of $\|x_{i'+1} - x_{i'}\|^2$ of the left hand side is trivially $1$. Next we consider the right hand side. Let $s'$ be the natural number that satisfies $I(0, 0, s') \leq i' < I(0, 0, s'+1)$. Also, $t'$ be the natural number that satisfies $I(0, t', s') \leq i' < I(0, t'+1, s')$. We define $\bm{m}_1 := \{m \in \mathbb{N}| I(0, t', s') \leq \iota_m < I(0, t'+1, s')\}$ and $\bm{m}_2 := \{m \in \mathbb{N}| I(0, 0, s') \leq \iota_m < I(0, 0, s'+1)\}$. We can see that the coefficient of $\|x_{i'+1} - x_{i'}\|^2$ in the right hand side is 
\begin{align*}
    &\frac{1}{4}\left( \sum_{m=1}^{M-1}\frac{(\iota_{m+1}-\iota_m) \wedge K }{K} \mathds{1}_{I(0, t(\iota_m), s(\iota_m)) \leq i' \leq \iota_m-1} \right.\\
    &+ \left. \sum_{m=1}^{M-1}\frac{(\iota_{m+1}-\iota_m) \wedge KT }{KT} \mathds{1}_{I(0, 0, s(\iota_m)) \leq i' \leq I(0, t(\iota_m), s(\iota_m))-1} \right) \\
    \leq&\ \frac{1}{4}\left(\sum_{m \in \bm{m}_1}\frac{(\iota_{m+1}-\iota_m) \wedge K }{K} + \sum_{m \in \bm{m}_2}\frac{(\iota_{m+1}-\iota_m) \wedge KT }{KT} \right) \\
    =&\ \frac{1}{4}\left(1 + \sum_{m \in \bm{m}_1 \setminus\{ \mathrm{max}\{\bm{m}_1\}\}}\frac{(\iota_{m+1}-\iota_m) \wedge K }{K} + 1 + \sum_{m \in \bm{m}_2\setminus\{\mathrm{max}\{\bm{m}_2\}\}}\frac{(\iota_{m+1}-\iota_m) \wedge KT }{KT} \right) \\
    \leq&\ 1. 
\end{align*}
Here, for the first inequality we used the facts that (i) $I(0, t(\iota_m), s(\iota_m))\leq i' \leq \iota_m -1)$ implies $m \in \bm{m}_1$ and (ii) $I(0, 0, s(\iota_m) \leq i' \leq I(0, t(\iota_m), s(\iota_m))-1$ implies $m \in \bm{m}_2$. To show (i), note that $\iota_m < I(0, t', s')$ implies $\iota_m - 1< i'$ and $\iota_m \geq I(0, t'+1, s')$ implies $I(0, t(\iota_m), s(\iota_m)) \geq I(0, t'+1, s') > i'$. Similarly, to show (ii), observe that $\iota_m < I(0, 0, s')$ implies $i' > \iota_m > I(0, t(\iota_m), s(\iota_m)) - 1$ and $\iota_m \geq I(0, 0, s'+1)$ implies $i' < I(0, 0, s'+1) \leq I(0, 0, s(\iota_m))$.  For the last inequality we used $\sum_{m \in \bm{m}_1 \setminus\{ \mathrm{max}\{\bm{m}_1\}\}}(\iota_{m+1}-\iota_m) \leq K$ and $\sum_{m \in \bm{m}_2 \setminus\{ \mathrm{max}\{\bm{m}_2\}\}}(\iota_{m+1}-\iota_m) \leq KT$. \par 
We choose {\color{blue}$c_\eta \leq 1/128$}. Then, we obtain
\begin{align*}
    f(x_*) - f(\widetilde x_0) \leq&\ \mathbb{E}[f(x_{\iota_M}) - f(x_0)] + \eta r^2  \\
    \leq&\ -\left(\frac{c_\mathcal Fc_r^2\eta}{32} - 3  c_r^2\eta\right)\mathbb{E}[\iota_{M}]\varepsilon^2  +  (\frac{c_\mathcal Fc_r^2\eta}{32} + 3 c_r^2\eta)\sum_{m=1}^{M-1}\mathbb{P}(\widetilde x_{\iota_m} \in \mathcal R_3)\varepsilon^2.
\end{align*}
Here, for the first inequality, we used $\mathbb{E}[f(x_{\iota_M})] \geq f(x_*)$ and $\mathbb{E}[f(x_0)] \leq f(\widetilde x_0) + \langle \nabla f(\widetilde x_0), \mathbb{E}[x_0 - \widetilde x_0]\rangle + (L/2)\|x_0 - \widetilde x_0\|^2 =  f(\widetilde x_0) +  \eta^2 L r^2/2 \leq f(\widetilde x_0) + \eta r^2$ by the smoothness of $f$. For the second inequality, we used the above bounds with the definition of $c_\eta$ for $\mathbb{E}[f(x_{\iota_M}) - f(x_0)]$.
This finishes the proof. \qed

\subsection*{Proof of Theorem \ref{thm: main}}

Now, we choose {\color{blue}$S \geq 48(f(x_0) - f(x_*))/(c_r^2\eta   KT \varepsilon^2) = \widetilde \Theta((f(x_0) - f(x_*))/(\eta KT \varepsilon^2)$}. Note that $\mathbb{E}[\iota_M] \geq KTS/8 \geq 6(f(x_0)-f(x_*))/(c_r^2 \eta\varepsilon^2)$. \par
Suppose that $\mathbb{P}(\widetilde x_{\iota_m} \in \mathcal R_3) \leq 3/4$ for every $m \in [M-1]$. Then, since $c_\mathcal F c_r^2 \eta/32 - 3c_r^2\eta - (3/4) \times (c_\mathcal F c_r^2 \eta/32 + 3c_r^2\eta) \geq 1/4 (c_\mathcal F/32 - 21)c_r^2\eta \geq c_r^2\eta/4$ under {\color{blue}$c_\mathcal F \geq 32 \times 22$}, we have
\begin{align*}
    f(x_*) - f(x_0) \leq -\frac{c_r^2 \eta}{4}\mathbb{E}[\iota_{M}]\varepsilon^2 
\end{align*}
and thus 
\begin{align*}
    \mathbb{E}[\iota_{M}] \leq \frac{4(f(x_0) - f(x_*))}{c_r^2\eta \varepsilon^2}
\end{align*}
from Proposition \ref{prop: second_order_stationary_probability}.
This contradicts the previous lower bound of $\mathbb{E}[\iota_M]$. Therefore, we conclude that there exists $m \in [M-1]$ such that $\mathbb{P}(\widetilde x_{i_m} \in \mathcal R_3) > 3/4$. Remember that $E_i$ is the event that $\widetilde x_{i'} \notin \mathcal R_3$ for all $i' \leq i$. This implies $\mathbb{P}(E_{\iota_{M-1}}^\complement) > 3/4 $, and thus $\mathbb{P}(E_{\iota_{M-1}}) \leq 1/4$. 

Finally, we bound $\mathbb{P}(E_{KTS})$. From the definition of $M$, we have $\mathbb{E}[\iota_{M-1}] < KTS/8$. Thus, from Markov's inequality, it holds that $\mathbb{P}(\iota_{M-1} \geq KTS) \leq 1/8$. 

This yields 
\begin{align*}
    \mathbb{P}(E_{KTS}) =&\  \mathbb{P}(E_{KTS}|\iota_{M-1} \geq KTS)\mathbb{P}( \iota_{M-1} \geq KTS) + \mathbb{P}(E_{KTS}|\iota_{M-1} < KTS)\mathbb{P}( \iota_{M-1} < KTS) \\
    \leq&\ 1 \times \frac{1}{8} + \mathbb{P}(E_{\iota_{M-1}}) \\
    \leq&\  1/2. 
\end{align*}

This finishes the proof. \qed
\end{document}